\documentclass{article} 
\usepackage[usenames,dvipsnames]{xcolor}
\usepackage{iclr2022_conference,times} %


\usepackage{amsmath,amsfonts,bm}









\def\eqref#1{equation~\ref{#1}}









\def\1{\bm{1}}
\newcommand{\train}{\mathcal{D}}










\DeclareMathAlphabet{\mathsfit}{\encodingdefault}{\sfdefault}{m}{sl}
\SetMathAlphabet{\mathsfit}{bold}{\encodingdefault}{\sfdefault}{bx}{n}













\DeclareMathOperator*{\argmin}{arg\,min}

\usepackage[colorlinks]{hyperref}
\usepackage{url}
\usepackage{xspace}
\usepackage{subcaption}
\usepackage{float}
\usepackage{amsmath}
\usepackage{amssymb}
\usepackage{bbm}
\usepackage{physics}
\usepackage[english]{babel}
\usepackage{amsthm}
\usepackage{colortbl}
\usepackage{graphicx}
\usepackage[normalem]{ulem}

\theoremstyle{definition}

\newtheorem{theorem}{Theorem}[section]

\newtheorem{corollary}{Corollary}[theorem]

\title{A Piece-wise Polynomial Filtering Approach for Graph Neural Networks}

\author{Vijay Lingam*, Chanakya Ekbote*, Manan Sharma*, \\ Rahul Ragesh, Arun Iyer, Sundararajan Sellamanickam\\
Microsoft Research India\\
\texttt{\{vijaylingam0810, manan2908\}@gmail.com, ca10@iitbbs.ac.in} \\

}

%

\iclrfinalcopy 
\begin{document}

\newcommand{\graph}{\mathcal{G}}
\newcommand{\adj}{\mathbf{A}}
\newcommand{\verts}{\mathcal{V}}
\newcommand{\edges}{\mathcal{E}}
\newcommand{\dadj}{\mathbf{D}_{\adj}}
\newcommand{\adji}{\mathbf{A_I}}
\newcommand{\dadji}{\mathbf{D}_{\adji}}
\newcommand{\feat}{\mathbf{X}}
\newcommand{\eye}{\mathbf{I}}
\newcommand{\tadj}{\mathbf{\widetilde{\adj}}}
\newcommand{\eigenU}{\mathbf{U}}
\newcommand{\eigenUc}{\mathbf{u}}
\newcommand{\adjS}{\mathbf{\Lambda}}
\newcommand{\adjSv}{\lambda}
\newcommand{\eigenV}{\mathbf{V}}
\newcommand{\adjU}{\eigenU_{\adji}}
\newcommand{\adjV}{\eigenV_{\adji}}
\newcommand{\featU}{\eigenU_{\feat}}
\newcommand{\featV}{\eigenV_{\feat}}

\newcommand{\gcn}{\textsc{GCN}}
\newcommand{\sgcn}{\textsc{SGCN}}
\newcommand{\geomgcn}{\textsc{Geom-GCN}}
\newcommand{\supergat}{\textsc{SuperGAT}}
\newcommand{\hhgcn}{\textsc{H\textsubscript{2}GCN}}
\newcommand{\fagcn}{\textsc{FAGCN}}
\newcommand{\gprgnn}{\textsc{GPR-GNN}}
\newcommand{\ufg}{\textsc{UFG}}
\newcommand{\mixhop}{\textsc{MixHop}}
\newcommand{\tdgnn}{\textsc{TDGNN}}
\newcommand{\lingcn}{\textsc{LGC}}
\newcommand{\ourmodel}{\textsc{PP-GNN}}
\newcommand{\eigennet}{\textsc{EigenNetwork}}
\newcommand{\eigenconcat}{\textsc{Eigen-ConcatNetwork}}
\newcommand{\eigeneigennet}{\textsc{Eigen-EigenNetwork}}
\newcommand{\tiedeigen}{\textsc{RegEigenNetwork}}
\newcommand{\regeigeneigenet}{\textsc{RegEigen-EigenNetwork}}
\newcommand{\regeigen}{\textsc{RegEigenNetwork}}
\newcommand{\appnp}{\textsc{APPNP}}
\newcommand{\lgcn}{\textsc{LightGCN}}
\newcommand{\hetegcn}{\textsc{HeteGCN}}

\newcommand\arun[1]{\textcolor{red}{AI: #1}}
\newcommand\ms[1]{\textcolor{SeaGreen}{[MS: #1]}}
\newcommand\sundar[1]{\textcolor{Blue}{[SS: #1]}}

\newcommand{\MATLAB}{\textsc{Matlab}\xspace}
\newtheorem*{theorem*}{Theorem}
\newtheorem*{corollary*}{Corollary}

\maketitle

\begin{abstract}
Graph Neural Networks (GNNs) exploit signals from node features and the input graph topology to improve node classification task performance. However, these models tend to perform poorly on heterophilic graphs, where connected nodes have different labels. Recently proposed GNNs work across graphs having varying levels of homophily. Among these, models relying on polynomial graph filters have shown promise. We observe that solutions to these polynomial graph filter models are also solutions to an overdetermined system of equations. It suggests that in some instances, the model needs to learn a reasonably high order polynomial. On investigation, we find the proposed models ineffective at learning such polynomials due to their designs. To mitigate this issue, we perform an eigendecomposition of the graph and propose to learn multiple adaptive polynomial filters acting on different subsets of the spectrum. We theoretically and empirically show that our proposed model learns a better filter, thereby improving classification accuracy. We study various aspects of our proposed model including, dependency on the number of eigencomponents utilized, latent polynomial filters learned, and performance of the individual polynomials on the node classification task. We further show that our model is scalable by evaluating over large graphs. Our model achieves performance gains of up to 10\% over the state-of-the-art models and outperforms existing polynomial filter-based approaches in general. 

\end{abstract}


\section{Introduction}
In this work, we are interested in the problem of classifying nodes in a graph. In this problem, a graph and features for all the nodes in the graph are made available. We are also given labels for a subset of the nodes in the graph. The learning model utilizes this information to predict labels for the remaining nodes. Graph Neural Networks (GNNs) perform well on such problems~\citep{gcn}. GNNs predict a node's label by aggregating information from its neighbours. Thus, the performance of GNN is dependent on how well the given graph correlates with the node labels. Characterizing the correlation between the graph and node labels is an active area of research. Several metrics have been proposed including edge homophily~\citep{mixhop, h2gcn}, node homophily~\citep{geomgcn}, class homophily~\citep{newbenchmark}. All these metrics show that GNNs perform well when the graphs and node labels are positively correlated. For example, in the simplest case, GNNs work well when the node and its neighbours share similar labels. However, the performance can be poor if this criterion is not satisfied.

Several proposed GNN models attempt to be robust to the underlying correlation between graphs and labels. Some approaches modify the aggregation mechanism~\citep{geomgcn, h2gcn, supergat}, while other approaches propose to estimate and leverage the label-label compatibility matrix as a prior~\citep{cpgnn}. More recent approaches have tackled this problem from a filter learning perspective~\citep{fagcn,gprgnn}. These methods learn a filter function that operates on the eigenvalues of the graph. With eigenvalues having frequency interpretations~\citep{shumangsp}, the filter function selectively accentuates and suppresses various frequencies as required by the task. This process enables the model to learn better embeddings, which translates to improved performance. Our interest lies in the area of modelling GNNs that can learn an effective filter.

\citet{gprgnn} proposed to learn a polynomial filter. This model achieves good performance gains on several datasets with varying correlations between graphs and labels. However, we observe that learning a polynomial of a degree lower than the number of nodes can be seen as solving an over-determined system of equations. Our observation suggests needing a higher-order polynomial to model richer or more complex frequency responses for certain datasets. The design of the model proposed in~\cite{gprgnn} makes it ineffective in learning a higher-order polynomial. It happens because the model attempts to overcome over-smoothing by giving smaller coefficient values to larger powers. In this paper, we address this issue and make the following contributions:
\begin{itemize}
    \setlength\itemsep{0.1cm}
    \item Inspired by \citet{gprgnn}, we propose to learn a polynomial filter; however, we do so by learning a sum of polynomials over different subsets of eigenvalues. Such modelling enables learning higher-order polynomials with several low order polynomials acting over the different subsets.
    \item Such modelling, however, requires an eigendecomposition of the graph, which can be expensive. We leverage the model in~\citet{gprgnn} and the fact that there exist efficient algorithms for computing top and bottom eigen components to propose a practically efficient model.
    \item We present theoretical analysis that suggests that our sum of polynomials approach can approximate a latent optimal filter better than a single polynomial. We also show that the space of learnable filters and graphs using our approach is larger than what is possible with~\citet{gprgnn}.
    \item Our experimental results show that our model performs better than state-of-the-art methods on several datasets, giving up to 10\% gains. These gains indicate that modelling the filter as a sum of polynomials is needed to approximate the latent filter better. We also answer other research questions through ablative studies, including the importance of the top and bottom eigenvalues, the number of top and bottom eigenvalues needed, the number of polynomials needed, and the trade-offs involved.
\end{itemize}

The rest of the paper is organized as follows: We present related work in Section~\ref{sec:relatedwork}, problem setup and motivation for the approach in Section~\ref{sec:preliminaries}, details of the proposed approach with complexity and theoretical analysis in Section~\ref{sec:proposedapproach} and finally experimental results and empirical analysis in Section~\ref{sec:experiments}.

\section{Related Works}
\label{sec:relatedwork}
In the recent times, Graph Neural Networks (GNNs) have become an increasingly popular method for semi-supervised classification with graphs. \citet{spectralnetwork} set the stage for early GNN models, which was then followed by various modifications~\citep{chebgnn, gcn, graphsage, gat}. Incorporating random walk information~\citep{ngcn, mixhop, deeperinsights} gave further improvements in these models, but they still suffer from over smoothing. Several proposed models attempt to overcome this problem~\citep{appnp, breadthwisebackprop, grand, itergnn}.

Another line of research explored the question of which graphs worked well with GNNs. The critical understanding was that the performance of GNN was dependent on the correlation of the graphs with the node labels. Several approaches \citep{mixhop, h2gcn, tdgnn} considered edge homophily and proposed a robust GNN model by aggregating information from several higher-order hops. \citet{supergat} also considered edge homophily and mitigated the issue by learning robust attention models. \citet{geomgcn} talks about node homophily and proposes to aggregate information from neighbours in the graph and neighbours inferred from the latent space. \citet{cpgnn} proposes to estimate label-label compatibility matrix and uses it as a prior to update posterior belief on the labels. However, these approaches still rely on the given graphs for information like the hop neighbours etc., which might be poor approximations of the desired neighbours with similar labels.

Some of the recent approaches focused on learning filter functions that operate on the eigenvalues of the graph. Learning these filter functions can be viewed as directly adapting the graph for the desired task. \citet{fagcn} models the filter function as an attention mechanism on the edges, which learns the difference in the proportion of low-pass and high-pass frequency signals. \citet{gprgnn} proposes a polynomial filter on the eigenvalues that directly adapts the graph for the desired task. \citet{ufgconv} decompose the graph into low-pass and high-pass frequencies, and define a framelet based convolutional model. Our work is closely related to these lines of exploration. In this work, we propose to learn a filter function as a sum of polynomials over different subsets of the eigenvalues, enabling design of effective filters to model task-specific complex frequency responses with compute trade-offs.  


\section{Problem Setup and Motivation}
\label{sec:preliminaries}
We focus on the problem of semi-supervised node classification on a simple graph $\graph = (\verts, \edges)$, where $\verts$ is the set of vertices and $\edges$ is the set of edges. Let $\adj \in \{0, 1\}^{n \times n}$ be the adjacency matrix associated with $\graph$, where $n = |\verts|$ is the number of nodes. Let $\mathcal{Y}$ be the set of all possible class labels. Let $\feat \in \mathbb{R}^{n \times d}$ be the $d$-dimensional feature matrix for all the nodes in the graph. Given a training set of nodes $\train \subset \verts$ whose labels are known, along with $\adj$ and $\feat$, our goal is to predict the labels of the remaining nodes. Let $\adji = \adj + \eye$ where $\eye$ is the identity matrix. Let $\dadji$ be the degree matrix of $\adji$ and $\tadj = \dadji^{-1/2}\adji\dadji^{-1/2}$. Let $\tadj = \eigenU\adjS\eigenU^T$ be the eigendecomposition of $\tadj$. Then, the spectral convolution of $\feat$ on the graph $\adj$ can be defined via the reference operator $\tadj$ and a filter function $h$ operating on the eigenvalues, in the Fourier domain~\citep{tremblay2017design, gprgnn} as,
\begin{gather}
    \label{eq:spec_conv}
    Z = \eigenU H(\adjS)\eigenU^T X = \eigenU\textrm{diag}(V\boldsymbol{\alpha})\eigenU^T X =  \sum_{j=1}^k \alpha_j \tadj^j X
\end{gather}
where $H(\adjS) = \textrm{diag}(h(\adjS))$. In this process, the filter function is essentially adapting the graph for the desired task at hand. The second equality follows from using a polynomial filter, $H(\adjSv) = \sum_{i=1}^k \alpha_i \adjSv^i$ where $\alpha_i$'s are coefficients of the polynomial, $k$ is the order of the polynomial and $\adjSv$ is any eigenvalue from $\adjS$. Note that $H(\adjS) = \textrm{diag}(V\boldsymbol{\alpha})$ where $V \in \mathbb{R}^{n \times k}$ is a Vandermonde matrix constructed using eigenvalues from $\adjS$ and $\boldsymbol{\alpha} \in \mathbb{R}^{k \times 1}$ is the vector consisting of $\alpha_j$'s. The last equality follows from the eigendecomposition properties, enabling avoidance of eigendecomposition. It is well-known that polynomial filters can approximate any graph filter \citep{shumangsp, tremblay2017design}. However, several practical challenges arise in learning a good polynomial filter. We briefly discuss them below.  

Let $h^*$ be the optimal filter for some given task. Learning a polynomial filter involves the approximation: $h^* \approx  V\boldsymbol{\alpha}$. Notice that this system is over-determined since the number of unknowns ($k$, the size of $\boldsymbol{\alpha}$) is less than the number of equations ($n$, the number of nodes). To obtain a consistent solution, we may have to work with higher-order polynomials. Working with higher powers poses several practical challenges, and the following observations are in order. First, node features computed via $\tadj^jX$ becomes indistinguishable (see Figure~\ref{fig:indistinguishable} and \ref{subsec:node_feature_indistin}). Since the node features are indistinguishable, the importance of the coefficient associated with $\tadj^jX$ for the task at hand reduces significantly. These observations are in line with Theorem 4.2 in~\citet{gprgnn}, where it states that as over-smoothing happens, the corresponding coefficients of the polynomial go to zero. We observe that with insignificant contributions from higher-order terms, the performance (aka test accuracy) does not improve with the increase in the order of the polynomial (See Figure~\ref{fig:vary_order_cora}, ~\ref{fig:vary_order_chameleon} and \ref{app:vary_order_gprgnn}).   
Next, one can think of addressing this issue with $\tadj^jX$ by directly working with full eigendecomposition. However, this approach is prohibitively expensive for large graphs. Even if we were to do this, $\adjSv_i$'s are in the range of $[-1, 1]$ and thus $\adjSv_i^j$ will diminish with higher powers. Therefore, it will continue to be less effective in learning coefficients of a higher power. Motivated by the above observations, we propose a novel piece-wise polynomial filter learning approach.  
 
 
 \begin{figure}[t!]
    \centering
    \begin{subfigure}[t]{0.3\textwidth}
        \centering
        \includegraphics[height=1.5in, width=0.95\textwidth]{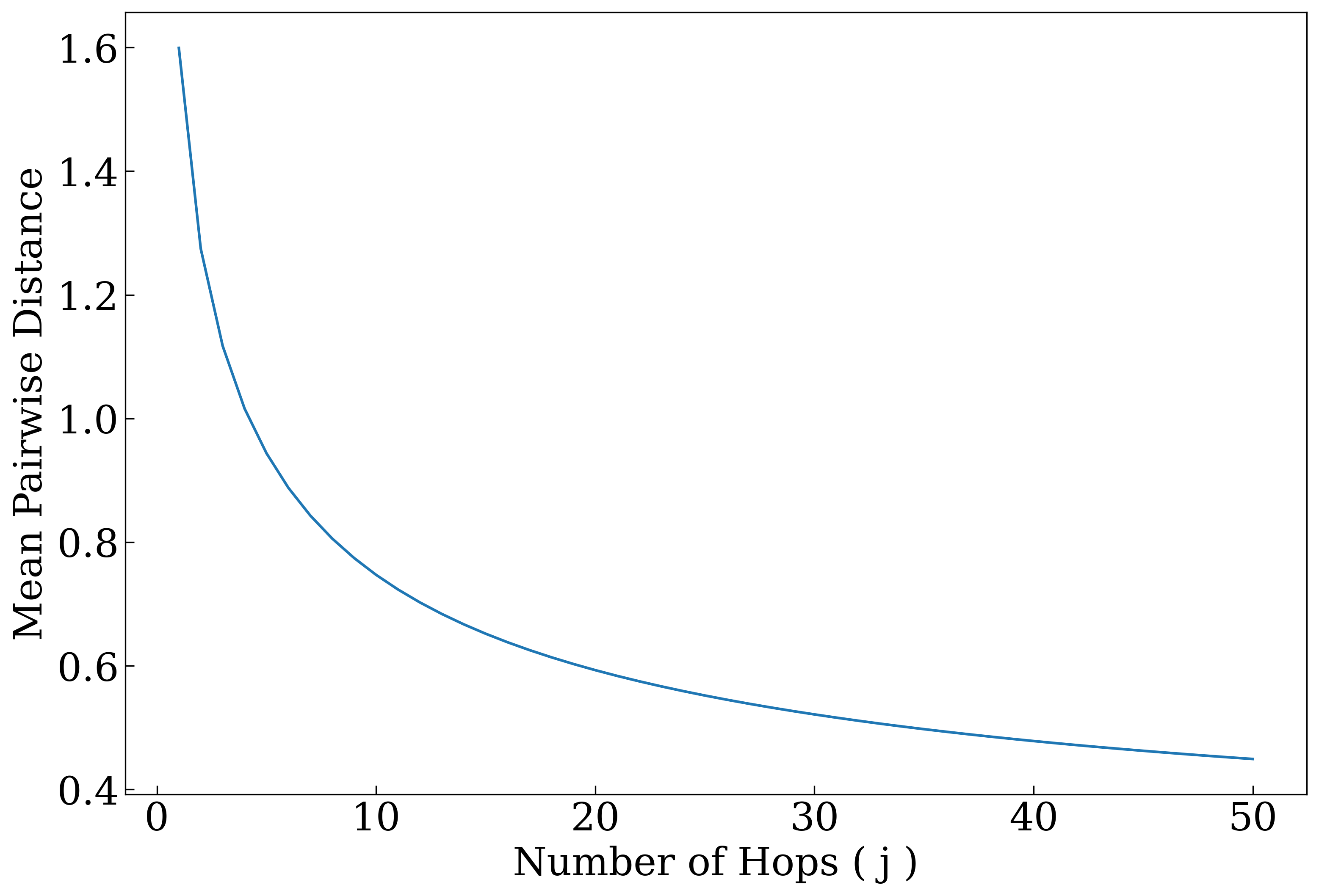}
        \caption{}
        \label{fig:indistinguishable}
    \end{subfigure}%
    ~ 
    \begin{subfigure}[t]{0.3\textwidth}
        \centering
        \includegraphics[height=1.5in, width=0.95\textwidth]{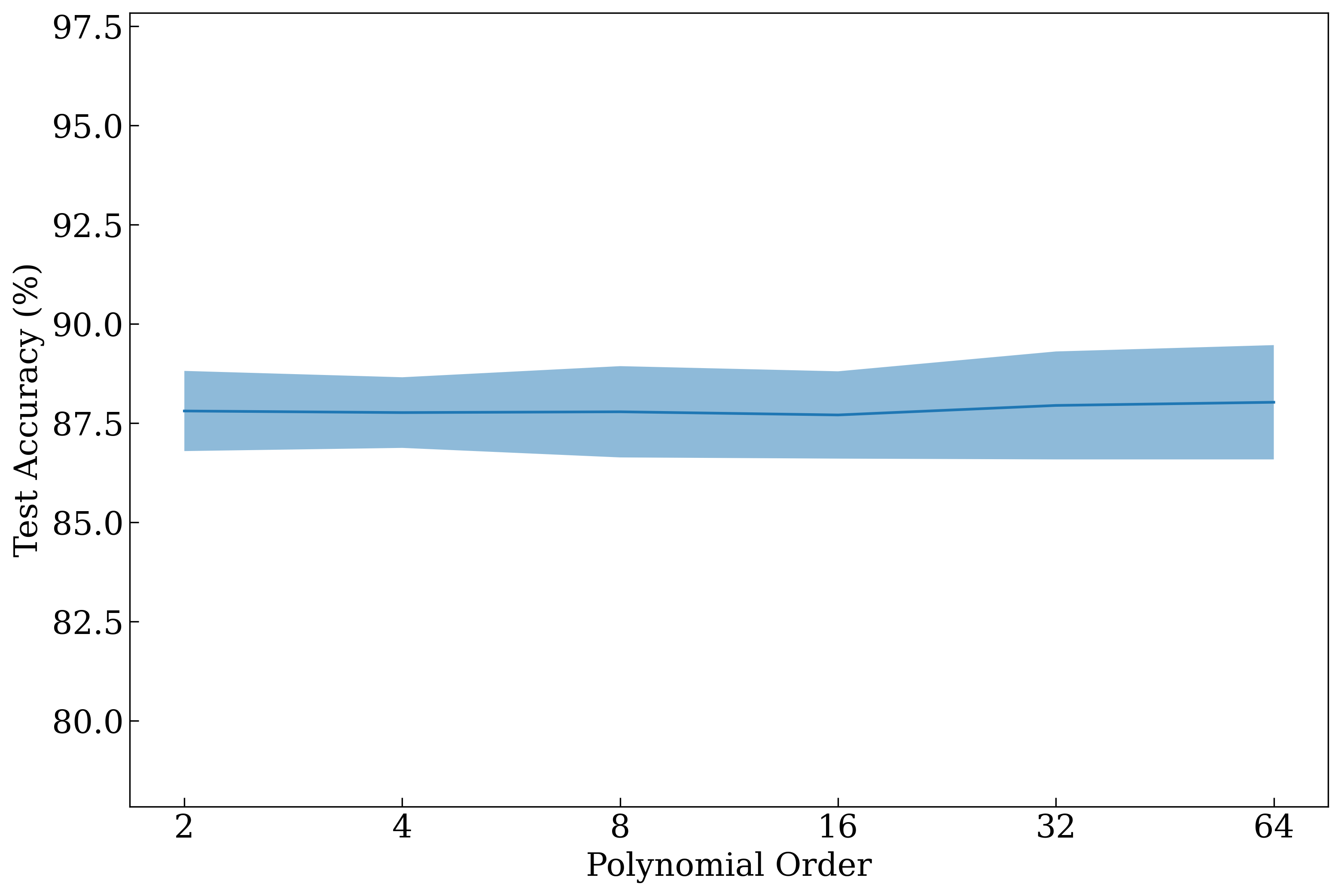}
        \caption{}
        \label{fig:vary_order_cora}
    \end{subfigure}
    ~ 
    \begin{subfigure}[t]{0.3\textwidth}
        \centering
        \includegraphics[height=1.5in, width=0.95\textwidth]{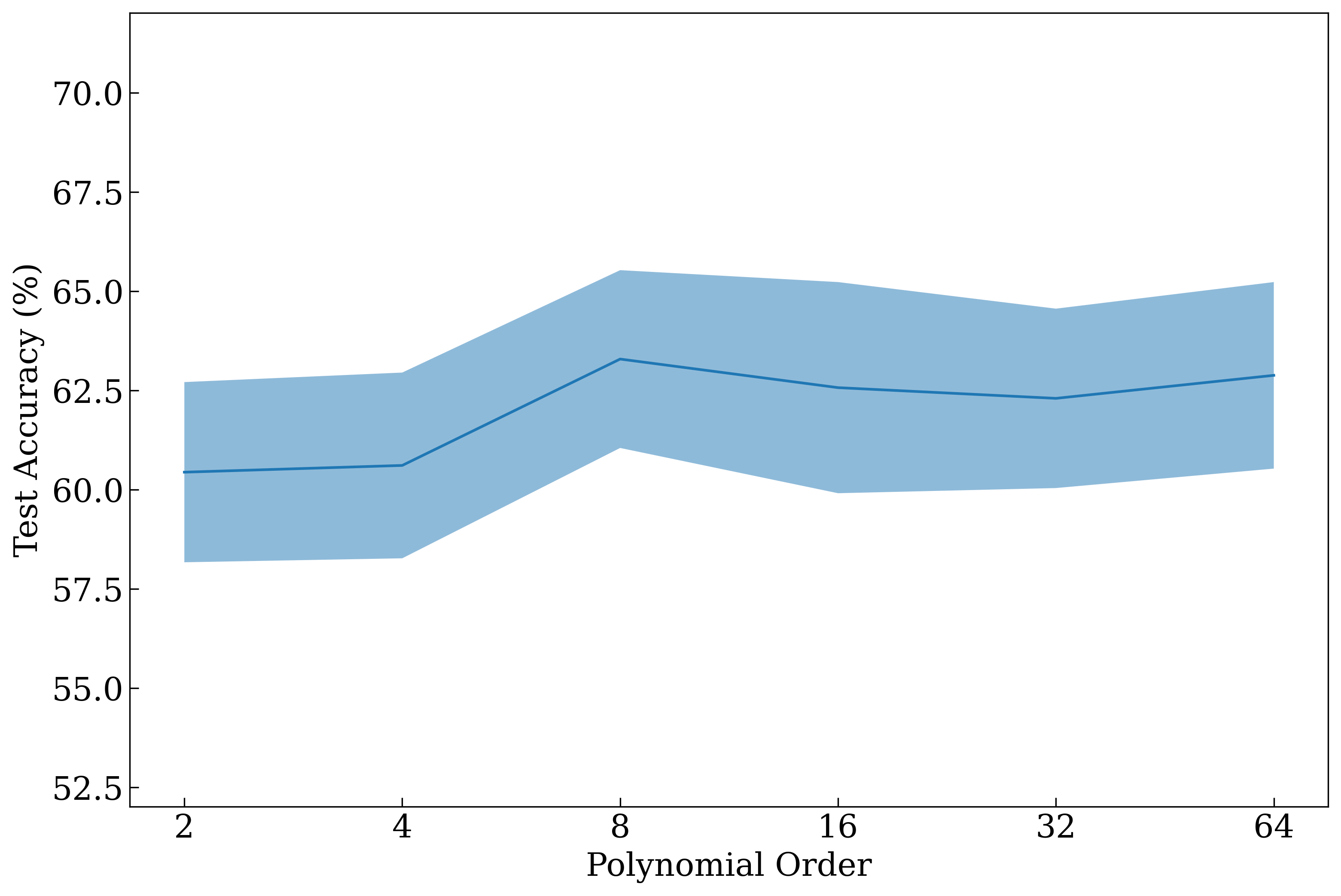}
        \caption{}
        \label{fig:vary_order_chameleon}
    \end{subfigure}
    \caption{In (a), we plot the average of pairwise distances between node features for the Cora dataset, after computing $\tadj^jX$ for increasing $j$ values. X-axis represents the various powers $j$ and the Y-axis represents the average of pairwise distances between node features. In (b) and (c), we plot the test accuracies of the model in \citet{gprgnn} for increasing order of polynomials for Cora and Chameleon dataset respectively. X-axis represents the order of the polynomial and Y-axis represents the test accuracy achieved for that order.}
\end{figure}

\section{Proposed Approach}
\label{sec:proposedapproach}
The central goal of our approach is to get improved task-specific performance by learning effective task-specific graph filters. We propose to learn the polynomial filter as a sum of polynomials operating on different eigenvalue intervals, taking task-specific requirements and practical considerations into account. We show that our proposed filter design can approximate the latent optimal graph filter better than a single polynomial, and the resultant class of learnable filters/graphs is richer. We briefly discuss the training and computational complexity of the proposed model.
\subsection{Piece-wise Polynomial/Spline Graph Filter Function}
 We model the filtering function $h(\lambda)$ as a piece-wise polynomial or spline function where each polynomial is of a lower degree (e.g., a cubic polynomial). We partition the eigen spectrum in $[-1,1]$ (or $[0,2]$ as needed) into contiguous intervals and approximate the desired frequency response by fitting a low degree polynomial in each interval. This process helps us to learn a more complex shaped frequency response as needed for the task.  
Let $\mathcal{S} = \{\sigma_1,\sigma_2,\ldots,\sigma_m\}$ denote a partition set with $m$ contiguous intervals and  $h_{i,k_i}(\lambda;\gamma_i)$ denote a $k_i$-degree polynomial filter function of the interval $\sigma_i$ with polynomial coefficients $\gamma_i$. We define PP-GNN filter function as: 
\begin{equation}
    h(\lambda) = \sum_{\sigma_i \in \mathcal{S}} h_{i,k_i}(\lambda;\gamma_i)
\label{eq:PPFF-GNN}
\end{equation}
and learn a smooth filter function by imposing additional constraints to maintain continuity between polynomials of contiguous intervals at different endpoints (\textit{aka} knots). This class of filter functions is rich, and its complexity is controlled by choosing intervals (i.e., endpoints and number of partitions) and polynomial degrees. Given the filter function, we compute the node embedding matrix as: 
\begin{equation}
    \mathbf{Z} = \sum_{i=1}^m \eigenU_i H_i(\mathbf{\gamma_i}) \eigenU^T_i {\mathbf Z}_0(\mathbf{X;\Theta})
\label{eq:nodeemb}    
\end{equation}
where $\eigenU_i$ is a matrix with eigenvectors corresponding to eigenvalues that lie in $\sigma_i$, $H_i(\gamma_i)$ is the diagonal matrix with diagonals containing the $h_i$ evaluated at the eigenvalues and ${\mathbf Z}_0(\mathbf{X;\Theta})$ is an MLP network with parameters $\mathbf{\Theta}$.

\subsection{Practical and Implementation Considerations}
The filter function (\ref{eq:nodeemb}) requires computing eigendecomposition of $\tadj$ and is expensive, therefore, not scalable for very large graphs. Thus, having finer control to learn complex frequency responses comes with a high computational cost. We address this problem by devising a graph filter function that enables a trade-off between computational cost and frequency response control. First, we make the following observations: 
\begin{enumerate}
    \item Top-k and bottom-k eigenvalues represent $k$ low and high frequencies. There are efficient algorithms and off-the-shelf library packages available to get top-k and bottom-k eigenvalues and eigenvectors of sparse matrices. 
    \item GPR-GNN method also learns a graph filter but operates on the entire eigen spectrum by sharing the filter coefficients across the spectrum. Therefore, it is a special case of our proposed model (\ref{eq:nodeemb}). Since GPR-GNN learns a global polynomial by having shared coefficients across the spectrum (like learning a global polynomial to fit a function), it is not flexible enough to learn complex frequency responses, as needed in several practical applications. Note that increasing the degree of the global polynomial does not help beyond some high degree specific to tasks. We demonstrate this through extensive experiments on benchmark datasets in the experiment and appendix sections. One important advantage of GPR-GNN is that it does not require computation of eigendecomposition; therefore, efficient. 
    \item Many recent works, including GPR-GNN, investigated the problem of designing robust graph neural networks that work well across homophilic and heterophilic graphs. They found that graph filters that amplify or attenuate low and high-frequency components of signals (i.e., low-pass and high-pass filters) are critical to improving performance on several benchmark datasets.
\end{enumerate}
\textbf{Efficient Variant.} Using the above observations, we propose an efficient variant of (\ref{eq:PPFF-GNN}) as: 
\begin{equation}
    {\tilde h}(\lambda) = \eta_l \sum_{\sigma_i \in \mathcal{S}^l} h^{(l)}_{i}(\lambda;\gamma^{(l)}_i) + \eta_h \sum_{\sigma_i \in \mathcal{S}^h} h^{(h)}_{i}(\lambda;\gamma^{(h)}_i) +  \eta_{gpr} h_{gpr}(\lambda;\gamma)
  \label{eq:PPFF-GNN-EFF}
\end{equation}
where $\mathcal{S}^l$ consists of partitions over low frequency components, $\mathcal{S}^h$ consists of partitions over high frequency components, the first and second terms fit piece-wise polynomials\footnote{For brevity, we dropped the polynomial degree dependency.} in low and high frequency regions, as indicated through superscripts. However, we do not want to lose any useful information from other frequencies in the central region, yet maintain efficiency. We achieve this by adding the GPR-GNN filter function, $h_{gpr}(\lambda;\gamma)$, which is computationally efficient. Since $h_{gpr}(\lambda;\gamma)$ is a special case of (\ref{eq:PPFF-GNN}) and the terms in (\ref{eq:PPFF-GNN-EFF}) are additive, it is easy to see that (\ref{eq:PPFF-GNN-EFF}) is same as (\ref{eq:PPFF-GNN}) with a modified set of polynomial coefficients. In practice, the required low and high-frequency components are computed based on affordable computational cost. Furthermore, we can also control the contributions from each term by setting or optimizing over hyperparameters, $\eta_l$, $\eta_h$ and $\eta_{gpr}$. Thus, the proposed model offers richer capability and flexibility to learn complex frequency response and balance computation costs over GPR-GNN.       

\textbf{Model Training.} Like GPR-GNN, we apply $\textsc{Softmax}$ activation function on (\ref{eq:nodeemb}) and use the standard cross-entropy loss function to learn the sets of polynomial coefficients ($\gamma$) and classifier model parameters ($\Theta$) using labeled data. To ensure smoothness of the learned filter functions, we add a regularization term that penalizes squared differences between the function values of polynomials of contiguous intervals at each other's interval end-points. More details can be found in the appendix (\ref{subsec:app_boundary_reg}).

\textbf{Computational Cost.} There is some pre-training cost of computing the eigendecomposition for top and bottom k eigenvalues. Most algorithms for this task utilize Lanczos' iteration, convergence bounds of which depends on the input matrix' spectrum \citep{eigbound1,eigbound2}, which although have superlinear convergence, but are observed to be efficient in practice. We compute node embeddings afresh whenever the model parameters are updated. This computation involves matrix multiplication with eigenmatrices incurring an additional cost (over GPR-GNN) of $O(nkL)$ where $k$ and $L$ denote the number of selected low/high eigenvalues and classes, respectively. 

\subsection{Analysis}
\label{subsec:analysis}
We first give a simple theoretical justification for using multiple polynomials. We upper-bound the approximation error achieved by using multiple polynomials by the error that the single polynomial parameterization achieves. We relegate the proofs of all the theorems of this section to the appendix.

\begin{theorem}
\label{thm:better_approximation}
For any frequency response $h^*$, and an integer $K\in\mathbb{N}$, let $\Tilde{h}:=h+h_f$, with $h_f$ having a continuous support over a subset of the spectrum, $\sigma_f$. Assume that $h$ and $h_f$ are parameterized by independent $K$ and $K'$-order polynomials, $p$ and $p_f$, respectively, with $K'\leq K$. Then there exists $\Tilde{h}$, such that $\min\|\Tilde{h} - h^*\|_2\leq \min\|h-h^*\|_2$, where the minimum is taken over the polynomial parameterizations. Moreover, for multiple polynomial adaptive filters $h_{f_1},h_{f_2},...,h_{f_m}$ parameterized by independent $K'$-degree polynomials with $K'\leq K$ but having disjoint, contiguous supports, the same inequality holds for $\Tilde{h} = h + \sum_{i=1}^m h_{f_i}$.
\end{theorem}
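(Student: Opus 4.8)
The plan is to recognize this as a statement about nested feasible sets: enlarging the family of filters one optimizes over cannot increase the best achievable $L^2$ approximation error, so nothing of substance needs to be computed. First I would fix notation. Write $\mathcal{F}_K$ for the set of filters realizable as a single degree-$K$ polynomial evaluated on the spectrum — equivalently $\{V\boldsymbol{\alpha}:\boldsymbol{\alpha}\in\mathbb{R}^{K}\}$ in the notation of Equation~\ref{eq:spec_conv} — and $\mathcal{G}$ for the set of filters realizable as $\tilde h = h + h_f$, where $h$ ranges over degree-$K$ polynomials on the whole spectrum and $h_f$ ranges over degree-$K'$ polynomials supported on $\sigma_f$ (and identically zero off $\sigma_f$). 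The two quantities in the statement are then $\min_{g\in\mathcal{F}_K}\|g-h^*\|_2$ and $\min_{g\in\mathcal{G}}\|g-h^*\|_2$.

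The key step is the inclusion $\mathcal{F}_K\subseteq\mathcal{G}$. This is where the hypotheses "$K'\le K$" and "\emph{independent} polynomials $p$ and $p_f$" are used: since the coefficient blocks of $p$ and $p_f$ are free and $K'\le K$, the zero function lies in the span of $\{1,\lambda,\dots,\lambda^{K'}\}$, so taking $p_f\equiv 0$ gives $h_f\equiv 0$ and hence $\tilde h = h$ for an arbitrary degree-$K$ polynomial $h$. Thus every filter in $\mathcal{F}_K$ is also in $\mathcal{G}$, obtained by leaving the extra coefficient block at zero. Since the objective $\|\cdot - h^*\|_2$ is fixed, minimizing it over the superset $\mathcal{G}$ can only decrease the value, giving $\min_{g\in\mathcal{G}}\|g-h^*\|_2\le\min_{g\in\mathcal{F}_K}\|g-h^*\|_2$, which is exactly the claimed inequality; the witnessing $\tilde h$ is the minimizer on the left.

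For the multiple-filter case I would repeat the argument verbatim, replacing $\mathcal{G}$ by the set of $\tilde h = h + \sum_{i=1}^m h_{f_i}$ with each $h_{f_i}$ a degree-$K'$ polynomial supported on the disjoint, contiguous interval $\sigma_{f_i}$. Setting every $p_{f_i}\equiv 0$ once more recovers an arbitrary single degree-$K$ polynomial, so this set also contains $\mathcal{F}_K$, and monotonicity of the minimum over nested sets finishes it. It is worth noting explicitly that disjointness and contiguity of the $\sigma_{f_i}$ play no role in this direction — they are relevant only to the complementary claims in the paper about how much richer the piece-wise class is.

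I do not expect a genuine obstacle; the only care needed is in the formalization. I would make explicit (i) what $\|\cdot\|_2$ denotes — either the discrete $\ell_2$ norm over the eigenvalues of $\tadj$ or the $L^2$ norm over $[-1,1]$ — the argument being identical in either case, and (ii) that the minima are attained, which holds because each feasible set is the image of a finite-dimensional coefficient space under a linear map, hence a finite-dimensional (closed) linear subspace, so the $L^2$/Euclidean projection of $h^*$ onto it exists and "$\min$" is well-defined. With these points in place the set-inclusion argument applies directly and the proof is complete.
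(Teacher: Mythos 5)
Your proof is correct, and it is a cleaner and more general route than the one the paper takes. The paper imposes two simplifying assumptions (every support $\sigma_{f_i}$ has more than $K$ eigenvalues, and all eigenvalues are distinct) so that it can write down explicit unique minimizers via pseudoinverses; it then splits into the cases $K'=K$ and $K'<K$, in the first case separately minimizing the error restricted to each support and invoking orthogonality of the error pieces, and in the second case fixing the global coefficients at the single-polynomial optimum $\gamma_p^*$, observing that zeroing the adaptive coefficients reproduces the single-polynomial error, and then minimizing over the adaptive coefficients. That second case is exactly your nested-feasible-set argument written out in coordinates; your version applies uniformly to both cases, needs neither of the paper's simplifying assumptions, and replaces the explicit pseudoinverse computations with the one fact that actually matters, namely that the feasible set only grows when the extra coefficient blocks are added and that the minimum over a finite-dimensional (hence closed) subspace is attained. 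You are also right that disjointness and contiguity of the supports play no role in this inequality. What the paper's longer computation buys is the explicit form of the optimal coefficients and the orthogonal decomposition of the error, which it reuses informally when discussing how much the approximation improves; your argument establishes the stated inequality but says nothing quantitative about the gap. As a proof of the theorem as stated, yours is complete and preferable.
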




For a detailed proof please refer to \ref{thm:better_approximation_appendix}. We constructed waveforms of arbitrary complexity, approximated by first fitting a single globally-defined polynomial and then adding locally defined polynomials on top of it.  We assess the performance by varying the degrees of the local polynomials. The corresponding plots (See Figure~\ref{fig:fictitious}) indicate that use of lower-order filters achieves a fit comparable to one by a relatively higher-degree polynomial. Since an actual waveform is not observed in practice and instead, we estimate it by optimizing over the observed labels via learning a graph filter, we theoretically show that the family of filters that we learn is a strict superset of the polynomial filter family. The same result holds for the families of the resulting adapted graphs. 

\begin{theorem}
\label{thm:space_of_filters}
Define $\mathbb{H}:=\{h(\mathbf{\cdot})\;|\;\forall\text{ possible K-degree polynomial parameterizations of $h$}\}$ to be the set of all K-degree polynomial filters, whose arguments are $n\times n$ diagonal matrices, such that a filter response over some $\mathbf{\Lambda}$ is given by $h(\mathbf{\Lambda})$ for $h(\cdot)\in\mathbb{H}$. Similarly $\mathbb{H}':=\{ \Tilde{h}(\cdot)\;|\;\forall\text{ possible polynomial parameterizations of }\Tilde{h}\}$ is set of all filters learn-able via \ourmodel~, with $\Tilde{h} = h + h_{f_1} + h_{f_2}$, where $h$ is parameterized by a K-degree polynomial supported over entire spectrum, $h_{f_1}$ and $h_{f_2}$ are adaptive filters parameterized by independent $K'$-degree polynomials which only act on top and bottom $t$ diagonal elements respectively, with $t<n/2$ and $K'\leq K$; then $\mathbb{H}$ and $\mathbb{H}'$ form a vector space, with $\mathbb{H}\subset \mathbb{H}'$. Also, $\frac{\text{dim}(\mathbb{H}')}{\text{dim}(\mathbb{H})} = \frac{K+2K'+3}{K+1}$.
\end{theorem}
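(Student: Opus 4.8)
The plan is to exhibit both $\mathbb{H}$ and $\mathbb{H}'$ as images of \emph{linear} maps from finite-dimensional coefficient spaces, under the pointwise vector-space operations on filters, and then to read off the dimensions by proving injectivity. Throughout I read ``$d$-degree polynomial parameterization'' as an element of the $(d+1)$-dimensional space $P_d$ of polynomials of degree at most $d$ (the only reading under which $\mathbb{H}$ and $\mathbb{H}'$ are vector spaces), and I treat a filter as a genuine function sending a diagonal matrix $\mathbf{\Lambda}=\mathrm{diag}(\lambda_1\ge\cdots\ge\lambda_n)$ to a diagonal matrix, with addition and scalar multiplication defined entrywise in $\mathbf{\Lambda}$: $(h_1+h_2)(\mathbf{\Lambda})=h_1(\mathbf{\Lambda})+h_2(\mathbf{\Lambda})$ and $(ch)(\mathbf{\Lambda})=c\,h(\mathbf{\Lambda})$.

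First I would define $\Phi\colon P_K\to\{\text{filters}\}$ by letting $\Phi(p)$ apply $p$ entrywise, so that $\mathbb{H}=\mathrm{Im}\,\Phi$, and $\Psi\colon P_K\times P_{K'}\times P_{K'}\to\{\text{filters}\}$ by letting $\Psi(p,p_{f_1},p_{f_2})$ be the filter whose $i$-th diagonal entry on input $\mathbf{\Lambda}$ is $p(\lambda_i)$, plus $p_{f_1}(\lambda_i)$ when $i\le t$, plus $p_{f_2}(\lambda_i)$ when $i>n-t$; then $\mathbb{H}'=\mathrm{Im}\,\Psi$. Both maps are linear, hence their images are subspaces of the space of all filters, which settles the vector-space claim. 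The dimension count then reduces to injectivity of $\Phi$ and $\Psi$. For $\Phi$: if $\Phi(p)=0$, then evaluating on the scalar matrices $vI$ for $v\in[-1,1]$ gives $p(v)=0$ for infinitely many $v$, so $p=0$ and $\dim\mathbb{H}=\dim P_K=K+1$.

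The heart of the argument is the injectivity of $\Psi$, and this is exactly where the hypothesis $t<n/2$ is used: it guarantees that the ``middle'' index block $\{t+1,\dots,n-t\}$ is nonempty and that the top block $\{1,\dots,t\}$ and the bottom block $\{n-t+1,\dots,n\}$ are disjoint. Assuming $\Psi(p,p_{f_1},p_{f_2})=0$ and evaluating at $vI$: a middle index contributes only $p(v)$, forcing $p\equiv 0$; then a top index contributes only $p_{f_1}(v)$, forcing $p_{f_1}\equiv 0$; and a bottom index likewise forces $p_{f_2}\equiv 0$. Hence $\Psi$ is injective and $\dim\mathbb{H}'=(K+1)+(K'+1)+(K'+1)=K+2K'+3$. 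Finally, $\mathbb{H}\subseteq\mathbb{H}'$ since $\Phi(p)=\Psi(p,0,0)$ with the zero polynomial lying in $P_{K'}$; the inclusion is strict because $\dim\mathbb{H}'=K+2K'+3>K+1=\dim\mathbb{H}$ (equivalently, $\Psi(0,1,0)$ — the filter equal to $1$ on the top $t$ diagonal positions and $0$ elsewhere — takes different values at positions carrying equal eigenvalues, as one sees by evaluating at $cI$, whereas every filter in $\mathbb{H}$ is constant across such positions). The ratio $\dim\mathbb{H}'/\dim\mathbb{H}=(K+2K'+3)/(K+1)$ then follows.

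I expect the only genuinely delicate step to be the decoupling of the three coefficient blocks inside the injectivity of $\Psi$, which is precisely the place the condition $t<n/2$ is indispensable (both to isolate $p$ on a nonempty middle region and to keep the top and bottom supports disjoint). Everything else is bookkeeping: that pointwise operations make filters a vector space, that images of linear maps are subspaces, and that a nonzero univariate polynomial has finitely many roots. One modeling point worth stating explicitly at the outset is that ``$d$-degree polynomial'' must mean ``degree at most $d$'' for the vector-space assertion to hold, and that filters are treated as bona fide functions of the diagonal argument so that evaluation at the scalar matrices $vI$ is legitimate.
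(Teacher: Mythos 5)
Your proof is correct and follows the same overall strategy as the paper: realize $\mathbb{H}$ and $\mathbb{H}'$ as images of linear maps out of finite-dimensional coefficient spaces, conclude they are subspaces, and count dimensions by establishing injectivity. Where you differ is in how injectivity is handled. The paper works formally: it embeds everything in the multivariate polynomial ring $\mathbb{K}[x_1,\ldots,x_n]$, builds diagonal matrices whose $(i,i)$ entry is a polynomial in the indeterminate $x_i$, and reads off a basis whose linear independence is automatic at the level of formal symbols; it never argues that distinct parameterizations give distinct filter \emph{functions}. You instead treat filters as genuine functions of the diagonal argument and prove injectivity by evaluating at the scalar matrices $vI$, peeling off $p$, then $p_{f_1}$, then $p_{f_2}$ using the fact that a nonzero univariate polynomial has finitely many roots. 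This buys two things the paper's write-up lacks: a proof that the coefficient-to-filter map is injective as a map into functions (which is what the theorem's definition of $\mathbb{H}$ and $\mathbb{H}'$ actually requires), and an explicit identification of where $t<n/2$ is used --- if $t=n/2$ the global monomial filters of degree at most $K'$ become sums of the top and bottom ones and the three coefficient blocks fail to decouple, so the dimension count would be wrong. One further point in your favor: your count $\dim\mathbb{H}'=K+2K'+3$ is consistent with the ratio $\frac{K+2K'+3}{K+1}$ asserted in the theorem, whereas the last line of the paper's proof states $\dim(\mathbb{H}')=K+2K'+1$, which is a typo (the basis it exhibits has $K+2K'+3$ elements).
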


\begin{corollary}
\label{thm:space_of_graphs}
The corresponding adapted graph families $\mathbb{G}:=\{\mathbf{U}^T h(\mathbf{\cdot})\mathbf{U}^T\;|\;\forall h(\cdot)\in\mathbb{H}\}$ and $\mathbb{G'}:=\{\mathbf{U}^T \Tilde{h}(\cdot)\mathbf{U}^T\;|\;\forall \Tilde{h}(\cdot)\in\mathbb{H}'\}$ for any unitary matrix $\mathbf{U}$ form a vector space, with $\mathbb{G}\subset \mathbb{G}'$ and $\frac{\text{dim}(\mathbb{G}')}{\text{dim}(\mathbb{G})} = \frac{K+2K'+3}{K+1}$.
\end{corollary}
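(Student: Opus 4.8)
The plan is to deduce the Corollary almost entirely from Theorem~\ref{thm:space_of_filters} by transporting the vector-space structure of the filter families $\mathbb{H},\mathbb{H}'$ through the fixed linear map $\Phi_{\mathbf{U}}\colon D \mapsto \mathbf{U}^T D\,\mathbf{U}^T$ that sends a diagonal filter response to its ``adapted graph''. The key observation is that for a \emph{fixed} unitary $\mathbf{U}$, the assignment $h(\cdot)\mapsto \mathbf{U}^T h(\cdot)\mathbf{U}^T$ is a linear map on the relevant space of matrix-valued objects: addition and scalar multiplication of filters commute with left/right multiplication by the constant matrices $\mathbf{U}^T$. Hence $\mathbb{G} = \Phi_{\mathbf{U}}(\mathbb{H})$ and $\mathbb{G}' = \Phi_{\mathbf{U}}(\mathbb{H}')$ are the images of vector spaces under a linear map, and therefore are themselves vector spaces, and inclusion is preserved: $\mathbb{H}\subset\mathbb{H}'$ immediately gives $\mathbb{G}\subset\mathbb{G}'$.

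First I would make precise what ``forms a vector space'' means here: elements of $\mathbb{G}$ are parameterized by the polynomial coefficients, and two elements are added by adding their coefficient vectors (equivalently, by pointwise addition of the resulting $n\times n$ matrices), so $\mathbb{G}$ inherits exactly the linear structure that Theorem~\ref{thm:space_of_filters} established for $\mathbb{H}$. I would then note that $\Phi_{\mathbf{U}}$ restricted to $\mathbb{H}$ is injective: since $\mathbf{U}$ is unitary it is invertible, so $\mathbf{U}^T h_1(\cdot)\mathbf{U}^T = \mathbf{U}^T h_2(\cdot)\mathbf{U}^T$ forces $h_1(\cdot) = h_2(\cdot)$ (multiply on the left by $(\mathbf{U}^T)^{-1}$ and on the right by $(\mathbf{U}^T)^{-1}$). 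An injective linear map preserves dimension, so $\dim(\mathbb{G}) = \dim(\mathbb{H})$ and $\dim(\mathbb{G}') = \dim(\mathbb{H}')$, and the dimension ratio $\frac{\dim(\mathbb{G}')}{\dim(\mathbb{G})} = \frac{\dim(\mathbb{H}')}{\dim(\mathbb{H})} = \frac{K+2K'+3}{K+1}$ carries over verbatim from Theorem~\ref{thm:space_of_filters}.

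The one genuine subtlety — and the step I expect to be the main obstacle, if any — is bookkeeping about what the underlying set actually is, since a priori different polynomial parameterizations could collapse to the same matrix, and one must be careful that ``$\mathbb{H}$ forms a vector space'' really refers to the span of the filter responses rather than to the abstract coefficient space; but Theorem~\ref{thm:space_of_filters} has already resolved this on the filter side, so I would simply invoke it and push the conclusion forward through $\Phi_{\mathbf{U}}$. Concretely the write-up would be: (i) state that $\Phi_{\mathbf{U}}$ is linear and injective for fixed unitary $\mathbf{U}$; (ii) conclude $\mathbb{G} = \Phi_{\mathbf{U}}(\mathbb{H})$ and $\mathbb{G}' = \Phi_{\mathbf{U}}(\mathbb{H}')$ are vector spaces with $\mathbb{G}\subset\mathbb{G}'$ because images of nested vector spaces under a linear map are nested vector spaces; (iii) conclude the dimension ratio is unchanged because injective linear maps preserve dimension. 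No new estimates or constructions are needed beyond Theorem~\ref{thm:space_of_filters}; the corollary is essentially the statement that the adapted-graph map is a linear isomorphism onto its image.
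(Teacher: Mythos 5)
Your proposal is correct and matches the paper's own argument: the paper likewise defines the adapted-graph map as a composition of left- and right-multiplication by the (unitary, hence invertible) $\mathbf{U}$-factors, observes these are injective linear maps (monomorphisms), and transports the vector-space structure, the inclusion $\mathbb{H}\subset\mathbb{H}'$, and the dimensions from Theorem~\ref{thm:space_of_filters}. The only cosmetic difference is that the paper factors your $\Phi_{\mathbf{U}}$ into two maps $f_1,f_2$ before composing them.
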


This implies that our model is learning from a more diverse space of filters and the corresponding adapted graphs. Moreover, the dimension of the space increases significantly by using just two adaptive filters. Note that learning from such a diverse region is feasible. This observation is possible from the proofs of Theorem 4.2 (\ref{thm:space_of_filters_appendix}) and Corollary 4.2.1 (\ref{thm:space_of_graphs_appendix}). Using the adaptive filters without any filter with the entire spectrum as support results in learning a set of adapted graphs, $\hat{\mathbb{G}}$. This set is disjoint from $\mathbb{G}$, with $\mathbb{G'}=\mathbb{G}\oplus\hat{\mathbb{G}}$. We conduct various ablative studies where we demonstrate the effectiveness of learning from $\hat{\mathbb{G}}$ and $\mathbb{G'}$.

\section{Experiments}
\renewcommand{\arraystretch}{1.3}
\label{sec:experiments}
We conduct comprehensive experiments to demonstrate the effectiveness and competitiveness of the proposed PP-GNN model over state-of-the-art (SOTA)  models/methods and answer the following research questions. 
\begin{enumerate}
    \item \textbf{[RQ1]} Does the PP-GNN model outperform SOTA models/methods over a broad range of datasets (homophilic/heterophilic)?
\item \textbf{[RQ2]} Does the proposed model learn better and complex frequency responses that improve performance [Frequency response plots]?
\item \textbf{[RQ3]} With the PP-GNN model comprising of multiple filters, how does each filter contribute to achieving improved performance? 
\item \textbf{[RQ4 and RQ5]} Does the model learn better embedding, and do we need a large number of eigenvalues/vectors and polynomials to get improved performance?
\item \textbf{[RQ6]} How does the training time of the PP-GNN and GPR-GNN models compare?
\end{enumerate}

\textbf{Datasets and Setup.} We evaluate our model on several real-world heterophilic and homophilic datasets including a few large graphs for the node classification task. Detailed statistics of the benchmark datasets are provided in the appendix (\ref{app:dataset_stats}). The heterophilic datasets include \textbf{Texas}\footnote{\label{fn:webkb}http://www.cs.cmu.edu/afs/cs.cmu.edu/project/theo-11/www/wwkb}, \textbf{Cornell}\footnotemark[\value{footnote}], \textbf{Wisconsin}\footnotemark[\value{footnote}], \textbf{Chameleon}, \textbf{Squirrel}, \citep{chameleondataset} and \textbf{Flickr}. We use the 10 random splits (48\%/32\%/20\% of nodes for train/validation/test set) from~\cite{geomgcn}. For \textbf{Flickr}, \textbf{Ogbn-Arxiv}, and \textbf{Wiki-CS}, we use the data splits from~\cite{supergat}. We also evaluate on 8 homophilic datasets: \textbf{Cora-Full}, \textbf{Ogbn-Arxiv}, \textbf{Wiki-CS}, \textbf{Citeseer}, \textbf{Pubmed}, \textbf{Cora}, \textbf{Computer}, and \textbf{Photos} borrowed from \cite{supergat}. For the remaining homophilic datasets, we create 10 random splits for each dataset following~\cite{supergat}. We use PCA node features (\cite{supergat}) for the \textbf{Chameleon} and \textbf{Cora-Full} datasets. We report mean and standard deviation of test accuracy over splits to compare model performance. 

\textbf{Baselines.} We compare our model against (a) filtering based approaches, \gprgnn, ~\fagcn, ~\appnp, and~\lingcn, (b) other state-of-the-art models include ~\supergat, ~\tdgnn,~\hhgcn, and~\geomgcn, and (c) standard baselines, namely, \textsc{LR} (Logistic Regression), \textsc{MLP} (Multi Layer Perceptron), ~\gcn, and~\sgcn. Detailed descriptions for all the baselines, hardware and software specifications are in the appendix (\ref{app:baselines} and \ref{app:implimentation})

\textbf{Hyperparameter Tuning.} For the PP-GNN model, we separately partition the low-end and high-end eigenvalues into several contiguous partitions and use shared filter parameters for frequencies of each partition. The number of partitions, which can be interpreted as the number of filters, is swept in the range [2,3,4,5,10,20]. The polynomial filter order is swept in the range [1,10] in steps of size 1. The number of eigenvalues/vectors are swept in the range [32, 64, 128, 256, 512, 1024]. In our experiments, we set $\eta_l = \eta_h$ and we vary them in range (0, 1) and set $\eta_{gpr} = 1 - \eta_l$. We did hyperparameter tuning for other models as suggested in respective references and public repositories. More details are in the appendix (\ref{app:implimentation}).

\begin{table}[H]
\resizebox{\textwidth}{!}{%
\begin{tabular}{ccccccc}
\hline
\multicolumn{1}{c|}{\textbf{Test Acc}}                      & \textbf{Texas}         & \textbf{Wisconsin}     & \textbf{Squirrel}     & \textbf{Chameleon}    & \textbf{Cornell}      & \cellcolor[HTML]{FFFFFF}\textbf{Flickr} \\ \hline
\multicolumn{1}{c|}{\textbf{LR}}                            & 81.35 (6.33)           & 84.12 (4.25)           & 34.73 (1.39)          & 45.68 (2.52)          & \underline{83.24 (5.64)} & 46.51                                   \\
\multicolumn{1}{c|}{\textbf{MLP}}                           & 81.24 (6.35)           & 84.43 (5.36)           & 35.38 (1.38)          & 51.64 (1.89)          & \underline{83.78 (5.80)} & 46.93                                   \\
\multicolumn{1}{c|}{\textbf{SGCN}}                          & 62.43 (4.43)           & 55.69 (3.53)           & 45.72 (1.55)          & 60.77 (2.11)          & 62.43 (4.90)          & 50.75                                   \\
\multicolumn{1}{c|}{\textbf{GCN}}                           & 61.62 (6.14)           & 58.82 (4.89)           & 47.78 (2.13)          & 62.83 (1.52)          & 62.97 (5.41)          & 53.4                                    \\
\multicolumn{1}{c|}{\textbf{SuperGAT}}                      & 61.08 (4.97)           & 56.47 (3.90)           & 31.84 (1.26)          & 43.22 (1.71)          & 57.30 (8.53)          & \underline{53.47}                          \\
\multicolumn{1}{c|}{\textbf{Geom-GCN}}                      & 67.57*                 & 64.12*                 & 38.14*                & 60.90*                & 60.81*                & NA                                      \\
\multicolumn{1}{c|}{\textbf{H2GCN}}                         & \underline{84.86 (6.77)*} & \underline{86.67 (4.69)*} & 37.90 (2.02)*         & 58.40 (2.77)          & 82.16 (4.80)*         & OOM                                     \\
\multicolumn{1}{c|}{\textbf{FAGCN}}                         & 82.43 (6.89)           & 82.94 (7.95)           & 42.59 (0.79)          & 55.22 (3.19)          & 79.19 (9.79)          & OOM                                     \\
\multicolumn{1}{c|}{\textbf{APPNP}}                         & 81.89 (5.85)           & 85.49 (4.45)           & 39.15 (1.88)          & 47.79 (2.35)          & 81.89 (6.25)          & 50.33                                   \\
\rowcolor[HTML]{FFFFFF} 
\multicolumn{1}{c|}{\cellcolor[HTML]{FFFFFF}\textbf{LGC}}   & 80.20 (4.28)           & 81.89 (5.98)           & 44.26 (1.49)          & 61.14 (2.07)          & 74.59 (3.42)          & 51.67                                   \\
\multicolumn{1}{c|}{\textbf{GPR-GNN}}                       & 81.35 (5.32)           & 82.55 (6.23)           & \underline{46.31 (2.46)} & \underline{62.59 (2.04)} & 78.11 (6.55)          & 52.74                                   \\
\rowcolor[HTML]{FFFFFF} 
\multicolumn{1}{c|}{\cellcolor[HTML]{FFFFFF}\textbf{TDGNN}} & 83.00 (4.50)*          & 85.57 (3.78)*          & 43.84 (2.16)          & 55.20 (2.30)          & 82.92 (6.61)*         & OOM                                     \\\hline
\multicolumn{1}{c|}{\textbf{\ourmodel}}                  & \textbf{89.73 (4.90)}  & \textbf{88.24 (3.33)}  & \textbf{56.86 (1.20)} & \textbf{67.74 (2.31)} & \textbf{82.43 (4.27)} & \textbf{55.17}                          \\ \hline
\end{tabular}
}
\caption{Results on Heterophilic Datasets. We underline the results for the best performing baseline model. `*' indicates that the results were borrowed from the corresponding papers.}
\label{tab:result_table_heterophily}
\end{table}

\begin{table}[H]
\resizebox{\textwidth}{!}{%
\begin{tabular}{ccccccccc}
\hline
\multicolumn{1}{c|}{\textbf{Test Acc}}                    & \textbf{Cora-Full}    & \cellcolor[HTML]{FFFFFF}\textbf{OGBN-ArXiv} & \textbf{Wiki-CS}      & \textbf{Citeseer}      & \textbf{Pubmed}       & \textbf{Cora}         & \textbf{Computer}                   & \textbf{Photos}       \\ \hline
\multicolumn{1}{c|}{\textbf{LR}}                          & 39.10 (0.43)          & 52.53                                       & 72.28 (0.59)          & 72.22 (1.54)           & 87.00 (0.40)          & 73.94 (2.47)          & 64.92 (2.59)                        & 77.57 (2.29)          \\
\multicolumn{1}{c|}{\textbf{MLP}}                         & 43.03 (0.82)          & 54.96                                       & 73.74 (0.71)          & 73.83 (1.73)           & 87.77 (0.27)          & 77.06 (2.16)          & 64.96 (3.57)                        & 76.96 (2.46)          \\
\multicolumn{1}{c|}{\textbf{SGCN}}                        & 61.31 (0.78)          & 68.51                                       & 78.30 (0.75           & 76.77 (1.52)           & 88.48 (0.45)          & 86.96 (0.78)          & 80.65 (2.78)                        & 89.99 (0.69)          \\
\multicolumn{1}{c|}{\textbf{GCN}}                         & 59.63 (0.86)           & 69.37                                       & 77.64 (0.49)          & 76.47 (1.33)           & 88.41 (0.46)          & 87.36 (0.91)          & 82.50 (1.23)                        & 90.67 (0.68)          \\
\multicolumn{1}{c|}{\textbf{SuperGAT}}                    & 57.75 (0.97)          & 55.1*                                       & 77.92 (0.82)          & 76.58 (1.59)           & 87.19 (0.50)          & 86.75 (1.24)          & 83.04 (1.02)                        & 90.31 (1.22)          \\
\multicolumn{1}{c|}{\textbf{Geom-GCN}}                    & NA                    & NA                                          & NA                    & \underline{77.99*}                 & \underline{90.05*}       & 85.27*                & NA                                  & NA                    \\
\multicolumn{1}{c|}{\textbf{H2GCN}}                       & 57.83 (1.47)          & OOM                                         & OOM                   & 77.07 (1.64)* & 89.59 (0.33)*         & 87.81 (1.35)*         & OOM                                 & 91.17 (0.89)          \\
\multicolumn{1}{c|}{\textbf{FAGCN}}                       & 60.07 (1.43)          & OOM                                         & 79.23 (0.66)          & 76.80 (1.63)           & 89.04 (0.50)          & \underline{88.21 (1.37)} & 82.16 (1.48)                        & 90.91 (1.11)          \\
\multicolumn{1}{c|}{\textbf{APPNP}}                       & 60.83 (0.55)          & 69.2                               & 79.13 (0.50)          & 76.86 (1.51)           & 89.57 (0.53)          & 88.13 (1.53)          & 82.03 (2.04)                        & 91.68 (0.62)          \\
\rowcolor[HTML]{FFFFFF} 
\multicolumn{1}{c|}{\cellcolor[HTML]{FFFFFF}\textbf{LGC}} & \underline{61.84 (0.90)} & \underline{69.64}                                       & \underline{79.82 (0.49)} & 76.96 (1.73)           & 88.78 (0.51)          & 88.02 (1.44)          & 83.44 (1.77)                        & 91.56 (0.74)          \\
\multicolumn{1}{c|}{\textbf{GPR-GNN}}                     & 61.37 (0.96)          & 68.44                                       & 79.68 (0.50)          & 76.84 (1.69)           & 89.08 (0.39)          & 87.77 (1.31)          & 82.38 (1.60)                        & 91.43 (0.89)          \\
\multicolumn{1}{c|}{\textbf{TDGNN}}                       & OOM                   & OOM                                         & 79.58 (0.51)          & 76.64 (1.54)*          & 89.22 (0.41)*         & \underline{88.26 (1.32)*}         & \underline{84.52 (0.92)}                        & \underline{92.54 (0.28)}          \\ \hline
\multicolumn{1}{c|}{\textbf{\ourmodel}}                & \textbf{61.42 (0.79)} & \textbf{69.28}                              & \textbf{80.04 (0.43)} & \textbf{78.25 (1.76)}  & \textbf{89.71 (0.32)} & \textbf{89.52 (0.85)} & \textbf{85.23 (1.36)}               & \textbf{92.89 (0.37)} \\ \hline
\end{tabular}
}
\caption{Results on Homophilic Datasets.}
\label{tab:result_table_homophily}
\end{table}

\subsection{RQ1: PP-GNN versus SOTA Models} 
We present several important observations from Tables 1 and 2. We observe that the PP-GNN model consistently outperforms all models, including recent filtering approach based models on all datasets (both heterophilic and homophilic) except Pubmed and Cornell, where it achieves similar performance. This result demonstrates the effectiveness and robustness of our model across a wide variety of datasets. Furthermore, compared with the GPR-GNN model, learning piece-wise polynomial filters improves performance significantly over learning a single polynomial filter. In particular, PP-GNN achieves performance improvements of around 10\% and 5\% on the Chameleon and Squirrel datasets.

\subsection{RQ2: Adaptable Frequency Responses}
We computed the frequency responses (i.e., $h(\lambda)$) of learned polynomials of the PP-GNN and GPR-GNN models on several datasets, including Squirrel and Citeseer datasets shown in Figure~\ref{fig:PPGNN_learned_poly}. We observe from Figure~\ref{fig:RQ2_squirrel}, though the \gprgnn~model can learn some variations at low/high frequencies, it is insufficient to achieve higher classification accuracy. On the other hand, the \ourmodel~model can capture complex shapes at the low and high ends of the spectrum, enabling it to achieve significantly improved test accuracy. We observed a similar phenomenon for the chameleon dataset as well. To illustrate another behaviour, we present the frequency responses for the Citeseer dataset in Figure~\ref{fig:RQ2_citeseer}, and the responses are similar except for some variations at the low-end of the spectrum. Note that the \gprgnn~model does well on several homophilic datasets. These observations suggest that the \ourmodel~model adapts very well in learning desired frequency responses, as dictated by the task at hand. We can observe such a behaviour for two other datasets in the appendix (\ref{app:adap_freq_response}).     

\begin{figure}[H]
    \begin{subfigure}[t]{0.5\textwidth}
        \centering
        \includegraphics[height=1.5in, width=0.85\textwidth]{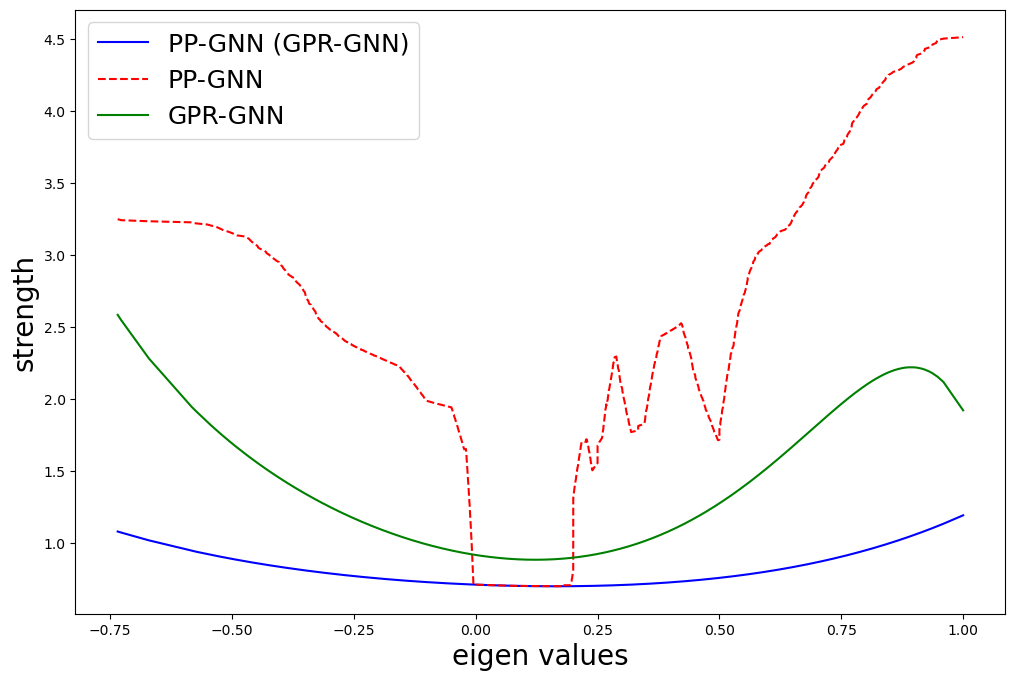}
        \caption{Squirrel}
        \label{fig:RQ2_squirrel}
    \end{subfigure}
    \begin{subfigure}[t]{0.5\textwidth}
        \centering
        \includegraphics[height=1.5in, width=0.85\textwidth]{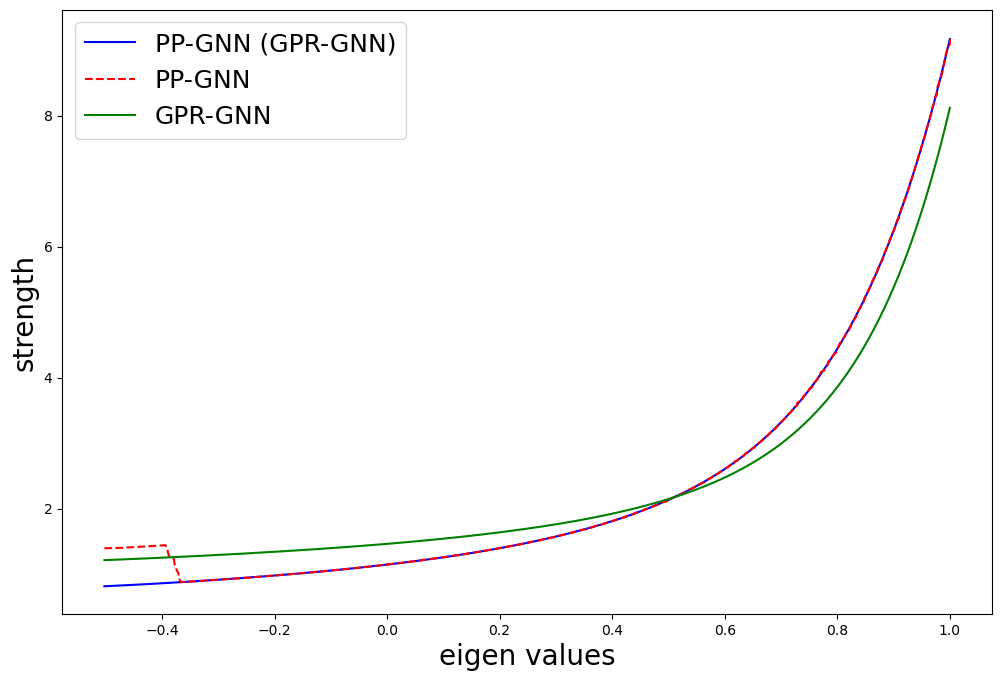}
        \caption{Citeseer}
        \label{fig:RQ2_citeseer}
    \end{subfigure}
    
    \caption{Visualization of adapted eigenspectrum by our proposed model and~\gprgnn}
    \label{fig:PPGNN_learned_poly}
\end{figure}

\subsection{RQ3: Performance comparison of Different Filters} 
Recall that the \ourmodel~model is a sum of polynomials model comprising of polynomial filters operating at different parts of the spectrum. Here, we study the importance and effect of using a combination of filters operating on different regions. For reference, we also compare the performance of \gprgnn~model that operates on the entire spectrum using a single filter. Several interesting observations are in order. From Table~\ref{tab:result_ppgnnVSgprgnn}, we see that the best performance is achievable using high-frequency signals alone for the heterophilic datasets (Squirrel and Chameleon), suggesting that significant discriminatory information is available at high frequencies compared to low-frequency signals. In contrast, homophilic datasets (Cora and Citeseer) exhibit a reverse trend. On comparing the first (second) and third (fourth) row results, we see that having the \gprgnn~filter as part of the \ourmodel~filter helps to get improved performance over individual filters (\ourmodel(Low) or \ourmodel~(High)). Finally, the \ourmodel~model can adapt well, capture contrasting information bands across datasets, and outperform the \gprgnn~model.     

\begin{table}[H]
\resizebox{\textwidth}{!}{%
\begin{tabular}{c|llll}
\hline
\textbf{Test Acc}      & \multicolumn{1}{c}{\textbf{Squirrel}} & \multicolumn{1}{c}{\textbf{Chameleon}} & \multicolumn{1}{c}{\textbf{Citeseer}}                     & \multicolumn{1}{c}{\textbf{Cora}}   \\ \hline
\textbf{PP-GNN (Low)}      & 45.75 (1.69)                          & 56.73 (4.03)                           & 76.23 (1.54)                                              & 88.03 (0.79)                        \\
\textbf{PP-GNN (High)}     & 58.70 (1.60)                          & \textbf{69.19 (1.88)}                           & 55.50 (6.38)                                              & 73.76 (2.03)                        \\
\textbf{PP-GNN (GPR-GNN+Low)}  & 50.96 (1.26)                          & 63.71 (2.69)                           & 78.07 (1.71)                                              & \textbf{89.56 (0.93)}                        \\
\textbf{PP-GNN (GPR-GNN + High)} & 60.39 (0.91)                          & 67.83 (2.30)                           & \textbf{78.30 (1.60)}                                              & 89.42 (0.97)                        \\
\textbf{GPR-GNN}       & \multicolumn{1}{c}{42.06   (1.55)}    & \multicolumn{1}{c}{56.29 (1.58)}       & 76.74   (1.33)                                            & \multicolumn{1}{c}{87.93 (1.52)}    \\ \hline
\textbf{PP-GNN}        & \textbf{56.21 (1.79)}                          & 68.93 (1.95)                           & \multicolumn{1}{c}{{\color[HTML]{000000} 78.25   (1.76)}} & {\color[HTML]{000000} 89.52 (0.85)} \\ \hline
\end{tabular}
}
\caption{Performance of different filters}
\label{tab:result_ppgnnVSgprgnn}
\end{table}

\subsection{RQ4 - RQ6: Varying No. of EVs, Learned embedding and Timing Comparison}
We conducted ablative studies to see the effect of varying different hyperparameters of the PP-GNN model. From Figure~\ref{acc_ppgnn_vary_evd}, we see that as small as $32$ eigen components are sufficient to achieve near-best performance on the homophilic datasets (Cora, Citeseer). On the other hand, peak performance is achieved for some intermediate ($\approx$250-500) number of eigen components for the Squirrel and Chameleon datasets. Also, we note that we can achieve comparable performance to state of the art models on large datasets (Flickr and OGBN-Arxiv) using only $1024$ components (See Tables 1 and 2). This study suggests that the \ourmodel~model is scalable at least for medium sized graphs (~100K nodes). More evaluation and investigation are needed to study performance versus computation cost trade-off for very large graphs (millions of nodes). 

To assess the quality of learned embedding, we also created t-SNE plots and made a visual inspection. From Figure~\ref{PPGNN_TSNE_squirrel} and \ref{GPRGNN_TSNE_squirrel}, we see that the clusters belonging to different classes of the Squirrel dataset are better separated for the \ourmodel~ model compared to the \gprgnn~ model. We also measured the time taken by the \gprgnn~and \ourmodel~models. We observed that the \gprgnn~model is only $(1.2-2)\times$ faster than the \ourmodel~model (using $1024$ eigencomponents) on many datasets. Thus, the extra training cost incurred by the \ourmodel~model is not significantly different for practical purposes. More details are in the appendix (\ref{app:timing_analysis}). 

\begin{figure}
\centering
\begin{minipage}[c]{0.33\textwidth}
        \centering
        \includegraphics[height=1.5in, width=0.97\textwidth]{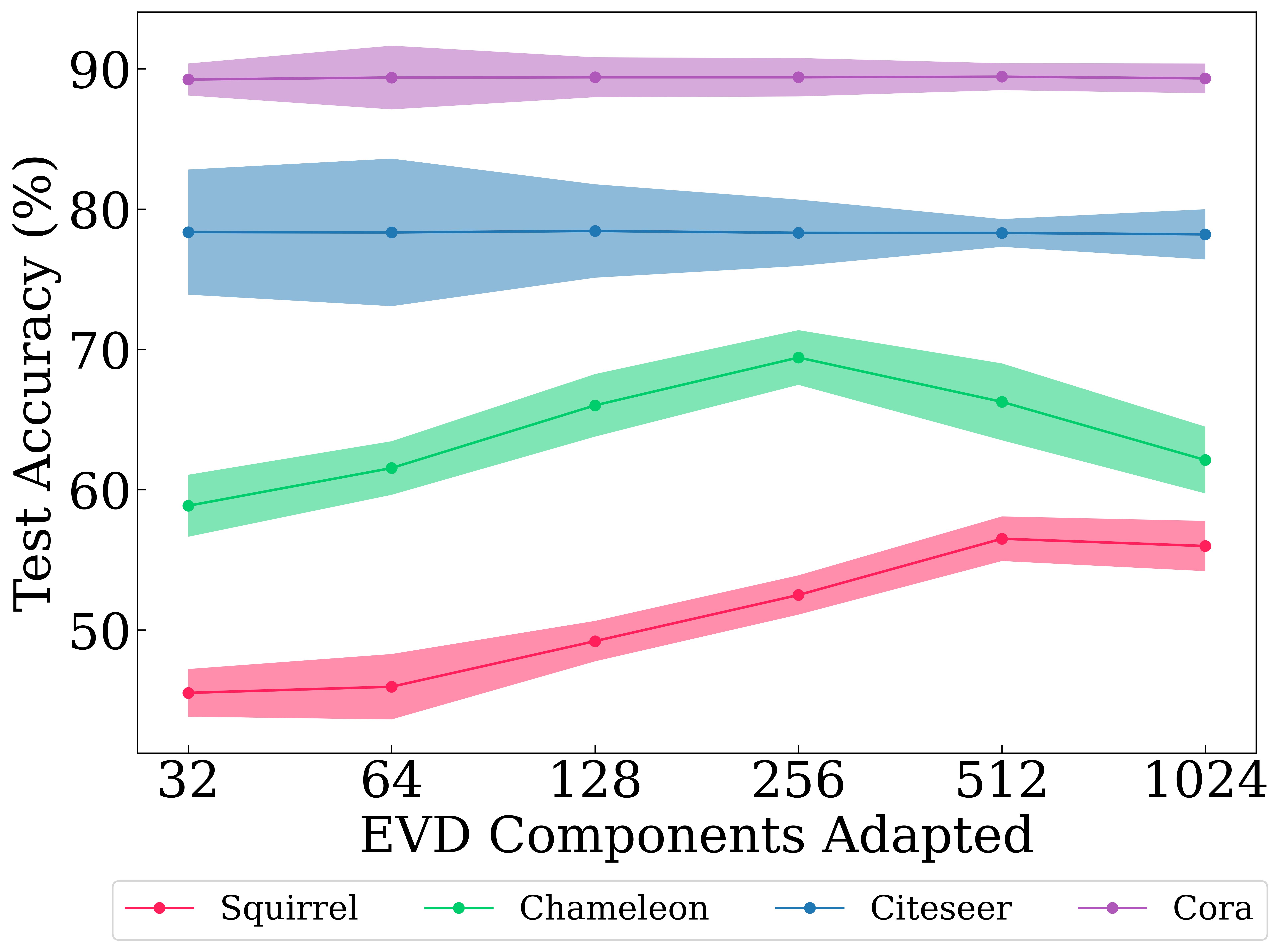}
        \subcaption{Varying No. of EVs}
        \label{acc_ppgnn_vary_evd}
\end{minipage}
\begin{minipage}[c]{0.3\textwidth}
        \centering
        \includegraphics[height=1.5in, width=0.97\textwidth]{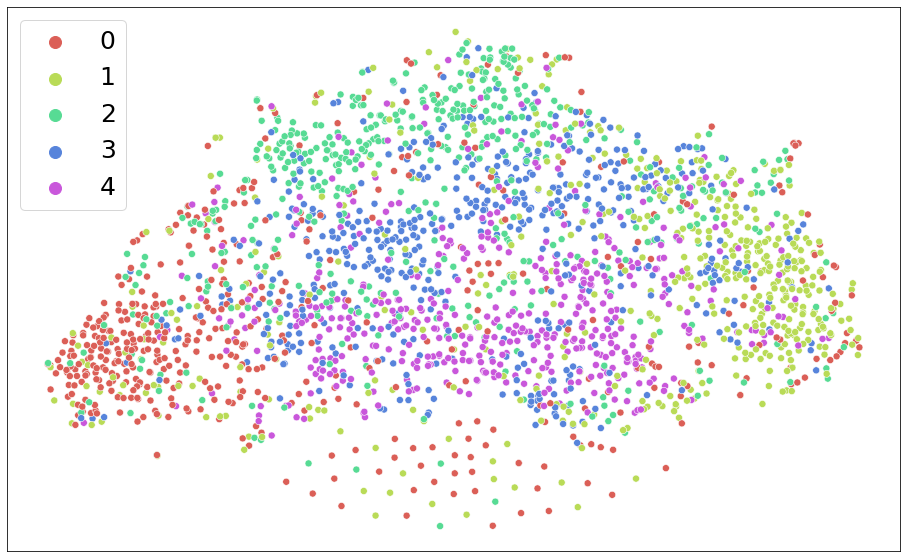}
        \subcaption{PP-GNN on Squirrel}
        \label{PPGNN_TSNE_squirrel}
\end{minipage}
\begin{minipage}[c]{0.3\textwidth}
        \centering
        \includegraphics[height=1.5in, width=0.97\textwidth]{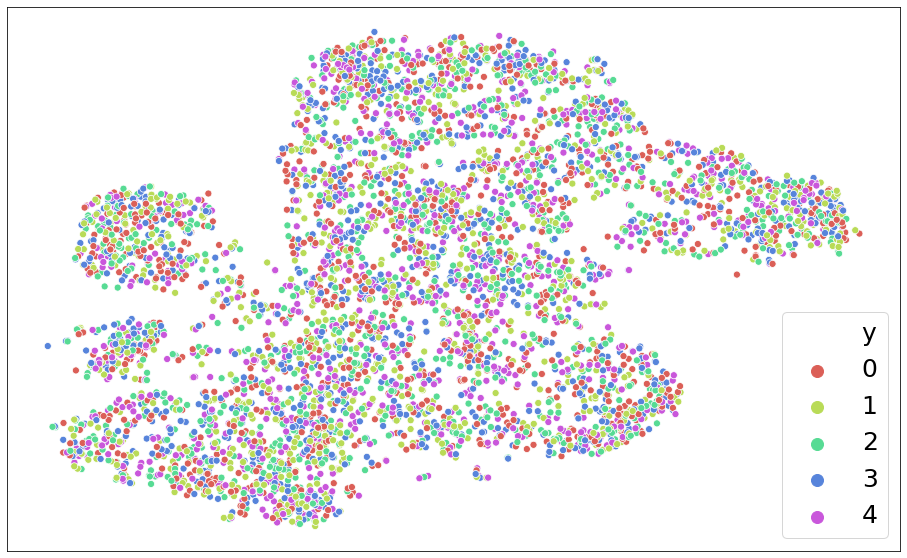}
        \subcaption{GPR-GNN on Squirrel}
        \label{GPRGNN_TSNE_squirrel}
\end{minipage}
\caption{Analyzing varying number of eigenvalues and the learned embeddings}
\end{figure}

\section{Conclusion}
\label{sec:conclusion}
Several proposed models attempt to be robust to the correlations between graph and node labels. We build on the filter-based approach of \gprgnn~\citep{gprgnn}. This work proposed an effective polynomial filter design. We combine \gprgnn~with additional polynomials that adapt specifically to low and high-frequency components. Our experiments demonstrate that such an approach can learn filter functions that improve performance on the node classification task. Our plots of these filter functions suggest that they are of high order on several datasets. It would be interesting to analyze these filter plots and identify some common characteristics. These will enable us to a] characterize the correlation of graphs and labels, b] further improve the performance and c] build robust graph privacy models. We plan to do this as our future work.

\newpage
\section{Reproducibility Statement}
\label{sec:reprostatement}
We have taken several steps to ensure the reproducibility of our work. First, we provide a detailed description of our model in Section~\ref{sec:proposedapproach}. Section~\ref{sec:experiments} provides a brief description of the datasets utilized in our experiments. A detailed description of the datasets, including their sources, statistics are given in the appendix (\ref{app:dataset_stats}). We also provide details on the number of splits and the method used to generate them. We also list out all the baselines utilized in the experiments in Section~\ref{sec:experiments}. A detailed description of these baselines and the parameter sweep ranges are given in the appendix (\ref{app:baselines}). We also provide implementation detail of our model in the appendix (\ref{app:implimentation}). 


\section{Ethics Statement}
\label{sec:ethicstatement}
Social network graphs form the most exciting application for GNNs. GNNs can be used to reveal user information that the user otherwise would have preferred to keep private. Privacy-preserving methods might rely on obfuscating the graph by adding spurious edges, and these edges will, in turn, reduce the graph's correlation with the node labels. Earlier GNNs would likely not have predicted the target label with reasonable accuracy under this setting. Thereby, such obfuscation methods would give some level of privacy for the users. However, our model exhibits good performance even in the presence of a lower correlation. Such methods can make simple obfuscation based privacy approaches obsolete while still revealing important information about the users. On the other hand, our model can also give insights into what makes graphs reveal certain information and develop a more robust privacy model for graphs.

\bibliography{references}

\begin{thebibliography}{38}
\providecommand{\natexlab}[1]{#1}
\providecommand{\url}[1]{\texttt{#1}}
\expandafter\ifx\csname urlstyle\endcsname\relax
  \providecommand{\doi}[1]{doi: #1}\else
  \providecommand{\doi}{doi: \begingroup \urlstyle{rm}\Url}\fi

\bibitem[Abu-El-Haija et~al.(2019{\natexlab{a}})Abu-El-Haija, Perozzi, Kapoor,
  Harutyunyan, Alipourfard, Lerman, Steeg, and Galstyan]{mixhop}
Sami Abu-El-Haija, Bryan Perozzi, Amol Kapoor, Hrayr Harutyunyan, Nazanin
  Alipourfard, Kristina Lerman, Greg~Ver Steeg, and Aram Galstyan.
\newblock Mixhop: Higher-order graph convolution architectures via sparsified
  neighborhood mixing.
\newblock In \emph{International Conference on Machine Learning (ICML)},
  2019{\natexlab{a}}.

\bibitem[Abu-El-Haija et~al.(2019{\natexlab{b}})Abu-El-Haija, Perozzi, Kapoor,
  and Lee]{ngcn}
Sami Abu-El-Haija, Bryan Perozzi, Amol Kapoor, and Joonseok Lee.
\newblock N-gcn: Multi-scale graph convolutionfor semi-supervised node
  classification.
\newblock In \emph{Conference on Uncertainty in Artificial Intelligence (UAI)},
  2019{\natexlab{b}}.

\bibitem[Akiba et~al.(2019)Akiba, Sano, Yanase, Ohta, and Koyama]{optuna}
Takuya Akiba, Shotaro Sano, Toshihiko Yanase, Takeru Ohta, and Masanori Koyama.
\newblock Optuna: {A} next-generation hyperparameter optimization framework.
\newblock \emph{ArXiv}, abs/1907.10902, 2019.

\bibitem[Bianchi et~al.(2021)Bianchi, Grattarola, Livi, and Alippi]{armaconv}
Filippo~Maria Bianchi, Daniele Grattarola, Lorenzo Livi, and Cesare Alippi.
\newblock Graph neural networks with convolutional arma filters.
\newblock \emph{IEEE Transactions on Pattern Analysis and Machine
  Intelligence}, pp.\  1–1, 2021.
\newblock ISSN 1939-3539.
\newblock \doi{10.1109/tpami.2021.3054830}.
\newblock URL \url{http://dx.doi.org/10.1109/TPAMI.2021.3054830}.

\bibitem[Bo et~al.(2021)Bo, Wang, Shi, and Shen]{fagcn}
Deyu Bo, X.~Wang, Chuan Shi, and Hua-Wei Shen.
\newblock Beyond low-frequency information in graph convolutional networks.
\newblock In \emph{Association for the Advancement of Artificial Intelligence
  (AAAI)}, 2021.

\bibitem[Bruna et~al.(2014)Bruna, Zaremba, Szlam, and LeCun]{spectralnetwork}
Joan Bruna, Wojciech Zaremba, Arthur Szlam, and Yann LeCun.
\newblock Spectral networks and locally connected networks on graphs.
\newblock In \emph{International Conference on Learning Representations
  (ICLR)}, 2014.

\bibitem[Chamberlain et~al.(2021)Chamberlain, Rowbottom, Gorinova, Bronstein,
  Webb, and Rossi]{grand}
Ben Chamberlain, James Rowbottom, Maria~I Gorinova, Michael Bronstein, Stefan
  Webb, and Emanuele Rossi.
\newblock Grand: Graph neural diffusion.
\newblock In Marina Meila and Tong Zhang (eds.), \emph{Proceedings of the 38th
  International Conference on Machine Learning}, volume 139 of
  \emph{Proceedings of Machine Learning Research}, pp.\  1407--1418. PMLR,
  18--24 Jul 2021.
\newblock URL \url{https://proceedings.mlr.press/v139/chamberlain21a.html}.

\bibitem[Chien et~al.(2021)Chien, Peng, Li, and Milenkovic]{gprgnn}
Eli Chien, Jianhao Peng, Pan Li, and Olgica Milenkovic.
\newblock Adaptive universal generalized pagerank graph neural network.
\newblock In \emph{International Conference on Learning Representations
  (ICLR)}, 2021.

\bibitem[Chua et~al.(July 8-10, 2009)Chua, Tang, Hong, Li, Luo, and
  Zheng]{flickr}
Tat-Seng Chua, Jinhui Tang, Richang Hong, Haojie Li, Zhiping Luo, and Yan-Tao
  Zheng.
\newblock Nus-wide: A real-world web image database from national university of
  singapore.
\newblock In \emph{Proc. of ACM Conf. on Image and Video Retrieval (CIVR'09)},
  Santorini, Greece., July 8-10, 2009.

\bibitem[Cullum \& Willoughby(2002)Cullum and Willoughby]{eigbound3}
Jane~K. Cullum and Ralph~A. Willoughby.
\newblock \emph{Lanczos Algorithms for Large Symmetric Eigenvalue
  Computations}.
\newblock Society for Industrial and Applied Mathematics, 2002.
\newblock \doi{10.1137/1.9780898719192}.
\newblock URL \url{https://epubs.siam.org/doi/abs/10.1137/1.9780898719192}.

\bibitem[Defferrard et~al.(2016)Defferrard, Bresson, and
  Vandergheynst]{chebgnn}
Micha\"{e}l Defferrard, Xavier Bresson, and Pierre Vandergheynst.
\newblock Convolutional neural networks on graphs with fast localized spectral
  filtering.
\newblock In \emph{Neural Information Processing Systems (NeurIPS)}, 2016.

\bibitem[Dong et~al.(2021)Dong, Ding, Jalaian, Ji, and Li]{dong2021graph}
Yushun Dong, Kaize Ding, Brian Jalaian, Shuiwang Ji, and Jundong Li.
\newblock Graph neural networks with adaptive frequency response filter, 2021.

\bibitem[Hamilton et~al.(2017)Hamilton, Ying, and Leskovec]{graphsage}
William~L. Hamilton, Rex Ying, and Jure Leskovec.
\newblock Inductive representation learning on large graphs.
\newblock In \emph{Neural Information Processing Systems (NeurIPS)}, 2017.

\bibitem[He et~al.(2021)He, Wei, Huang, and Xu]{he2021bernnet}
Mingguo He, Zhewei Wei, Zengfeng Huang, and Hongteng Xu.
\newblock Bernnet: Learning arbitrary graph spectral filters via bernstein
  approximation, 2021.

\bibitem[Hu et~al.(2020)Hu, Fey, Zitnik, Dong, Ren, Liu, Catasta, and
  Leskovec]{ogbnarxiv}
Weihua Hu, Matthias Fey, Marinka Zitnik, Yuxiao Dong, Hongyu Ren, Bowen Liu,
  Michele Catasta, and Jure Leskovec.
\newblock Open graph benchmark: Datasets for machine learning on graphs.
\newblock \emph{arXiv preprint arXiv:2005.00687}, 2020.

\bibitem[Kim \& Oh(2021)Kim and Oh]{supergat}
Dongkwan Kim and Alice Oh.
\newblock How to find your friendly neighborhood: Graph attention design with
  self-supervision.
\newblock In \emph{International Conference on Learning Representations
  (ICLR)}, 2021.

\bibitem[Kingma \& Ba(2015)Kingma and Ba]{adam}
Diederik~P. Kingma and Jimmy Ba.
\newblock Adam: A method for stochastic optimization.
\newblock In \emph{International Conference on Learning Representations
  (ICLR)}, 2015.

\bibitem[Kipf \& Welling(2017)Kipf and Welling]{gcn}
Thomas~N. Kipf and Max Welling.
\newblock Semi-supervised classification with graph convolutional networks.
\newblock In \emph{International Conference on Learning Representations
  (ICLR)}, 2017.

\bibitem[Klicpera et~al.(2019)Klicpera, Bojchevski, and Günnemann]{appnp}
Johannes Klicpera, Aleksandar Bojchevski, and Stephan Günnemann.
\newblock Combining neural networks with personalized pagerank for
  classification on graphs.
\newblock In \emph{International Conference on Learning Representations
  (ICLR)}, 2019.

\bibitem[Li et~al.(2018)Li, Han, and Wu]{deeperinsights}
Qimai Li, Zhichao Han, and Xiao{-}Ming Wu.
\newblock Deeper insights into graph convolutional networks for semi-supervised
  learning.
\newblock In \emph{Association for the Advancement of Artificial Intelligence
  (AAAI)}, 2018.

\bibitem[LI(2010)]{eigbound2}
REN-CANG LI.
\newblock Sharpness in rates of convergence for the symmetric lanczos method.
\newblock \emph{Mathematics of Computation}, 79\penalty0 (269):\penalty0
  419--435, 2010.
\newblock ISSN 00255718, 10886842.
\newblock URL \url{http://www.jstor.org/stable/40590409}.

\bibitem[Lim et~al.(2021)Lim, Li, Hohne, and Lim]{newbenchmark}
Derek Lim, Xiuyu Li, Felix Hohne, and Ser-Nam Lim.
\newblock New benchmarks for learning on non-homophilous graphs.
\newblock In \emph{The WebConf Workshop on Graph Learning Benchmarks
  (GLB-WWW)}, 2021.

\bibitem[Lukovnikov \& Fischer(2021)Lukovnikov and
  Fischer]{breadthwisebackprop}
Denis Lukovnikov and Asja Fischer.
\newblock Improving breadth-wise backpropagation in graph neural networks helps
  learning long-range dependencies.
\newblock In Marina Meila and Tong Zhang (eds.), \emph{Proceedings of the 38th
  International Conference on Machine Learning}, volume 139 of
  \emph{Proceedings of Machine Learning Research}, pp.\  7180--7191. PMLR,
  18--24 Jul 2021.
\newblock URL \url{https://proceedings.mlr.press/v139/lukovnikov21a.html}.

\bibitem[Navarin et~al.(2020)Navarin, Erb, Pasa, and Sperduti]{lgcn}
Nicol{\`{o}} Navarin, Wolfgang Erb, Luca Pasa, and Alessandro Sperduti.
\newblock Linear graph convolutional networks.
\newblock In \emph{28th European Symposium on Artificial Neural Networks,
  Computational Intelligence and Machine Learning, {ESANN} 2020, Bruges,
  Belgium, October 2-4, 2020}, pp.\  151--156, 2020.
\newblock URL
  \url{https://www.esann.org/sites/default/files/proceedings/2020/ES2020-96.pdf}.

\bibitem[Paszke et~al.(2019)Paszke, Gross, Massa, Lerer, Bradbury, Chanan,
  Killeen, Lin, Gimelshein, Antiga, Desmaison, Kopf, Yang, DeVito, Raison,
  Tejani, Chilamkurthy, Steiner, Fang, Bai, and Chintala]{pytorch}
Adam Paszke, Sam Gross, Francisco Massa, Adam Lerer, James Bradbury, Gregory
  Chanan, Trevor Killeen, Zeming Lin, Natalia Gimelshein, Luca Antiga, Alban
  Desmaison, Andreas Kopf, Edward Yang, Zachary DeVito, Martin Raison, Alykhan
  Tejani, Sasank Chilamkurthy, Benoit Steiner, Lu~Fang, Junjie Bai, and Soumith
  Chintala.
\newblock Pytorch: An imperative style, high-performance deep learning library.
\newblock In H.~Wallach, H.~Larochelle, A.~Beygelzimer, F.~d\textquotesingle
  Alch\'{e}-Buc, E.~Fox, and R.~Garnett (eds.), \emph{Advances in Neural
  Information Processing Systems 32}, pp.\  8024--8035. Curran Associates,
  Inc., 2019.
\newblock URL
  \url{http://papers.neurips.cc/paper/9015-pytorch-an-imperative-style-high-performance-deep-learning-library.pdf}.

\bibitem[Pei et~al.(2020)Pei, Wei, Chang, Lei, and Yang]{geomgcn}
Hongbin Pei, Bingzhe Wei, Kevin Chen-Chuan Chang, Yu~Lei, and Bo~Yang.
\newblock Geom-gcn: Geometric graph convolutional networks.
\newblock In \emph{International Conference on Learning Representations
  (ICLR)}, 2020.

\bibitem[Rozemberczki et~al.(2021)Rozemberczki, Allen, and
  Sarkar]{chameleondataset}
Benedek Rozemberczki, Carl Allen, and Rik Sarkar.
\newblock Multi-scale attributed node embedding.
\newblock \emph{Journal of Complex Networks}, 2021.

\bibitem[Saad(1980)]{eigbound1}
Y.~Saad.
\newblock On the rates of convergence of the lanczos and the block-lanczos
  methods.
\newblock \emph{SIAM Journal on Numerical Analysis}, 17\penalty0 (5):\penalty0
  687--706, 1980.
\newblock \doi{10.1137/0717059}.
\newblock URL \url{https://doi.org/10.1137/0717059}.

\bibitem[Shuman et~al.(2013)Shuman, Narang, Frossard, Ortega, and
  Vandergheynst]{shumangsp}
David~I Shuman, Sunil~K. Narang, Pascal Frossard, Antonio Ortega, and Pierre
  Vandergheynst.
\newblock The emerging field of signal processing on graphs: Extending
  high-dimensional data analysis to networks and other irregular domains.
\newblock \emph{IEEE Signal Processing Magazine}, 30\penalty0 (3):\penalty0
  83--98, 2013.
\newblock \doi{10.1109/MSP.2012.2235192}.

\bibitem[Tang et~al.(2009)Tang, Sun, Wang, and Yang]{actordataset}
Jie Tang, Jimeng Sun, Chi Wang, and Zi~Yang.
\newblock Social influence analysis in large-scale networks.
\newblock In \emph{ACM SIGKDD International Conference on Knowledge Discovery
  and Data Mining (KDD)}, 2009.

\bibitem[Tremblay et~al.(2017)Tremblay, Gonçalves, and
  Borgnat]{tremblay2017design}
Nicolas Tremblay, Paulo Gonçalves, and Pierre Borgnat.
\newblock Design of graph filters and filterbanks, 2017.

\bibitem[Veli{\v{c}}kovi{\'{c}} et~al.(2018)Veli{\v{c}}kovi{\'{c}}, Cucurull,
  Casanova, Romero, Li{\`{o}}, and Bengio]{gat}
Petar Veli{\v{c}}kovi{\'{c}}, Guillem Cucurull, Arantxa Casanova, Adriana
  Romero, Pietro Li{\`{o}}, and Yoshua Bengio.
\newblock {Graph Attention Networks}.
\newblock In \emph{International Conference on Learning Representations
  (ICLR)}, 2018.

\bibitem[Wang \& Derr(2021)Wang and Derr]{tdgnn}
Yu~Wang and Tyler Derr.
\newblock Tree decomposed graph neural network.
\newblock In \emph{Conference on Information and Knowledge Management}, 2021.

\bibitem[Wu et~al.(2019)Wu, Souza, Zhang, Fifty, Yu, and Weinberger]{sgcn}
Felix Wu, Amauri Souza, Tianyi Zhang, Christopher Fifty, Tao Yu, and Kilian
  Weinberger.
\newblock Simplifying graph convolutional networks.
\newblock In \emph{International Conference on Machine Learning (ICML)}, 2019.

\bibitem[Yang et~al.(2021)Yang, Liu, Wang, Zhou, Gan, Wei, Zhang, Huang, and
  Wipf]{itergnn}
Yongyi Yang, Tang Liu, Yangkun Wang, Jinjing Zhou, Quan Gan, Zhewei Wei, Zheng
  Zhang, Zengfeng Huang, and David Wipf.
\newblock Graph neural networks inspired by classical iterative algorithms.
\newblock In Marina Meila and Tong Zhang (eds.), \emph{Proceedings of the 38th
  International Conference on Machine Learning}, volume 139 of
  \emph{Proceedings of Machine Learning Research}, pp.\  11773--11783. PMLR,
  18--24 Jul 2021.
\newblock URL \url{https://proceedings.mlr.press/v139/yang21g.html}.

\bibitem[Zheng et~al.(2021)Zheng, Zhou, Gao, Wang, Li{\'o}, Li, and
  Montufar]{ufgconv}
Xuebin Zheng, Bingxin Zhou, Junbin Gao, Yuguang Wang, Pietro Li{\'o}, Ming Li,
  and Guido Montufar.
\newblock How framelets enhance graph neural networks.
\newblock In Marina Meila and Tong Zhang (eds.), \emph{Proceedings of the 38th
  International Conference on Machine Learning}, volume 139 of
  \emph{Proceedings of Machine Learning Research}, pp.\  12761--12771. PMLR,
  18--24 Jul 2021.
\newblock URL \url{https://proceedings.mlr.press/v139/zheng21c.html}.

\bibitem[Zhu et~al.(2020)Zhu, Yan, Zhao, Heimann, Akoglu, and Koutra]{h2gcn}
Jiong Zhu, Yujun Yan, Lingxiao Zhao, Mark Heimann, Leman Akoglu, and Danai
  Koutra.
\newblock Beyond homophily in graph neural networks: Current limitations and
  effective designs.
\newblock In \emph{Neural Information Processing Systems (NeurIPS)}, 2020.

\bibitem[{Zhu} et~al.(2021){Zhu}, {Rossi}, {Rao}, {Mai}, {Lipka}, {Ahmed}, and
  {Koutra}]{cpgnn}
Jiong {Zhu}, Ryan~A. {Rossi}, Anup {Rao}, Tung {Mai}, Nedim {Lipka}, Nesreen~K.
  {Ahmed}, and Danai {Koutra}.
\newblock {Graph Neural Networks with Heterophily}.
\newblock In \emph{Association for the Advancement of Artificial Intelligence
  (AAAI)}, 2021.

\end{thebibliography}
\bibliographystyle{iclr2022_conference}

\newpage
\appendix
\section{Appendix}

The appendix is structured as follows. In Section~\ref{app:motivation}, we present additional evidence of the limitations of~\gprgnn. In Section~\ref{app:fictitious_poly}, we show a representative experiment that motivates Section~\ref{sec:proposedapproach}. In Section~\ref{appendix:proposed_approach}, we provide proofs for theorems and corollaries defined in Section~\ref{subsec:analysis}. In Section~\ref{appendix:exp}, we provide more details regarding the baselines, datasets and their respective splits. We also provide implementation details and rank \ourmodel~against current SoTA. In Section~\ref{app:timing_analysis}, we perform a timing analysis where we compare \ourmodel~to~\gprgnn. 



\subsection{Motivation}
\label{app:motivation}

\subsubsection{Node Feature Indistinguishably Plots}
\label{subsec:node_feature_indistin}

In the main paper (Figure~\ref{fig:indistinguishable}),  we plot the average of pairwise distances between node features for the Cora dataset, after computing $\tadj^jX$ for increasing $j$ values, and showed that the mean pairwise node feature distance decreases as $j$ increases. We observe that this is consistent across three more datasets: Citeseer, Chameleon and Squirrel. This is observed in Figure~\ref{fig:gpr_oversmoothingeffect_distance}.

\begin{figure}[ht]
    \centering
        \begin{subfigure}[t]{0.3\textwidth}
        \centering
        \includegraphics[height=1.5in, width=0.95\textwidth]{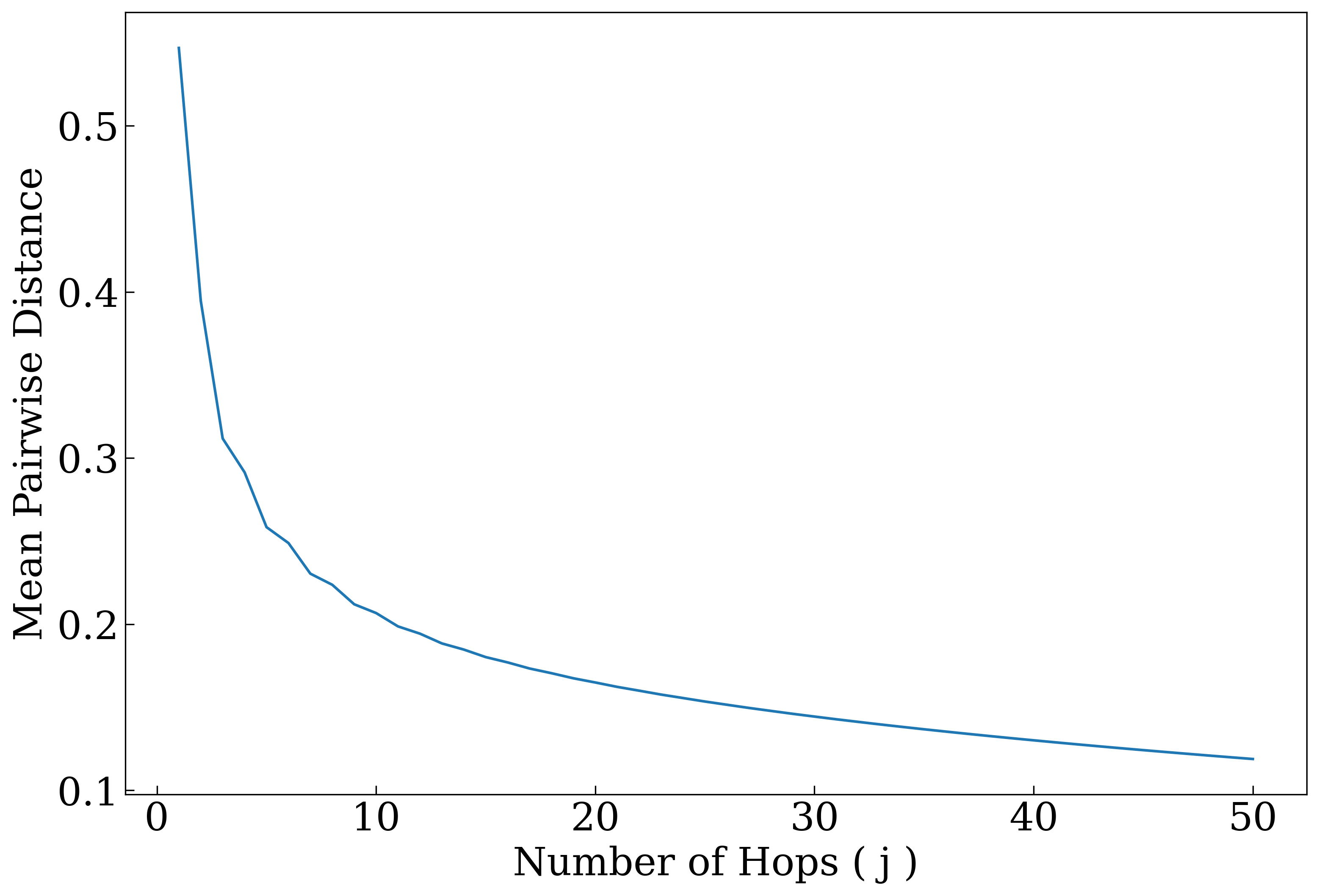}
        \caption{Chameleon}
        \label{fig:app_chameleon_ovr}
    \end{subfigure}
    ~
    \begin{subfigure}[t]{0.3\textwidth}
        \centering
        \includegraphics[height=1.5in, width=0.95\textwidth]{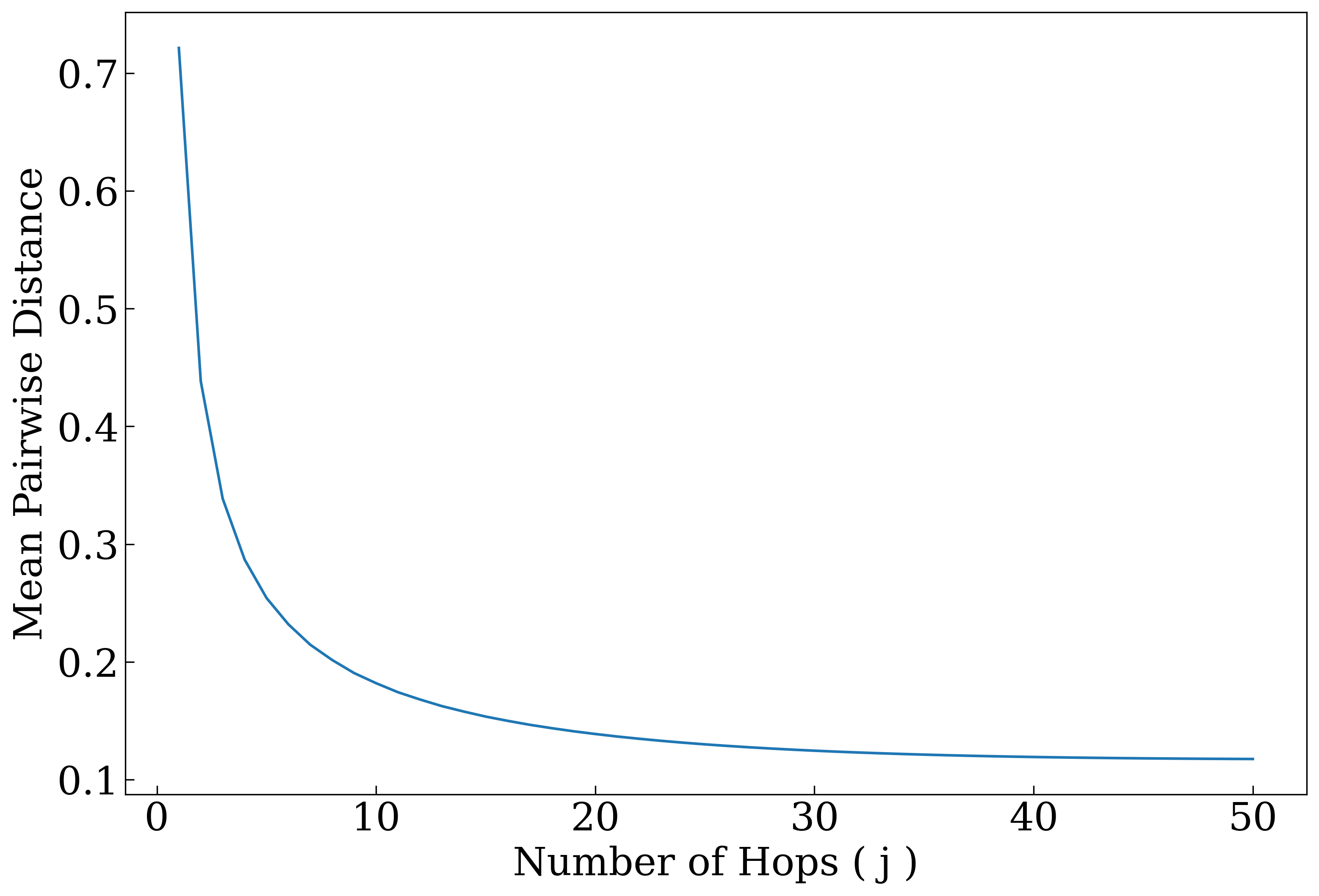}
        \caption{Squirrel}
        \label{fig:app_squirrel_ovr}
    \end{subfigure}%
    ~ 
    \begin{subfigure}[t]{0.3\textwidth}
        \centering
        \includegraphics[height=1.5in, width=0.95\textwidth]{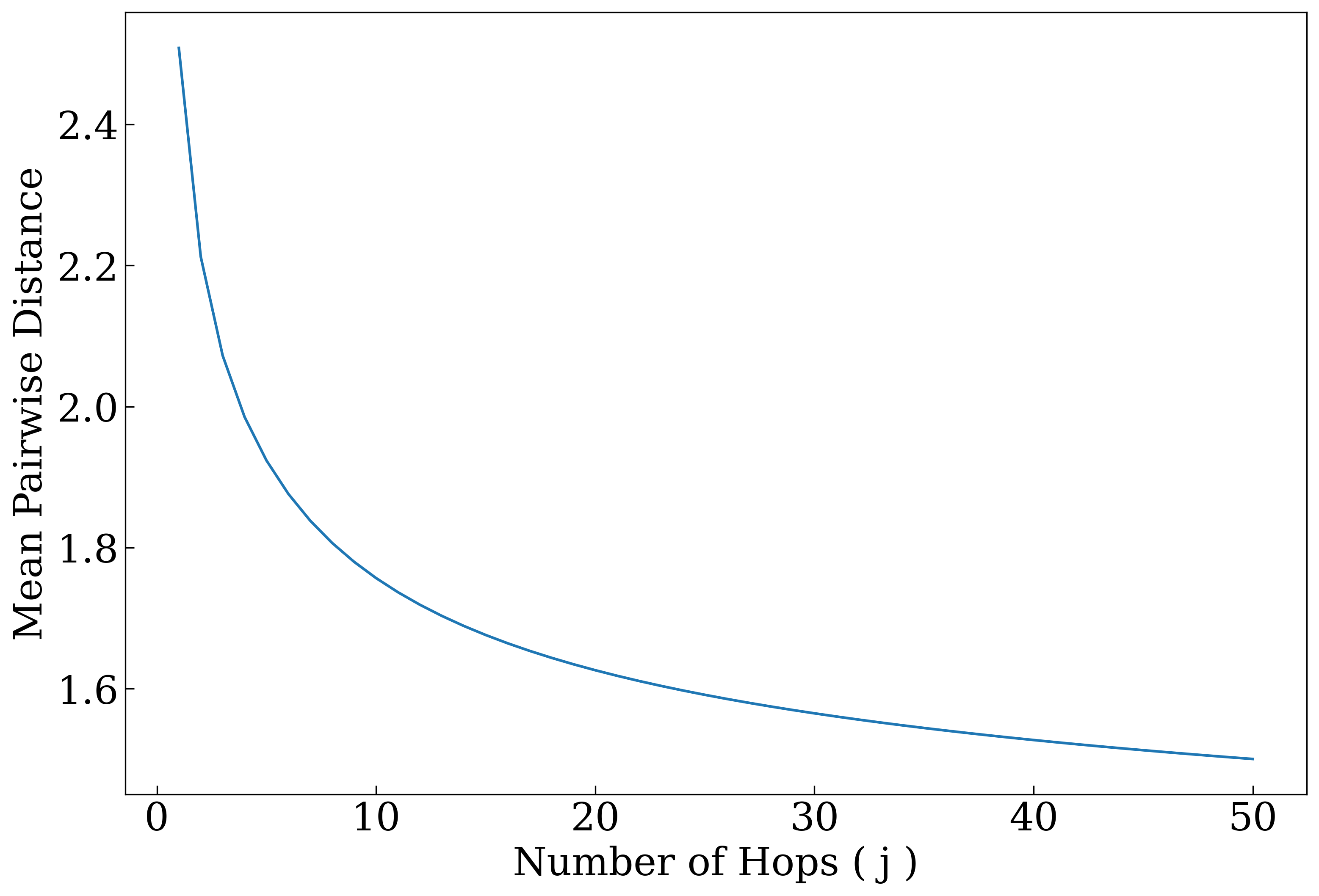}
        \caption{Citeseer}
        \label{fig:app_citeseer_ovr}
    \end{subfigure}
    \caption{Average of pairwise distances between node features, after computing $\tadj^jX$, for increasing $j$ values}
    \label{fig:gpr_oversmoothingeffect_distance}
\end{figure}

We also observed the mean of the variance of each dimension of node features, after computing $\tadj^jX$, for increasing $j$ values. We observe that this mean does indeed reduce as the number of hops increase. We also observe that the variance of each dimension of node features reduces for Cora, Squirrel and Chameleon as the number of hops increase; however, we don't observe such an explicit phenomenon for Citeseer. See Figure~\ref{fig:GPRoversmoothing}.

\begin{figure}[ht]
    \centering
    \begin{subfigure}[t]{0.5\textwidth}
        \centering
        \includegraphics[height=1.5in, width=0.85\textwidth]{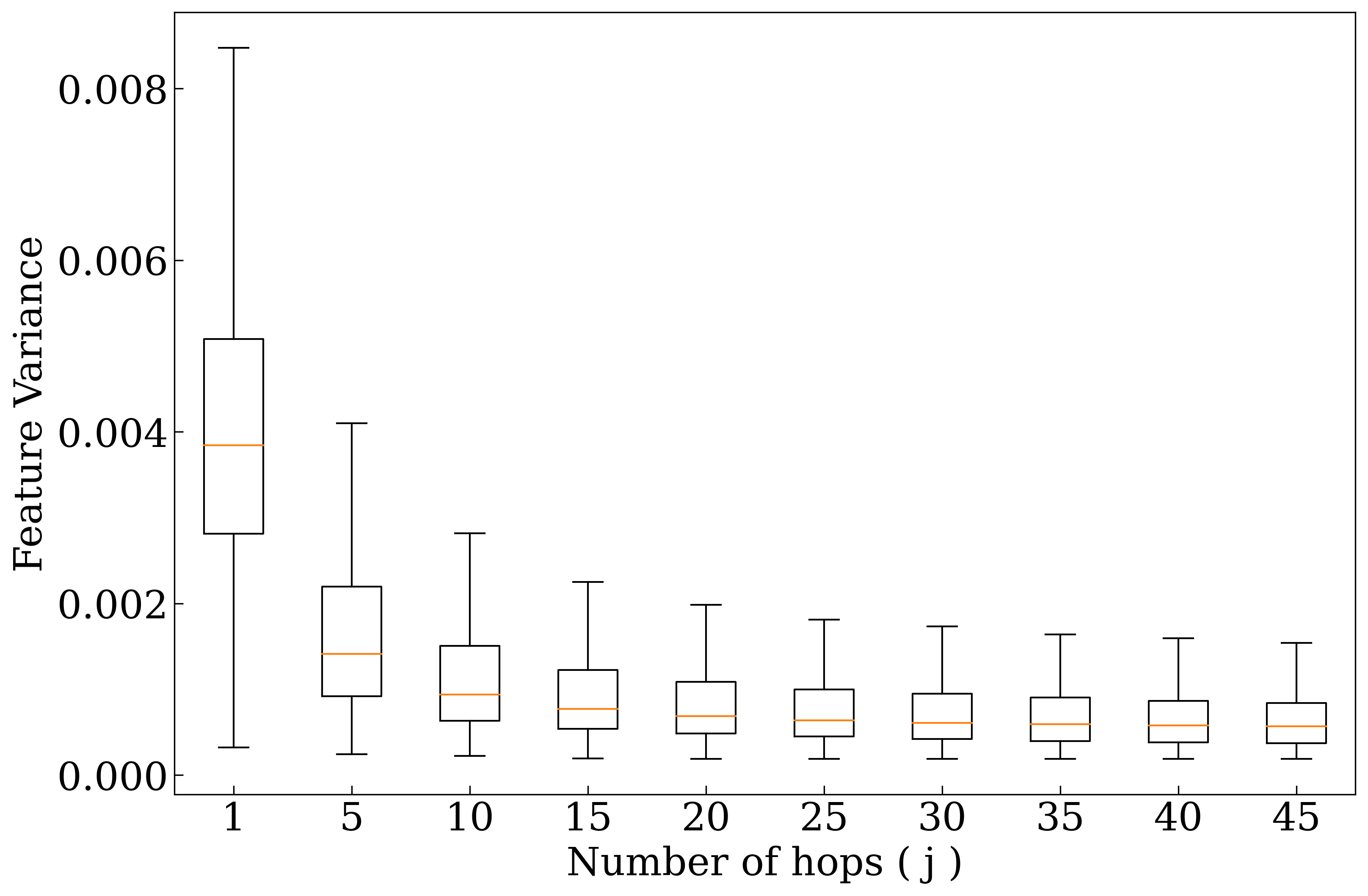}
        \caption{Cora}
        \label{fig:var_Cora}
    \end{subfigure}%
    ~ 
    \begin{subfigure}[t]{0.5\textwidth}
        \centering
        \includegraphics[height=1.5in, width=0.85\textwidth]{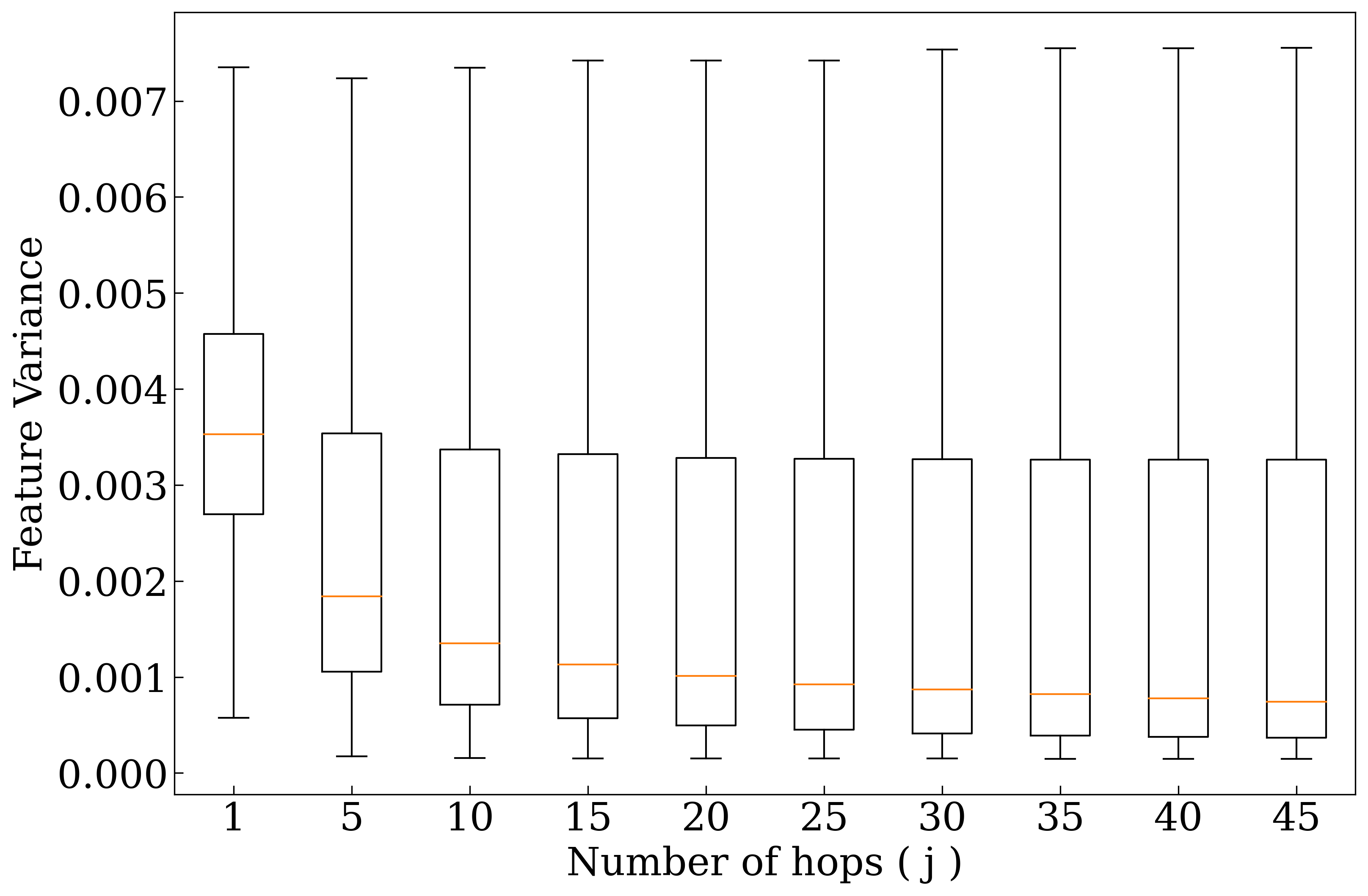}
        \caption{Citeseer}
        \label{fig:var_Citeseer}
    \end{subfigure}
    ~
    \begin{subfigure}[t]{0.5\textwidth}
        \centering
        \includegraphics[height=1.5in, width=0.85\textwidth]{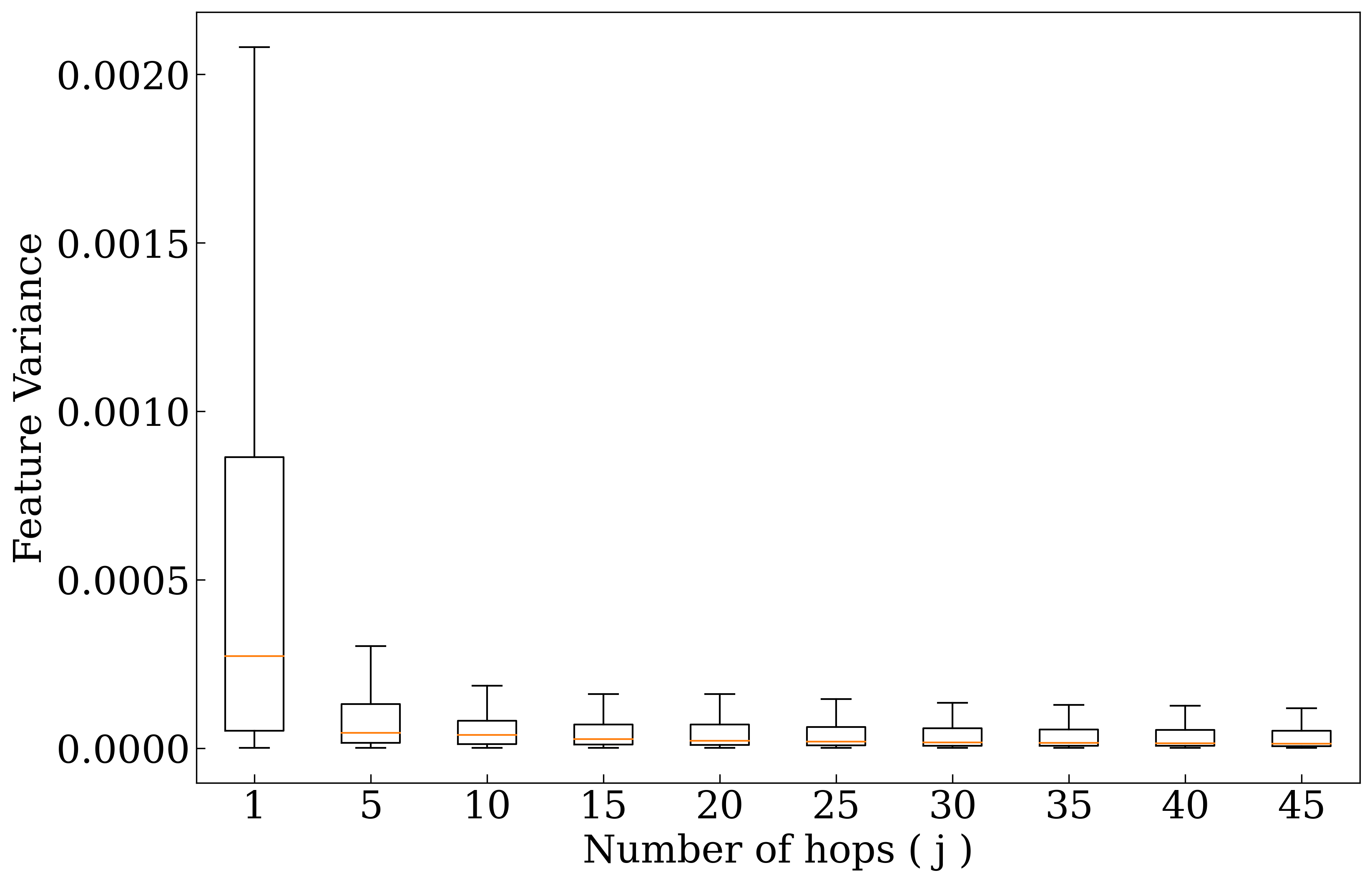}
        \caption{Squirrel}
        \label{fig:var_Squirrel}
    \end{subfigure}%
    ~ 
    \begin{subfigure}[t]{0.5\textwidth}
        \centering
        \includegraphics[height=1.5in, width=0.85\textwidth]{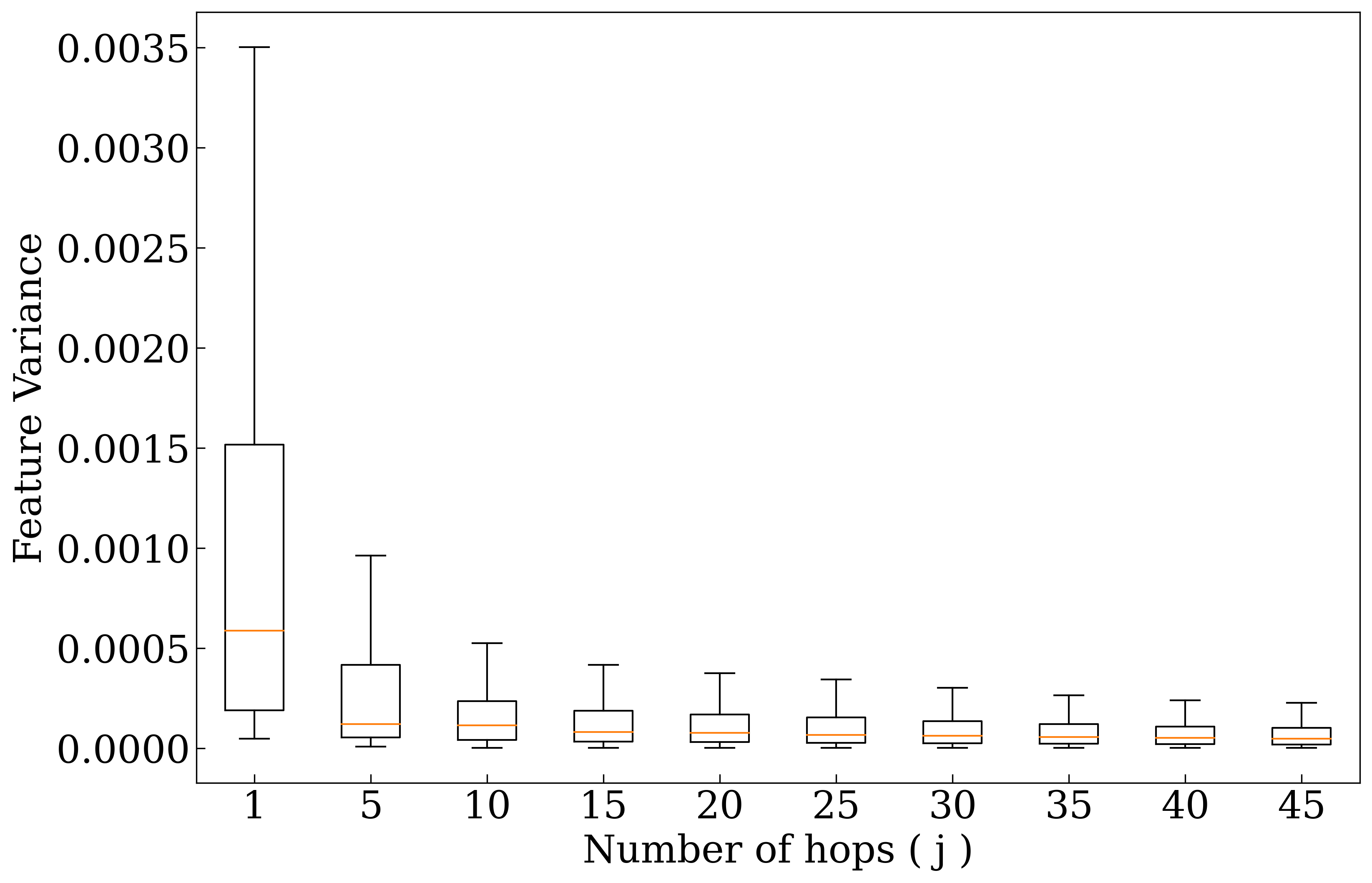}
        \caption{Chameleon}
        \label{fig:Chameleon}
    \end{subfigure}
    \caption{Variance of each dimension of node features}
    \label{fig:GPRoversmoothing}
\end{figure}


\subsubsection{Effect of varying the order of the GPR-GNN Polynomial}
\label{app:vary_order_gprgnn}

In the main paper (Figure~\ref{fig:indistinguishable}), we plot the test accuracies of the~\gprgnn~model while increasing the order of the polynomials for the Cora and Chameleon dataset, respectively. We observe that on increasing the polynomial order, the accuracies do not increase any further. We can show a similar phenomenon on two other datasets, Squirrel and Citeseer, in Figure~\ref{fig:acc_cora_squirrel}.

\begin{figure}[ht]
    \centering
        \begin{subfigure}[t]{0.3\textwidth}
        \centering
        \includegraphics[height=1.5in, width=0.95\textwidth]{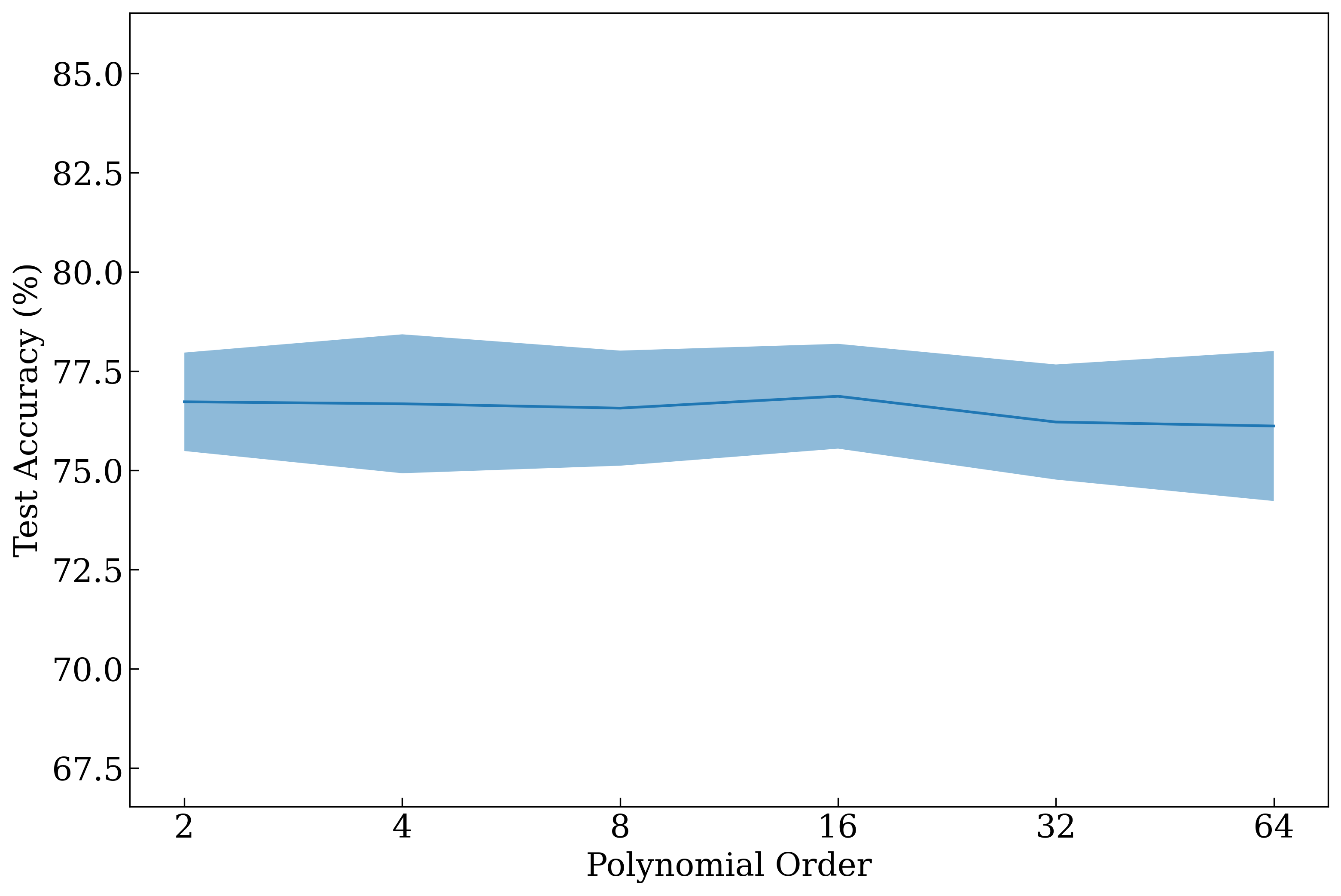}
        \caption{Citeseer}
        \label{fig:app_citeseer_acc}
    \end{subfigure}
    ~
    \begin{subfigure}[t]{0.3\textwidth}
        \centering
        \includegraphics[height=1.5in, width=0.95\textwidth]{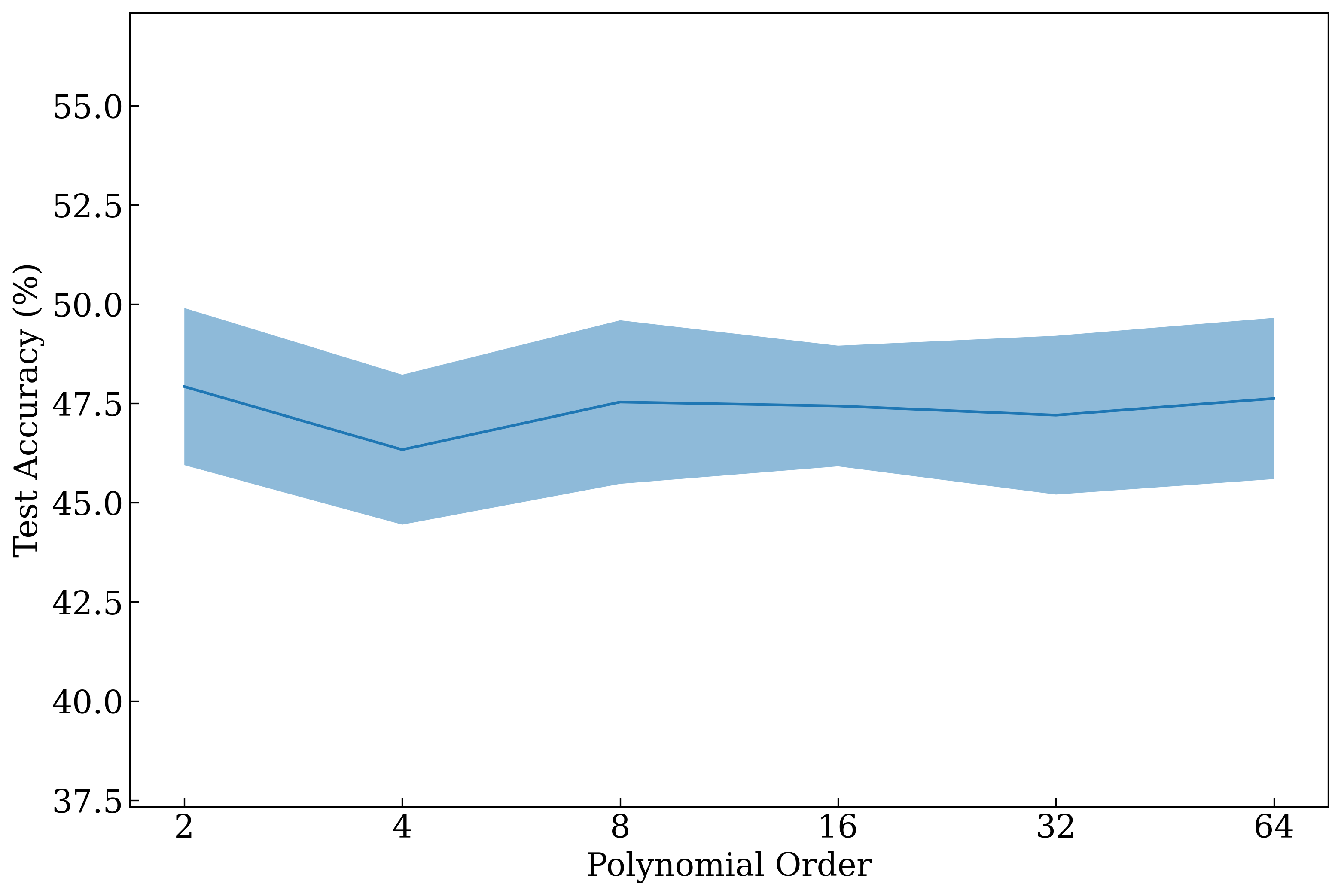}
        \caption{Squirrel}
        \label{fig:app_squirrel_acc}
    \end{subfigure}%
    \caption{Accuracy of the \gprgnn~model on inceasing the order of the polynomial}
    \label{fig:acc_cora_squirrel}
\end{figure}

In Section~\ref{sec:preliminaries} of the main paper, we claim that due to the over-smoothing effect, even on increasing the order of the polynomial, there is no improvement in the test accuracy. Moreover, in Figure~\ref{fig:PPGNN_learned_poly} we can see that our model can learn a complicated filter polynomial while GPR-GNN cannot. This section shows that even on increasing the order of the \gprgnn polynomial, neither does the test accuracy increase nor does the waveform become as complicated as~\ourmodel. See Figure~\ref{fig:GPRoversmoothing_app}.

\begin{figure}[ht]
    \centering
    \begin{subfigure}[t]{0.5\textwidth}
        \centering
        \includegraphics[height=1.5in, width=0.85\textwidth]{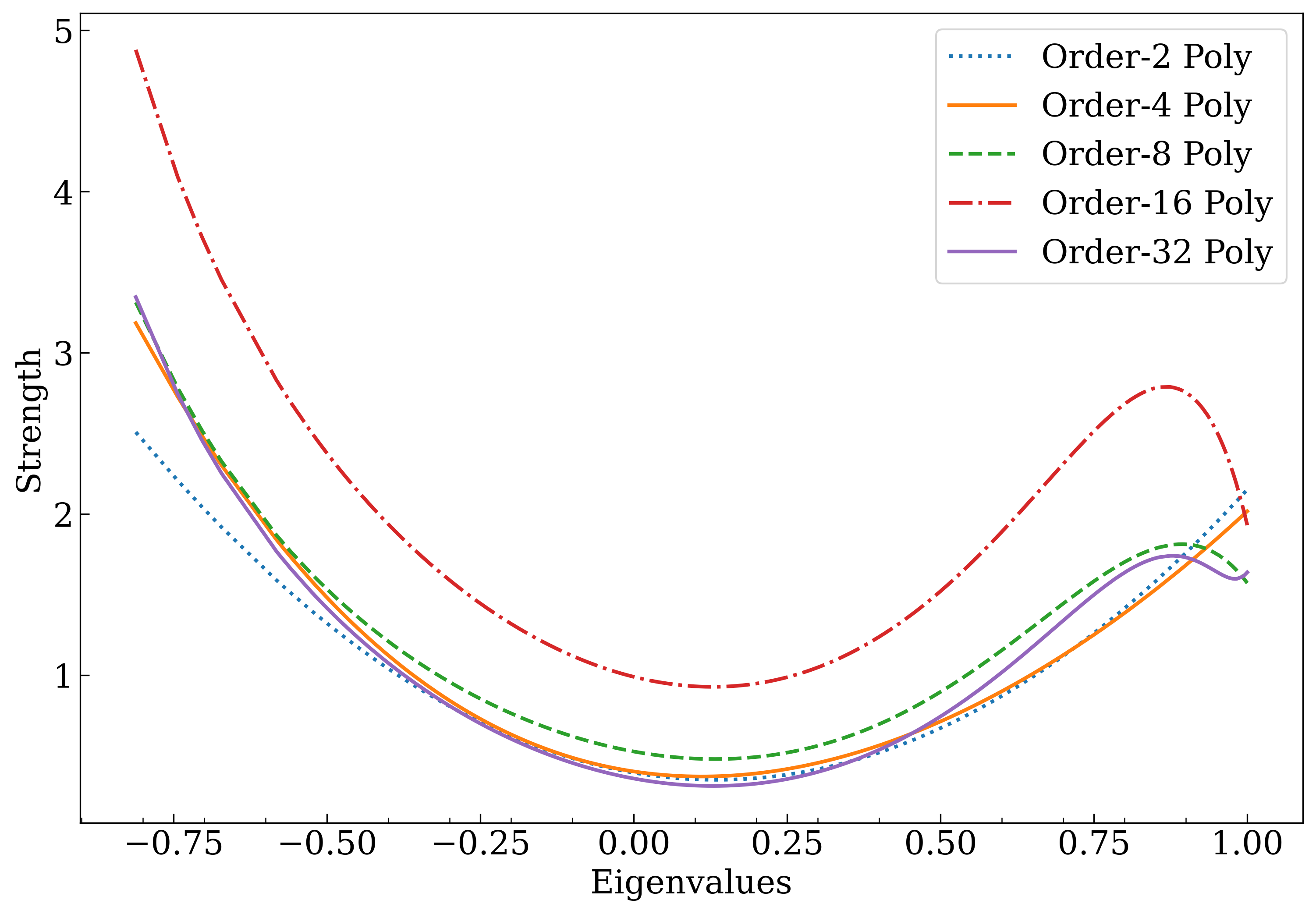}
        \caption{Chameleon}
        \label{gprK_Chameleon}
    \end{subfigure}%
    ~ 
    \begin{subfigure}[t]{0.5\textwidth}
        \centering
        \includegraphics[height=1.5in, width=0.85\textwidth]{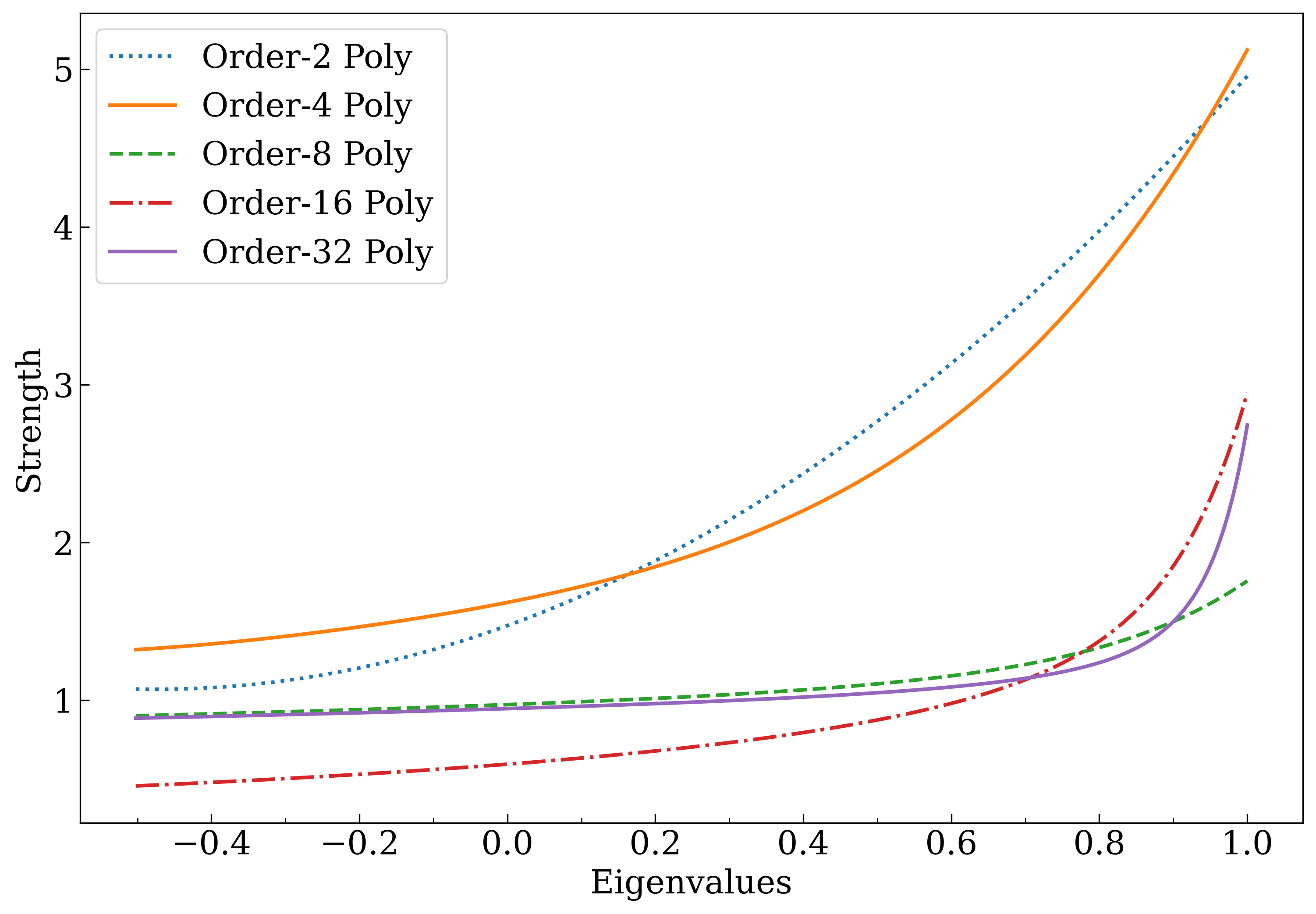}
        \caption{Citeseer}
        \label{gprK_Citeseer}
    \end{subfigure}
    ~
    \begin{subfigure}[t]{0.5\textwidth}
        \centering
        \includegraphics[height=1.5in, width=0.85\textwidth]{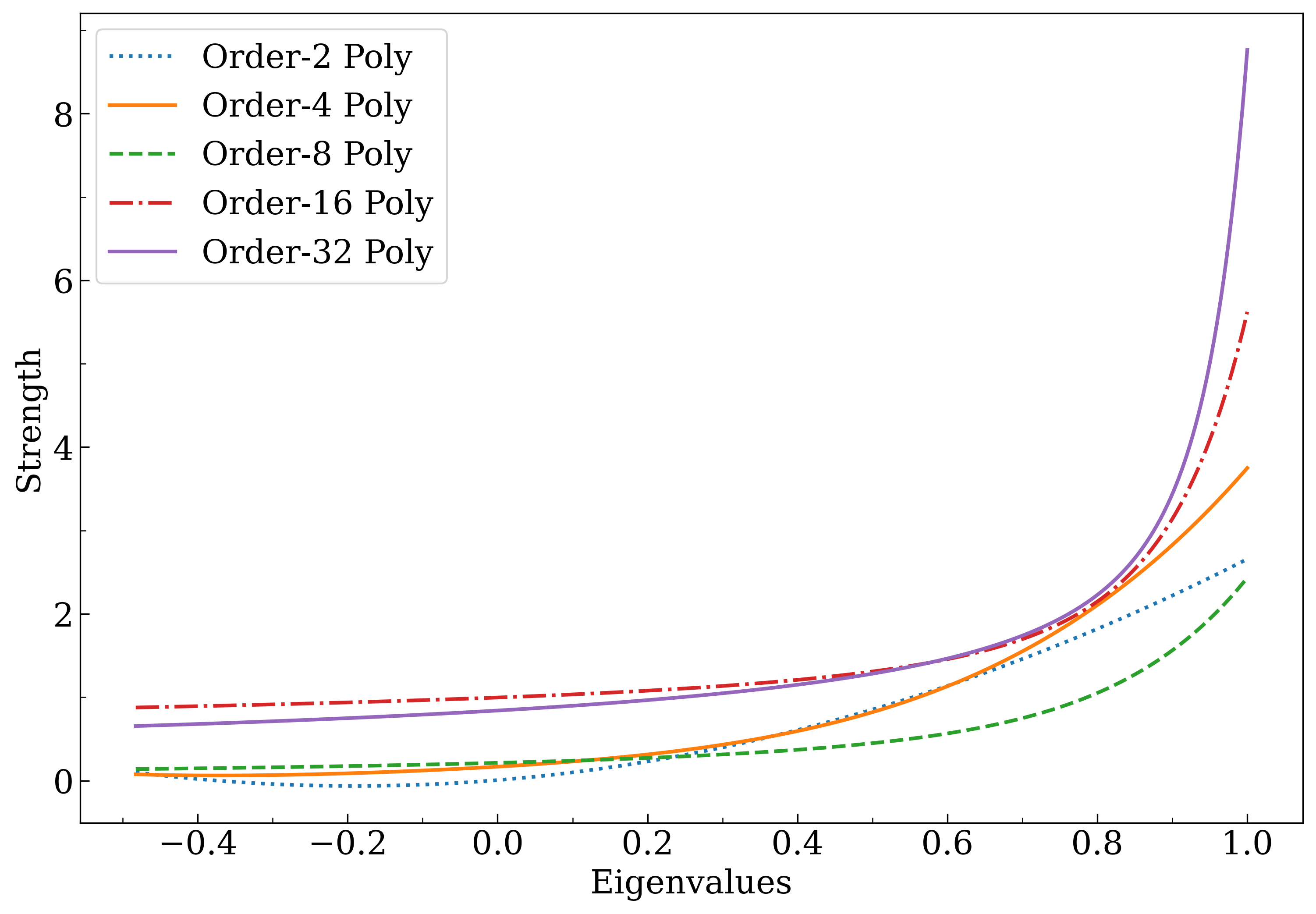}
        \caption{Cora}
        \label{gprK_Cora}
    \end{subfigure}%
    ~ 
    \begin{subfigure}[t]{0.5\textwidth}
        \centering
        \includegraphics[height=1.5in, width=0.85\textwidth]{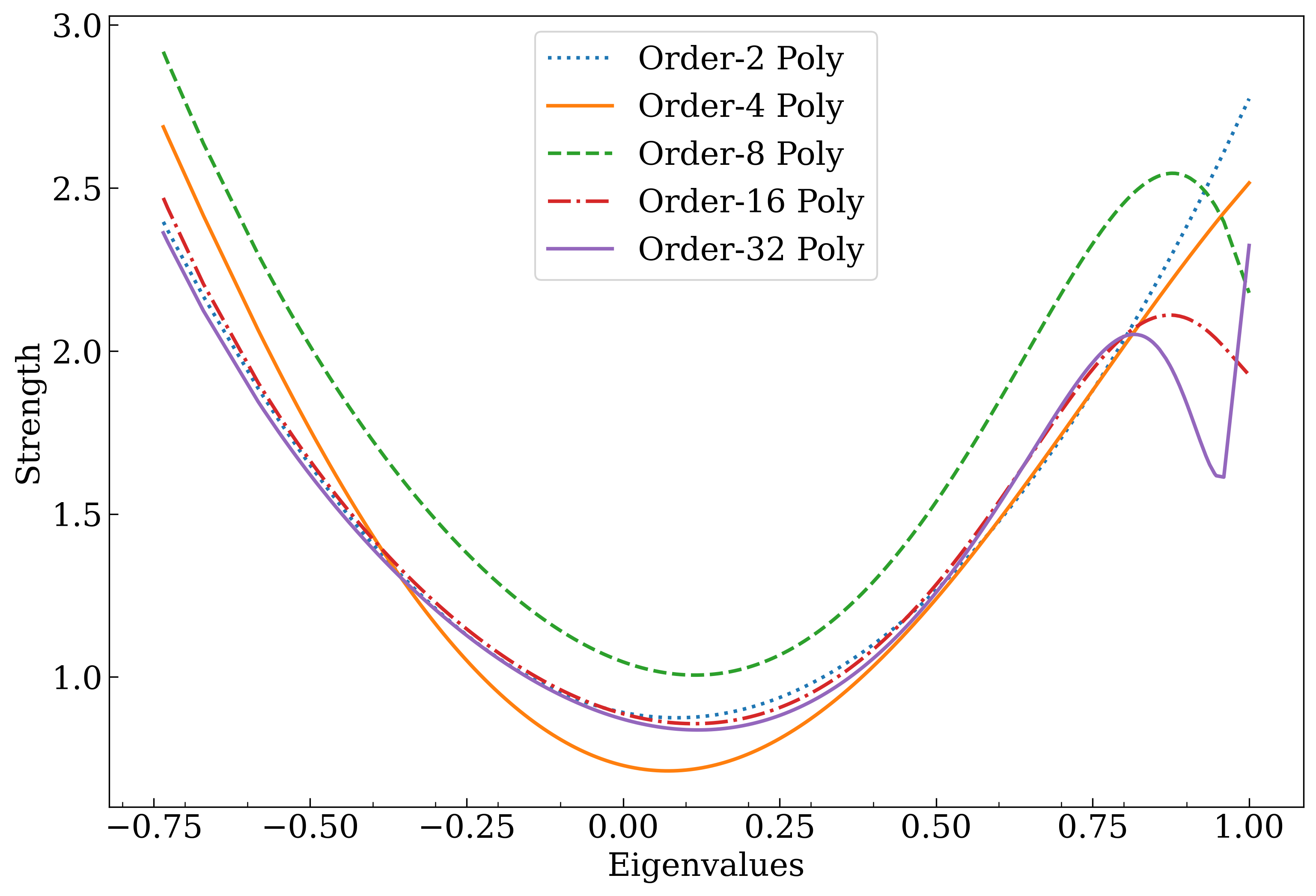}
        \caption{Squirrel}
        \label{gprK_Squirrel}
    \end{subfigure}
    \caption{Varying Polynomial Order in \gprgnn}    
    \label{fig:GPRoversmoothing_app}
\end{figure}

\subsection{Fictitious Polynomial}
\label{app:fictitious_poly}

In Section~\ref{sec:proposedapproach}, we claim that having multiple disjoint low order polynomials can approximate a complicated waveform better than a higher-order polynomial. We create a representative experiment that shows this is indeed true by creating a fictitious complicated polynomial and trying to fit it using both a single polynomial and the disjoint multi polynomial. We observe in Figure~\ref{fig:fictitious} that the lower order disjoint polynomial fits the complicated waveform better than a higher-order single polynomial.

\begin{figure}[ht]
    \centering
    \begin{subfigure}[t]{0.5\textwidth}
        \centering
        \includegraphics[height=1.5in, width=0.95\textwidth]{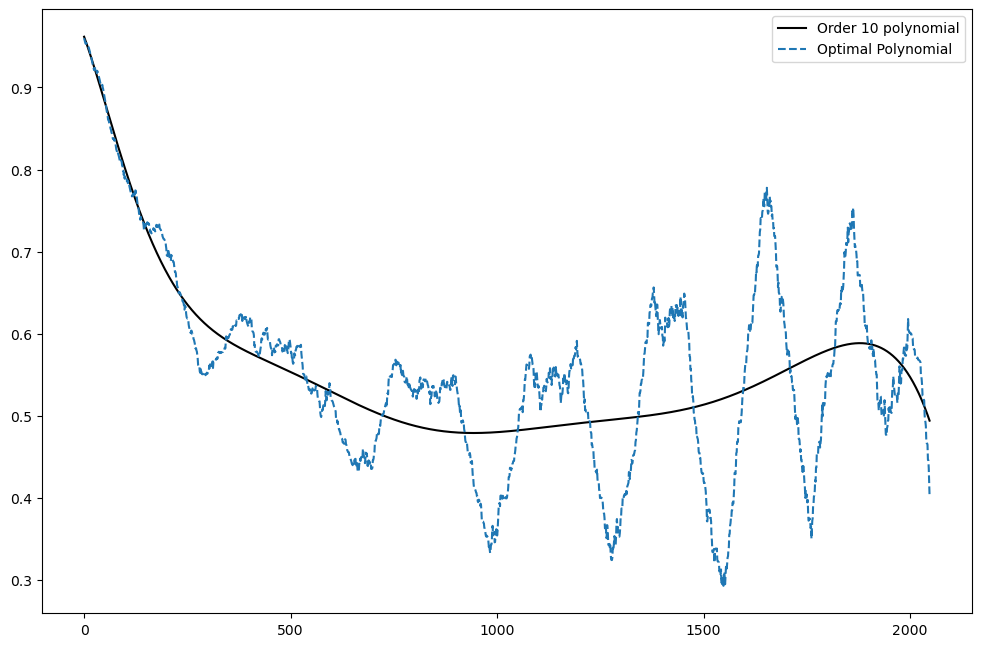}
        \caption{Fitting single 10-degree polynomial}
    \end{subfigure}%
    ~ 
    \begin{subfigure}[t]{0.5\textwidth}
        \centering
        \includegraphics[height=1.5in, width=0.95\textwidth]{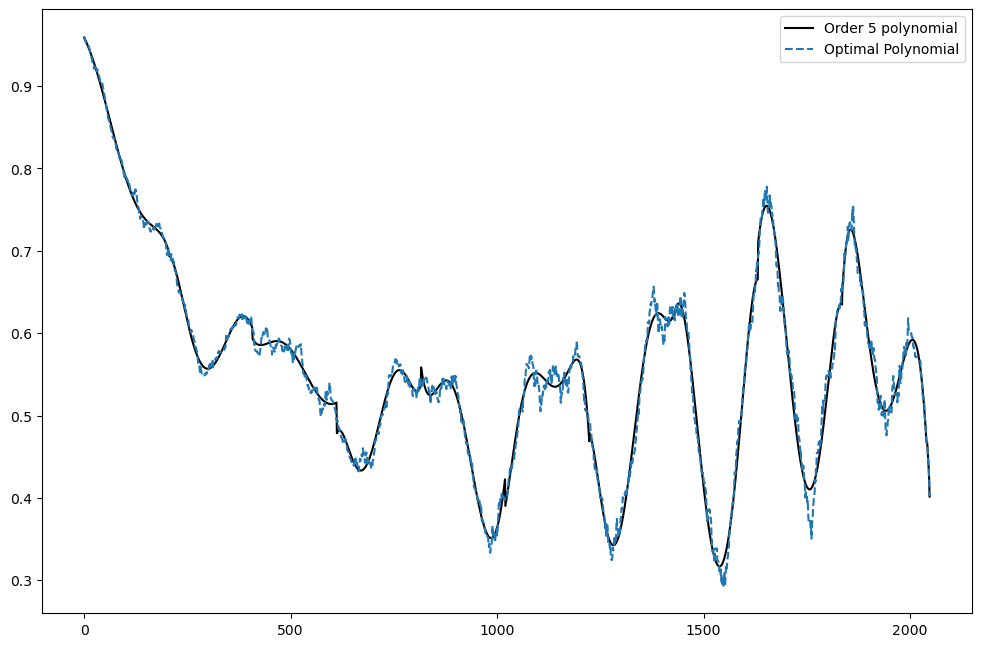}
        \caption{Fitting with 5-order adaptive filters}
    \end{subfigure}
    \caption{To demonstrate the adaptability of adaptive polynomial filter, we try to approximate a complex waveform (blue dashed line) via (a) a single 10 degree polynomial, and (b) a 10-degree polynomial with 10 adaptive filters of order 5. The waveform learnt via the filter is shown in a solid dashed line. The corresponding RMSE are: (a) $3.9011$, (b) $0.2800$}
    \label{fig:fictitious}
\end{figure}


\subsection{Proposed Approach}
\label{appendix:proposed_approach}

\subsubsection{Details regarding boundary regularization}
\label{subsec:app_boundary_reg}
To induce smoothness in the learned filters, we add a regularization term that penalizes squared differences between function values of polynomials at knots (endpoints of contiguous bins). Our regularization term looks as follows:

\begin{gather}
\sum_{i=1}^{m-1} \exp^{-(\sigma_i^{max} - \sigma_{i+1}^{min})^{2}} (h_{i}(\sigma_i^{max}) - h_{i+1}(\sigma_{i+1}^{min}))^{2}
\label{eq:boundaryRegEqn}
\end{gather}

In equation~\ref{eq:boundaryRegEqn}, ${\sigma}_{i}^{max}$ and ${\sigma}_{i}^{min}$ refer to the maximum and minimum eigenvalues in $\sigma_i$ (Refer to Section~\ref{sec:proposedapproach}). This regularization term is added to the Cross-Entropy loss. We perform experiments with this model and report the performance in Table~\ref{tab:boundaryResults}. We observe that we are able to reach similar performance even without the presence of this regularization term. Therefore, majority of the results reported in our Main paper are without this regularization term.

\begin{table}[ht]
\begin{tabular}{c|ccccc}
\textbf{Test Acc}         & \textbf{Computer}                    & \textbf{Chameleon}                   & \textbf{Citeseer}                    & \textbf{Cora}                        & \textbf{Squirrel}                    \\ \hline
\textbf{PPGNN (With Reg)} & \cellcolor[HTML]{FFFFFF}83.53 (1.67) & \cellcolor[HTML]{FFFFFF}67.92 (2.05) & \cellcolor[HTML]{FFFFFF}76.85 (2)    & \cellcolor[HTML]{FFFFFF}88.19 (1.19) & \cellcolor[HTML]{FFFFFF}55.42 (2.1)  \\
\textbf{PPGNN}            & \cellcolor[HTML]{FFFFFF}85.23 (1.36) & \cellcolor[HTML]{FFFFFF}67.74 (2.31) & \cellcolor[HTML]{FFFFFF}76.74 (1.33) & \cellcolor[HTML]{FFFFFF}89.52 (0.85) & \cellcolor[HTML]{FFFFFF}56.86 (1.20)
\end{tabular}
\caption{Results with and without boundary regularization}
\label{tab:boundaryResults}
\end{table}

\subsubsection{Notation Used} Vectors are denoted by lower case bold Roman letters such as
$\mathbf{x}$, and all vectors are assumed to be column vectors. In the paper, $h$ with any sub/super-script refers to a frequency response, which is also considered to be a vector. A superscript $T$ denotes the transpose of a matrix or vector; Matrices are denoted by bold Roman upper case letters, such as $\mathbf{M}$. A field is represented by $\mathbb{K}$; sets of real and complex numbers are denoted by $\mathbb{R}$ and $\mathbb{C}$ respectively. $\mathbb{K}[x_1,\ldots,x_n]$ denotes a multivariate polynomial ring over the field $\mathbb{K}$, in indeterminates $x_1,\ldots,x_n$. Set of $n\times n$ square matrices with entries from some set $\mathbb{S}$ are denoted by $\mathbb{M}_{n}(\mathbb{S})$. Moore-Penrose pseudoinverse of a matrix $\mathbf{A}$ is denoted by $\mathbf{A}^{\dagger}$. Eigenvalues of a matrix are denoted by $\lambda$, with $\lambda_1,\lambda_2,\ldots$ denoting a decreasing order when the eigenvalues are real. A matrix $\mathbf{\Lambda}$ denotes a diagonal matrix of eigenvalues. Set of all eigenvalues, i.e., spectrum, of a matrix is denoted by $\sigma_{\mathbf{A}}$ or simply $\sigma$ when the context is clear. $L_p$ norms are denoted by $\|\cdot\|_p$. Frobenius norm over matrices is denoted by $\|\cdot\|_F$. Norms without a subscript default to $L_2$ norms for vector arguments and Frobenius norm for matrices. $\oplus$ denotes a direct sum. For maps $f_i$ defined from the vector spaces $V_1,\cdots,V_m$, with a map of the form $f:V\mapsto W$, with $V = V_1\oplus V_2\oplus\cdots\oplus V_m$, the the phrase "$f:V\mapsto W$ by mapping $f(\mathbf{v_i})$ to $f_i(g(\mathbf{v_i}))$" means that $f$ maps a vector $\mathbf{v} = \mathbf{v_1}+\ldots+\mathbf{v_m}$ with $\mathbf{v}_i\in V_i$ to $f_1(g(\mathbf{v_1}))+\ldots+f_m(g(\mathbf{v_m}))$
\subsubsection{Proof of Theorem~\ref{thm:better_approximation}}
\begin{theorem*}
\label{thm:better_approximation_appendix}
For any desired frequency response $h^*$, and an integer $K\in\mathbb{N}$, let $\Tilde{h}:=h+h_f$, with $h_f$ having a continuous support over a subset of the spectrum, $\sigma_f$. Assuming $h$ and $h_f$ to be parameterized by independent $K$ and $K'$-order polynomials $p$ and $p_f$ respectively, with $K'\leq K$, then there exists $\Tilde{h}$, such that $\min\|\Tilde{h} - h^*\|_2\leq \min\|h-h^*\|_2$, where the minimum is taken over the polynomial parameterizations. Moreover, for multiple polynomial adaptive filters $h_{f_1},h_{f_2},...,h_{f_m}$ parameterized by independent $K'$-degree polynomials with $K'\leq K$ but having disjoint, contiguous supports, the same inequality holds for $\Tilde{h} = h + \sum_{i=1}^m h_{f_i}$.
\end{theorem*}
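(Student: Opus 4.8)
The plan is to observe that the inequality is a statement about nested hypothesis classes: the family of filters realizable as $\tilde h = h + h_f$ contains the family realizable by $h$ alone, so a least–squares fit over the larger family cannot be worse. First I would recast every parameterization as a linear map. Fix the spectrum $\sigma=\{\lambda_1,\dots,\lambda_n\}$ and view each frequency response as a vector in $\mathbb{R}^n$. A degree-$\le K$ polynomial filter $h=p(\cdot)$ evaluated on $\sigma$ is $\mathbf{V}_K\boldsymbol\alpha$, where $\mathbf{V}_K\in\mathbb{R}^{n\times(K+1)}$ is the Vandermonde matrix of the eigenvalues and $\boldsymbol\alpha$ ranges over all coefficient vectors; hence the set of all such $h$ is the linear subspace $\mathrm{col}(\mathbf{V}_K)\subseteq\mathbb{R}^n$. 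Likewise, $h_f$, supported on $\sigma_f\subseteq\sigma$ and parameterized by a degree-$\le K'$ polynomial, is $\mathbf{W}\boldsymbol\beta$, where $\mathbf{W}\in\mathbb{R}^{n\times(K'+1)}$ carries the monomials of the eigenvalues in $\sigma_f$ in the rows indexed by $\sigma_f$ and zeros elsewhere. Thus the achievable $\tilde h$ range over the linear subspace $\mathcal{U}:=\mathrm{col}(\mathbf{V}_K)+\mathrm{col}(\mathbf{W})$.

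The key point is then immediate: $\boldsymbol\beta=\mathbf{0}$ is an admissible coefficient vector, since the zero polynomial has degree $\le K'$, so $\mathrm{col}(\mathbf{V}_K)\subseteq\mathcal{U}$. Because each hypothesis class is a finite-dimensional (hence closed) linear subspace, the relevant minima are attained and equal the distances from $h^*$ to those subspaces: $\min\|h-h^*\|_2=\|(\mathbf{I}-\mathbf{P}_K)h^*\|_2$ and $\min\|\tilde h-h^*\|_2=\|(\mathbf{I}-\mathbf{P}_{\mathcal{U}})h^*\|_2$, with $\mathbf{P}_K,\mathbf{P}_{\mathcal{U}}$ the orthogonal projectors. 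Since the distance from a fixed point to a larger set can only shrink, $\mathrm{col}(\mathbf{V}_K)\subseteq\mathcal{U}$ gives $\min\|\tilde h-h^*\|_2\le\min\|h-h^*\|_2$, and the minimizing element of $\mathcal{U}$ is the required $\tilde h$.

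For the multiple-filter case with adaptive filters $h_{f_1},\dots,h_{f_m}$ on disjoint, contiguous supports $\sigma_{f_1},\dots,\sigma_{f_m}$, I would define $\mathbf{W}_i$ analogously for each support and set $\mathcal{U}_m:=\mathrm{col}(\mathbf{V}_K)+\sum_{i=1}^m\mathrm{col}(\mathbf{W}_i)$. Choosing every $\boldsymbol\beta_i=\mathbf{0}$ again yields $\mathrm{col}(\mathbf{V}_K)\subseteq\mathcal{U}_m$, and the same projection argument gives the inequality for $\tilde h=h+\sum_{i=1}^m h_{f_i}$.

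Honestly, there is no genuinely hard step here; the argument is monotonicity of least-squares error under enlargement of the approximating subspace. The only points needing a little care are (i) including the zero polynomial among the admissible parameterizations, so that the class nesting is exact and the bound is a true minimum rather than an infimum, and (ii) recording that each class is a closed linear subspace so the minimum is attained. The disjointness/contiguity of the supports and the bound $K'\le K$ are not needed for this inequality — they enter only in the dimension count of Theorem~\ref{thm:space_of_filters}.
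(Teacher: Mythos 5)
Your proof is correct, and it reaches the inequality by a cleaner and more general route than the paper. The paper's proof splits into two cases ($K'=K$ and $K'<K$), imposes simplifying assumptions (each support has size greater than $K$, and all eigenvalues are distinct), and works through explicit decompositions of the error vector into support-restricted pieces, pseudoinverse formulas for the minimizers, and an orthogonality argument showing the total squared error splits as a sum over the pieces. The essential idea of its Case 2 — fix the global polynomial at its single-filter optimum, observe that setting all adaptive coefficients to zero recovers the baseline error, then note that optimizing over the adaptive coefficients can only help — is exactly the containment $\mathrm{col}(\mathbf{V}_K)\subseteq\mathcal{U}$ in your argument, but you package it as monotonicity of the distance from $h^*$ to nested closed subspaces of $\mathbb{R}^n$, which handles both cases at once and needs neither of the paper's assumptions (those are used only to make the subproblems well-posed with unique minimizers, not to establish the inequality). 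What your argument gives up is the extra structure the paper extracts along the way: the explicit minimizers $\gamma^*=\mathbf{V}^{\dagger}h^*$, $\gamma_i^*=\mathbf{V}_i^{\dagger}h_i^*-\mathbf{V}_L^{\dagger}h_L^*$ and the orthogonal decomposition $\norm{\mathbf{e}_g}^2=\sum_i\norm{\mathbf{e}_{g_i}}^2+\norm{\mathbf{e}_{g_L}}^2$, which the authors lean on elsewhere when discussing the structure of the learned filter family. Your two points of care — that the zero polynomial is admissible so the nesting is exact, and that the subspaces are closed so the minima are attained — are exactly the right ones, and your observation that disjointness, contiguity, and $K'\leq K$ are not needed for this inequality is accurate.
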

\begin{proof}
 We make the following simplifying assumptions:
\begin{enumerate}
    \item $\abs{\sigma_{f_i}}>K,\;\; \forall i\in[m]$, i.e., that is all support sizes are lower bounded by $K$ (and hence $K'$) 
    \item All eigenvalues of the reference matrix are distinct
\end{enumerate}

For methods that use a single polynomial filter, the polynomial graph filter, $h_K(\adjS) = diag(\mathbf V\mathbf \gamma)$ where $\mathbf{\gamma}$ is a vector of coefficients (i.e, $\gamma$ parameterizes $h$), with eigenvalues sorted in descending order in components, and $\mathbf V$ is a Vandermonde matrix:
\begin{equation*}
\begin{aligned}
    \mathbf V = 
    \begin{bmatrix} 
    1&\lambda_1&\lambda_1^2&\cdots&\lambda_1^K\\ 
    1&\lambda_2&\lambda_2^2&\cdots&\lambda_2^K\\
    \vdots&\vdots&\vdots&\ddots&\vdots\\
    1&\lambda_n&\lambda_n^2&\cdots&\lambda_n^K\\\end{bmatrix}\\
\end{aligned}
\end{equation*}

And to approximate a frequency response $h^*$, we have the following objective:
\[
\min\|h-h^*\|_2^2 :=
\min_{\gamma}\norm{diag(h^*) - diag(\mathbf V\gamma)}^2_F = \min_{\gamma}\norm{h^* - \mathbf V\gamma}^2_2 = \min_{\gamma}\norm{\mathbf{e}_p(\gamma)}^2_2
 \]
Where $\norm{}_F$ and $\norm{}_2$ are the Frobenius and $L_2$ norms respectively. Due to the assumptions, the system of equations $h^* = \mathbf V\gamma$ is well-defined and has a unique minimizer, $\gamma^* = \mathbf{V}^{\dagger}h^*$, and thus $\|\mathbf{e}_p(\gamma^*)\| = \min_{\gamma}\|\mathbf{e}_p(\gamma)\|$. Next we break this error vector as:
\begin{equation*}
\begin{aligned}
    \mathbf e_p(\gamma^*) &:= h^* - \mathbf V\gamma^* \\ 
    &= \sum_{i=1}^m (h^*_i - \mathbf V_i\gamma^*) + (h^*_L - \mathbf V_L\gamma^*)\\ 
    &:= \sum_{i=1}^m \mathbf e_{p_i}^* + \mathbf e_{p_L}^*
\end{aligned}
\end{equation*}
 Where $\mathbf{e}_{p_i}^* := (h^*_i - \mathbf V_i\gamma^*)$ with similar definition for $\mathbf{e}_{p_L}$; $h^*_i$ is a vector whose value at components corresponding to the set $\sigma(h_{f_i})$ is same as that of $h^*$ and rest are zero. Similarly, $\mathbf V^*_i$ is a matrix whose rows corresponding to the set $\sigma(h_{f_i})$ are same as that of $\mathbf V$ with other rows being zero. Also, $\mathbf V_L = \mathbf V - \sum_{i=0}^m \mathbf V_i$ and $h^*_L = h^* - \sum_{i=0}^m h^*_i$. Note that as a result of this construction, $[\mathbf{e}_{p_i^*}]\cup \mathbf{e}_{p_L^*}$ is a linearly independent set since the supports [$\sigma(h_{f_i})$] form a disjoint set (note the theorem statement). We split the proof in two cases:
 
\textbf{Case 1:} $K'=K$. We now analyze the case where we have $m$ polynomial adaptive filters added, all having an order of $K$, where the objective is $\min\|\Tilde{h}-h^*\|$, which can be written as:
\[
\min_{\gamma,[\gamma_i]}\norm{diag(h^*) - diag\left(\mathbf V\gamma + \sum_{i=0}^m \mathbf V_i\gamma_i\right)}^2_F = \min_{\gamma, [\gamma_i]}\norm{h^* - \mathbf V\gamma - \sum_{i=0}^m \mathbf V_i\gamma_i}^2_2 = \min_{\gamma, [\gamma_i]}\norm{\mathbf e_g(\gamma,[\gamma_i])}^2_2
\]
Before characterizing the above system, we break a general error vector as:
\begin{equation*}
\begin{aligned}
    \mathbf e_g(\gamma,[\gamma_i]) &:= h^* - \mathbf V\gamma - \sum_{i=0}^m \mathbf V_i\gamma_i\\ 
    &= \sum_{i=1}^m (h^*_i - \mathbf V_i(\gamma + \gamma_i)) + (h^*_L - \mathbf V_L\gamma) \\ 
    &:= \sum_{i=1}^m \mathbf e_{g_i} + \mathbf e_{g_L}
\end{aligned}
\end{equation*}
Where $\mathbf{e}_{g_i} := (h^*_i - \mathbf V_i(\gamma+\gamma_i))$ with similar definition for $\mathbf{e}_{g_L}$. Clearly, the systems of equations, $\mathbf e_{g_i} = 0,\;\;\forall i$ and $\mathbf e_{g_L} = 0$ are well-defined due to the assumptions 1 and 2. Since all the systems of equations have independent argument, unlike in the polynomial filter case where the optimization is constrained over a single variable; one can now resort to individual minimization of squared norms of $\mathbf e_{g_i}$ which results in a minimum squared norm of $\mathbf e_g$. Thus, we can set:
\[
\gamma = \mathbf V_L^{\dagger}h^*_L = \gamma_g^*\;\;\;\;\;\gamma_i = \mathbf V_i^{\dagger}h^*_i - \mathbf V_L^{\dagger}h^*_L = \gamma_i^*,\;\;\forall i\in[m] 
\]
to minimize squared norms of $\mathbf e_{g_i}$ and $\mathbf e_{g_L}$. Note that $[\mathbf{e}_{g_i}]\cup \mathbf{e}_{g_L}$ is a linearly independent set since the supports [$\sigma(h_{f_i})$] form a disjoint set and by the above construction, this is also an orthogonal set, and hence we have $\norm{\mathbf e_{g}}^2 = \sum_{i=1}^m\norm{\mathbf e_{g_i}}^2 + \norm{\mathbf e_{g_L}}^2$, and hence the above assignment implies:
\[
\|\mathbf e_g(\gamma_g^*,[\gamma_i^*])\| = \min_{\gamma,[\gamma_i]}\|\mathbf e_g(\gamma_g,[\gamma_i])\| := \min\|\Tilde{h} - h^*\|_2
\]

Hence, it follows that, $\min_{x}\|h_i^* - \mathbf{V}_ix\|^2 = \norm{\mathbf e_{g_i}^*}^2\leq \norm{\mathbf e_{p_i}^*}^2 = \|h_i^* - \mathbf{V}_i\gamma^*\|^2$ and $\min_{x}\|h_L^* - \mathbf{V}_Lx\|^2 =\norm{\mathbf e_{g_L}^*}^2\leq \norm{\mathbf e_{p_L}^*}^2 = \|h_L^* - \mathbf{V}_L\gamma^*\|^2$.
Hence,
\[
\begin{aligned}
\sum_{i=1}^m\norm{\mathbf e_{g_i}^*}^2 + \norm{\mathbf e_{g_L}^*}^2 &\leq \sum_{i=1}^m\norm{\mathbf e_{p_i}^*}^2 + \norm{\mathbf e_{p_L}^*}^2 \\
\min\|\Tilde{h} - h^*\| &\leq \min\|h-h^*\|
\end{aligned}
\]
\textbf{Case 2:} $K'<K$. We demonstrate the inequality showing the existence of an $\Tilde{h}$ that achieves a better approximation error. By definition, the minimum error too will be bounded above by this error. For this, we fix $\gamma$, the parameterization of $h$ as $\gamma = \mathbf{V}^{\dagger}h^* = \gamma_p^*\;(say)$. Note that $\gamma_p*=\argmin_{\gamma}\|\mathbf{e}_p(\gamma)\|$. Now our objective function becomes
\[\begin{aligned}
\mathbf{e}_g(\gamma_p^*,[\gamma_i]) &:= h^* - \mathbf V\gamma_p^* - \sum_{i=0}^m \mathbf V_i'\gamma_i \\
&= \sum_{i=1}^m (h^*_i - \mathbf V_i\gamma_p^* + \mathbf{V}_i'\gamma_i) + (h^*_L - \mathbf V_L\gamma_p^*) \\
&= \sum_{i=1}^m \mathbf e_{g_i}' + \mathbf e_{g_L}'
\end{aligned}\]
Where $h^*_i,h^*_L,\mathbf{V}_i,\mathbf{V}_L$ have same definitions as that in case 1 and $\mathbf{V}_i'$ is a matrix containing first $K'+1$ columns of $\mathbf{V}_i$ as its columns (and hence has full column rank), and, $\gamma_i\in\mathbb{R}^{K'+1}$. By this construction, we have
\[
\|\mathbf{e}_g(\gamma_p^*,\mathbf{0})\| = \min_{\gamma}\|\mathbf{e}_p(\gamma)\| = \|\mathbf{e}_p(\gamma_p^*)\|
\]
Our optimization objective becomes $\min_{[\gamma_i]}\|\mathbf{e}_g(\gamma_p^*,[\gamma_i])\|$, which is easy since the problem is well-posed by assumption 1 and 2. The unique minimizer of this is obtained by setting
\[\gamma_i = \mathbf{V}'^{\dagger}_i(h^*_i - \mathbf V_i\gamma_p^*) = \gamma_i^*\;(say)\;\;\forall i\in[m]\]
Now,
\[
\begin{aligned}
\|\mathbf{e}_g(\gamma_p^*,[\gamma_i^*])\| = \min_{[\gamma_i]} \|\mathbf{e}_g(\gamma_p^*, [\gamma_i])\| \leq \|\mathbf{e}_g(\gamma_p^*,\mathbf{0})\| = \min_{\gamma}\|\mathbf{e}_p(\gamma)\| = \min\|h-h^*\|
\end{aligned}
\]
By the definition of minima, $\min_{\gamma, [\gamma_i]} \|\mathbf{e}_g(\gamma, [\gamma_i])\|\leq\min_{[\gamma_i]} \|\mathbf{e}_g(\gamma_p^*, [\gamma_i])\|$, and by the definition, $\min\|\Tilde{h}-h^*\| = \min_{\gamma, [\gamma_i]} \|\mathbf{e}_g(\gamma, [\gamma_i])\|$, we have:
\[
\min\|\Tilde{h}-h^*\|  \leq \min\|h-h^*\|
\]
\end{proof}

\subsubsection{Proof of Theorem~\ref{thm:space_of_filters}}
\begin{theorem*}
\label{thm:space_of_filters_appendix}
Define $\mathbb{H}:=\{h(\mathbf{\cdot})\;|\;\forall\text{ possible K-degree polynomial parameterizations of $h$}\}$ to be the set of all K-degree polynomial filters, whose arguments are $n\times n$ diagonal matrices, such that a filter response over some $\mathbf{\Lambda}$ is given by $h(\mathbf{\Lambda})$ for $h(\cdot)\in\mathbb{H}$. Similarly $\mathbb{H}':=\{ \Tilde{h}(\cdot)\;|\;\forall\text{ possible polynomial parameterizations of }\Tilde{h}\}$ is set of all filters learn-able via \ourmodel~, with $\Tilde{h} = h + h_{f_1} + h_{f_2}$, where $h$ is parameterized by a K-degree polynomial supported over entire spectrum, $h_{f_1}$ and $h_{f_2}$ are adaptive filters parameterized by independent $K'$-degree polynomials which only act on top and bottom $t$ diagonal elements respectively, with $t<n/2$ and $K'\leq K$; then $\mathbb{H}$ and $\mathbb{H}'$ form a vector space, with $\mathbb{H}\subset \mathbb{H}'$. Also, $\frac{\text{dim}(\mathbb{H}')}{\text{dim}(\mathbb{H})} = \frac{K+2K'+3}{K+1}$.
\end{theorem*}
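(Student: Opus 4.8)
The plan is to realize both $\mathbb{H}$ and $\mathbb{H}'$ as column spaces of explicit Vandermonde-type matrices, so that the vector-space claims are automatic and everything reduces to one rank computation. Fix the reference matrix $\tadj$ with eigenvalues $\lambda_1>\lambda_2>\cdots>\lambda_n$; I would carry along the standing assumptions of Theorem~\ref{thm:better_approximation} (the eigenvalues are distinct, and there are enough of them — concretely $t\ge K'+1$ and $n-2t\ge K+1$). Write $\mathbf V\in\mathbb R^{n\times(K+1)}$ for the Vandermonde matrix in these eigenvalues. A degree-$K$ polynomial filter with coefficient vector $\gamma\in\mathbb R^{K+1}$ has response $\mathbf V\gamma$, so under the evaluation map sending a filter to its response on $\mathbf\Lambda=\mathrm{diag}(\lambda_1,\ldots,\lambda_n)$, the set $\mathbb{H}$ is identified with $\mathrm{col}(\mathbf V)$; since $n\ge K+1$ and the $\lambda_i$ are distinct, $\mathbf V$ has full column rank, so $\mathbb{H}$ is a vector space with $\dim(\mathbb{H})=K+1$. (A one-sentence remark is needed here that this evaluation map is a bijection on filters exactly because $n\ge K+1$, so no two distinct degree-$\le K$ polynomials induce the same filter on $\tadj$.)

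Next I would set up $\mathbb{H}'$ analogously. Let $\mathbf V_1'\in\mathbb R^{n\times(K'+1)}$ be the matrix whose top $t$ rows coincide with the first $K'+1$ columns of $\mathbf V$ and whose remaining rows are zero, and let $\mathbf V_2'$ be the analogue supported on the bottom $t$ rows. Then $\tilde h=h+h_{f_1}+h_{f_2}$ with parameters $(\gamma,\gamma_1,\gamma_2)\in\mathbb R^{K+1}\times\mathbb R^{K'+1}\times\mathbb R^{K'+1}$ has response $\mathbf V\gamma+\mathbf V_1'\gamma_1+\mathbf V_2'\gamma_2$, hence $\mathbb{H}'=\mathrm{col}\big([\,\mathbf V\mid\mathbf V_1'\mid\mathbf V_2'\,]\big)$, which is the image of a linear map and therefore a subspace of $\mathbb R^n$ — this settles the vector-space assertion for $\mathbb{H}'$. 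Taking $\gamma_1=\gamma_2=0$ gives $\mathbb{H}\subseteq\mathbb{H}'$.

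The core step is to show the $n\times(K+2K'+3)$ matrix $[\,\mathbf V\mid\mathbf V_1'\mid\mathbf V_2'\,]$ has full column rank, giving $\dim(\mathbb{H}')=K+2K'+3$ and the stated ratio. Suppose $\mathbf V\gamma+\mathbf V_1'\gamma_1+\mathbf V_2'\gamma_2=0$. Restrict to the middle rows $t+1,\ldots,n-t$, where $\mathbf V_1'$ and $\mathbf V_2'$ vanish: this gives $\mathbf V_{\mathrm{mid}}\gamma=0$, and $\mathbf V_{\mathrm{mid}}$ is Vandermonde with $n-2t\ge K+1$ distinct nodes, so it has full column rank and $\gamma=0$. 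The residual identity $\mathbf V_1'\gamma_1+\mathbf V_2'\gamma_2=0$ restricted to the top $t$ rows is a $t\times(K'+1)$ Vandermonde system in $\gamma_1$ with $t\ge K'+1$ distinct nodes, forcing $\gamma_1=0$, and symmetrically $\gamma_2=0$. Strictness $\mathbb{H}\subsetneq\mathbb{H}'$ follows by the same device: the column $\mathbf V_1'\mathbf e_1$ (the indicator of the top $t$ rows) cannot lie in $\mathrm{col}(\mathbf V)$, because $\mathbf V\gamma$ vanishing on the bottom $n-t\ge K+1$ rows would force $\gamma=0$, contradicting nonvanishing on the top rows. Dividing the dimensions yields $\dim(\mathbb{H}')/\dim(\mathbb{H})=(K+2K'+3)/(K+1)$.

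I expect the main obstacle to be bookkeeping around the size hypotheses rather than any deep difficulty: the statement only records $t<n/2$, but the clean count genuinely needs each of the three index blocks (top $t$, middle $n-2t$, bottom $t$) to hold at least as many distinct eigenvalues as one plus the relevant polynomial degree. I would therefore make these genericity/size conditions explicit at the outset, mirroring the assumptions in the proof of Theorem~\ref{thm:better_approximation}, and note that if they fail the formula degrades to the obvious $\min$ of the block ranks. The only other subtlety is the faithfulness of the polynomial-to-response identification discussed above, which deserves an explicit line but is otherwise immediate.
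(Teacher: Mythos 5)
Your proposal is correct in substance but takes a genuinely different route from the paper. The paper works \emph{formally}: it embeds filters into the matrix ring $\mathbb{M}_n(\mathbb{K}[x_1,\ldots,x_n])$ over a polynomial ring in $n$ indeterminates (one per diagonal slot), exhibits explicit bases $\mathcal{B}_h$ and $\mathcal{B}_{h'}$ for $\mathbb{H}$ and $\mathbb{H}'$ as images of an injective linear map $\phi$, and reads off the dimensions by counting basis elements --- $(K+1)$ versus $(K+1)+2(K'+1)$. Because the diagonal entries are treated as indeterminates rather than as a fixed spectrum, no genericity hypotheses are needed: linear independence of the basis is automatic, which matches the fact that $\mathbb{H}$ is defined as a set of filter \emph{functions} on all diagonal matrices, not of responses on one $\mathbf{\Lambda}$. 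You instead evaluate at a fixed $\mathbf{\Lambda}$ and realize $\mathbb{H}$ and $\mathbb{H}'$ as column spaces of Vandermonde-type matrices, reducing everything to a block rank computation. This buys a concrete, self-contained argument and an explicit witness for strictness (the indicator of the top $t$ rows), but at the cost of the extra hypotheses you correctly flag: distinct eigenvalues and $t\geq K'+1$, $n-2t\geq K+1$, without which the evaluated spaces can collapse and the dimension formula degrades. Your identification of the filter space with the response space via injectivity of the evaluation map is the one point where your object genuinely differs from the paper's, so the explicit remark you promise there is necessary, not optional. One further note: your count $\dim(\mathbb{H}')=K+2K'+3$ agrees with the ratio in the theorem statement and with the size of the basis $\mathcal{B}_{h'}$ the paper enumerates; the paper's concluding line ``$\dim(\mathbb{H}')=K+2K'+1$'' appears to be a typo.
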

\begin{proof}
We start by constructing the abstract spaces on top of the polynomial vector space. Consider the set of all the univariate polynomials having degree at most $K$ in the vector space over the ring $\mathbb{K}^x_n:=\mathbb{K}[x_1,\ldots,x_n]$ where $\mathbb{K}$ is the field of real numbers. Partition this set into $n$ subsets, say $V_1,\ldots,V_n$, such that for $i\in[n]$,  $V_i$ contains all polynomials of degree up to $K$ in $x_i$. It is easy to see that $V_1,\ldots,V_n$ are subspaces of $\mathbb{K}[x_1,\ldots,x_n]$. Define $V = V_1 \oplus V_2 \oplus \cdots \oplus V_n$ where $\oplus$ denotes a direct sum. Define the matrix $D_{i}[c]$ whose $(i,i)^{\text{th}}$ entry is $c$ and all the other entries are zero. For $i\in[n]$, define linear maps $\phi_i:V_i \rightarrow \mathbb{M}_{n}\left(\mathbb{K}^x_n\right)$ by $f(x_i)\mapsto D_{i}[f(x_i)]$. Im($\phi_i$) forms a vector space of all diagonal matrices, whose $(i,i)$ entry is the an element of $V_i$.
Generate a linear map $\phi:V\rightarrow \mathbb{M}_{n}(\mathbb{K}^x_n)$ by mapping $\phi(f(x_i))$ to $ \phi_i(f(x_i))$ for all $i\in[n]$ as the components of the direct sum present in its argument. Note that $\phi_i$ for $i\in[n]$ are injective maps, making $\phi$ an injective map. This implies that $\mathbb{H}\subset \text{Im}(\phi)$ is a subspace with basis $\mathcal{B}_h:=$ $\{\phi(x_1^0+\cdots+x_n^0), \phi(x_1+\cdots+x_n), \ldots, \phi(x_1^K+\cdots+x_n^K)\}$, making $\text{dim}(\mathbb{H}) = K+1$. Similarly we have, $\mathbb{H}'\subset \text{Im}(\phi)$, a subspace with basis $\mathcal{B}_{h'}:=\mathcal{B}_h \bigcup\;\{\phi(x_1^0+\cdots+x_t^0+0+\cdots+0), \phi(x_1+\cdots+x_t+0+\cdots+0), \ldots, \phi(x_1^{K'}+\cdots+x_t^{K'}+0+\cdots+0)\}\bigcup\;\{\phi(0+\cdots+0+x_{n-t+1}^0+\cdots+x_n^0), \phi(0+\cdots+0+x_{n-t+1}+\cdots+x_n), \ldots, \phi(0+\cdots+0+x_{n-t+1}^{K'}+\cdots+x_n^{K'})\}$ where $x_i^0$ and $0$ are the corresponding multiplicative and additive identities of $\mathbb{K}_n^x$, implying $\mathbb{H}\subset \mathbb{H}'$ and $\text{dim}(\mathbb{H}') = K+2K'+1$. 
\end{proof}

\subsubsection{Proof of Corollary~\ref{thm:space_of_graphs}}
\begin{corollary*}
\label{thm:space_of_graphs_appendix}
The corresponding adapted graph families $\mathbb{G}:=\{\mathbf{U}^T h(\mathbf{\cdot})\mathbf{U}^T\;|\;\forall h(\cdot)\in\mathbb{H}\}$ and $\mathbb{G'}:=\{\mathbf{U}^T \Tilde{h}(\cdot)\mathbf{U}^T\;|\;\forall \Tilde{h}(\cdot)\in\mathbb{H}'\}$ for any unitary matrix $\mathbf{U}$ form a vector space, with $\mathbb{G}\subset \mathbb{G}'$ and $\frac{\text{dim}(\mathbb{G}')}{\text{dim}(\mathbb{G})} = \frac{K+2K'+3}{K+1}$.
\end{corollary*}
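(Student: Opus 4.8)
The plan is to exhibit $\mathbb{G}$ and $\mathbb{G}'$ as the images of $\mathbb{H}$ and $\mathbb{H}'$ under a single linear, injective map, and then transfer the vector-space structure, the strict inclusion, and the dimension count from Theorem~\ref{thm:space_of_filters} essentially for free. So nothing new needs to be proved beyond checking that conjugation-type multiplication by a fixed unitary matrix is a well-behaved linear map on the ambient space used in the proof of Theorem~\ref{thm:space_of_filters}.

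Concretely, fix the unitary matrix $\mathbf{U}$ and define $\psi:\mathbb{M}_n(\mathbb{K}^x_n)\to\mathbb{M}_n(\mathbb{K}^x_n)$ by $\psi(\mathbf{M}) = \mathbf{U}^T\mathbf{M}\mathbf{U}^T$, i.e. left- then right-multiplication by $\mathbf{U}^T$. First I would note that $\psi$ is $\mathbb{K}$-linear: each entry of $\psi(\mathbf{M})$ is a $\mathbb{K}$-linear combination of the (polynomial) entries of $\mathbf{M}$, since $\mathbf{U}$ has scalar entries and scalar multiplication commutes with the ring structure of $\mathbb{K}^x_n$. Next, $\psi$ is injective: $\mathbf{U}$ is unitary, hence invertible, so $\mathbf{U}^T\mathbf{M}\mathbf{U}^T=\mathbf{0}$ forces $\mathbf{M}=\mathbf{0}$ (equivalently, $\psi$ is a composition of two invertible linear maps, hence a bijection of $\mathbb{M}_n(\mathbb{K}^x_n)$). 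By the definitions of $\mathbb{G}$ and $\mathbb{G}'$ in the statement, $\mathbb{G}=\psi(\mathbb{H})$ and $\mathbb{G}'=\psi(\mathbb{H}')$.

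Now I invoke Theorem~\ref{thm:space_of_filters}: $\mathbb{H}$ and $\mathbb{H}'$ are subspaces of $\mathbb{M}_n(\mathbb{K}^x_n)$ with $\mathbb{H}\subsetneq\mathbb{H}'$ and with the stated dimensions. Since the image of a subspace under a linear map is a subspace, $\mathbb{G}=\psi(\mathbb{H})$ and $\mathbb{G}'=\psi(\mathbb{H}')$ are vector spaces; monotonicity of images gives $\psi(\mathbb{H})\subseteq\psi(\mathbb{H}')$, i.e. $\mathbb{G}\subseteq\mathbb{G}'$. Because $\psi$ is injective, its restriction to $\mathbb{H}'$ (a fortiori to $\mathbb{H}$) is a linear isomorphism onto its image, so $\dim\mathbb{G}=\dim\mathbb{H}$ and $\dim\mathbb{G}'=\dim\mathbb{H}'$, whence $\dim(\mathbb{G}')/\dim(\mathbb{G}) = \dim(\mathbb{H}')/\dim(\mathbb{H})$, the asserted ratio; injectivity together with $\mathbb{H}\subsetneq\mathbb{H}'$ also upgrades the inclusion to $\mathbb{G}\subsetneq\mathbb{G}'$. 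A clean way to package the dimension count is to observe that $\psi$ sends the bases $\mathcal{B}_h$, $\mathcal{B}_{h'}$ constructed in the proof of Theorem~\ref{thm:space_of_filters} to spanning sets of $\mathbb{G}$, $\mathbb{G}'$ which remain linearly independent by injectivity, hence bases. The only point that deserves any care — and the closest thing to an obstacle — is the bookkeeping that $\mathbf{U}$ is a \emph{scalar} unitary matrix acting on \emph{matrices of polynomials}, so that $\psi$ really is a $\mathbb{K}$-linear endomorphism of the same ambient space $\mathbb{M}_n(\mathbb{K}^x_n)$ in which $\mathbb{H},\mathbb{H}'$ were shown to be subspaces; once that is granted, the corollary is immediate.
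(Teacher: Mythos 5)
Your proposal is correct and follows essentially the same route as the paper: the paper also realizes $\mathbb{G}$ and $\mathbb{G}'$ as images of $\mathbb{H}$ and $\mathbb{H}'$ under injective linear multiplication maps built from the unitary $\mathbf{U}$ (there written as a composition $f_1\circ f_2$ of one-sided multiplications rather than your single map $\psi$) and then transfers the structure from Theorem~4.2. If anything, your write-up is slightly more complete, since you explicitly record that injectivity preserves dimensions and hence the ratio $\dim(\mathbb{G}')/\dim(\mathbb{G})$, a step the paper's proof leaves implicit.
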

\begin{proof}
Consider the injective linear maps $f_1,f_2:\mathbb{M}_{n}(\mathbb{K}^x_n)\rightarrow \mathbb{M}_{n}(\mathbb{K}^x_n)$ as $f_1(\mathbf{A})=\mathbf{U}^T\mathbf{A}$ and $f_2(\mathbf{A})=\mathbf{AU}$. Define $f_3: \mathbb{H} \rightarrow \mathbb{M}_{n}(\mathbb{K}^x_n)$ and $f_4: \mathbb{H'} \rightarrow \mathbb{M}_{n}(\mathbb{K}^x_n)$ as $f_3(\mathbf A) = (f_1\circ f_2)(\mathbf A)$ for $\mathbf A\in\mathbb{H}$ and $f_4(\mathbf A) = (f_1\circ f_2)(\mathbf A)$ for $\mathbf A\in\mathbb{H}'$. Since $\mathbf{U}$ is given to be a unitary matrix, $f_3$ and $f_4$ are monomorphisms. Using this with the result from Theorem 4.2, $\mathbb{H}\subset \mathbb{H}'$, we have $\mathbb{G}\subset \mathbb{G}'$.
\end{proof}

\subsection{Experiments}
\label{appendix:exp}
\subsubsection{Datasets}
\label{app:dataset_stats}
We evaluate on multiple benchmark datasets to show the effectiveness of our approach. Detailed statistics of the datasets used are provided in Table~\ref{tab:stats_table}. We borrowed \textbf{Texas}, \textbf{Cornell}, \textbf{Wisconsin} from WebKB\footnote{http://www.cs.cmu.edu/afs/cs.cmu.edu/project/theo-11/www/wwkb}, where nodes represent web pages and edges denote hyperlinks between them. \textbf{Actor} is a co-occurrence network borrowed from~\cite{actordataset}, where nodes correspond to an actor, and and edge represents the co-occurrence on the same Wikipedia page. \textbf{Chameleon}, \textbf{Squirrel} are borrowed from~\cite{chameleondataset}. Nodes correspond to web pages and edges capture mutual links between pages. For all benchmark datasets, we use feature vectors, class labels from~\cite{supergat}. For datasets in (Texas, Wisconsin, Cornell, Chameleon, Squirrel, Actor), we use 10 random splits (48\%/32\%/20\% of nodes for train/validation/test set) from~\cite{geomgcn}. We borrowed \textbf{Cora}, \textbf{Citeseer}, and \textbf{Pubmed} datasets and the corresponding train/val/test set splits from \cite{geomgcn}. The remaining datasets were borrowed from \cite{supergat}. We follow the same dataset setup mentioned in \cite{supergat} to create 10 random splits for each of these datasets. We also experiment with two slightly larger datasets Flickr \cite{flickr} and OGBN-arXiv \cite{ogbnarxiv}. We use the publicly available splits for these datasets.

\begin{table*}[ht]
\resizebox{\textwidth}{!}{%
\begin{tabular}{c|ccccccccccccccc}
\textbf{Properties}      & \textbf{Texas} & \textbf{Wisconsin} & \textbf{Actor} & \textbf{Squirrel} & \textbf{Chameleon} & \textbf{Cornell} & \textbf{Flickr} & \textbf{Cora-Full} & \textbf{OGBN-arXiv} & \textbf{Wiki-CS} & \textbf{Citeseer} & \textbf{Pubmed} & \textbf{Cora} & \textbf{Computer} & \textbf{Photos} \\ \hline
\textbf{Homophily Level} & 0.11           & 0.21               & 0.22           & 0.22              & 0.23               & 0.30             & 0.32            & 0.59               & 0.63                & 0.68             & 0.74              & 0.80            & 0.81          & 0.81              & 0.85            \\
\textbf{\#Nodes}          & 183            & 251                & 7600           & 5201              & 2277               & 183              & 89250           & 19793              & 169343              & 11701            & 3327              & 19717           & 2708          & 13752             & 7650            \\
\textbf{\#Edges}          & 492            & 750                & 37256          & 222134            & 38328              & 478              & 989006          & 83214              & 1335586             & 302220           & 12431             & 108365          & 13264         & 259613            & 126731          \\
\textbf{\#Features}       & 1703           & 1703               & 932            & 2089              & 500                & 1703             & 500             & 500                & 128                 & 300              & 3703              & 500             & 1433          & 767               & 745             \\
\textbf{\#Classes}        & 5              & 5                  & 5              & 5                 & 5                  & 5                & 7               & 70                 & 40                  & 10               & 6                 & 3               & 7             & 10                & 8               \\
\textbf{\#Train}          & 87             & 120                & 3648           & 2496              & 1092               & 87               & 446625          & 1395               & 90941               & 580              & 1596              & 9463            & 1192          & 200               & 160             \\
\textbf{\#Val}            & 59             & 80                 & 2432           & 1664              & 729                & 59               & 22312           & 2049               & 29799               & 1769             & 1065              & 6310            & 796           & 300               & 240             \\
\textbf{\#Test}           & 37             & 51                 & 1520           & 1041              & 456                & 37               & 22313           & 16349              & 48603               & 5487             & 666               & 3944            & 497           & 13252             & 7250           
\end{tabular}
}
\caption{Dataset Statistics.}
\label{tab:stats_table}
\end{table*}

\subsubsection{Baselines}
\label{app:baselines}
We provide the methods in comparison along with the hyper-parameters ranges for each model. For all the models, we sweep the common hyper-parameters in the same ranges. Learning rate is swept over $[0.001, 0.003, 0.005, 0.008, 0.01]$, dropout over $[0.2, 0.3, 0.4, 0.5, 0.6, 0.7, 0.8]$, weight decay over $[1e-4, 5e-4, 1e-3, 5e-3, 1e-2, 5e-2, 1e-1]$, and hidden dimensions over $[16, 32, 64]$. For model-specific hyper-parameters, we tune over author prescribed ranges. We use undirected graphs with symmetric normalization for all graph networks in comparison. For all models, we report test accuracy for the configuration that achieves the highest validation accuracy. We report standard deviation wherever applicable. 

\textbf{LR and MLP:} We trained Logistic Regression classifier and Multi Layer Perceptron on the given node features. For MLP, we limit the number of hidden layers to one. 

\textbf{\gcn:} We use the GCN implementation provided by the authors of 
\citet{gprgnn}. Note: We observe discrepancies in performance with various implementations of GCN. We make a note of this in Section~\ref{app:discrepnacies_gcn}. 

\textbf{\sgcn:} \sgcn~\citep{sgcn} is a spectral method that models a low pass filter and uses a linear classifier. The number of layers in \sgcn~is treated as a hyper-parameter and swept over $[1, 2]$.  

\textbf{\supergat:} \supergat~\citep{supergat} is an improved graph attention model designed to also work with noisy graphs. \supergat~ employs a link-prediction based self-supervised task to learn attention on edges. As suggested by the authors, on datasets with homophily levels lower than 0.2 we use \supergat\textsubscript{SD}. For other datasets, we use \supergat\textsubscript{MX}. We rely on authors code\footnote{\url{https://github.com/dongkwan-kim/SuperGAT}} for our experiments.

\textbf{\geomgcn:} \geomgcn~\citep{geomgcn} proposes a geometric aggregation scheme that can capture structural information of nodes in neighborhoods and also capture long range dependencies. We quote author reported numbers for Geom-GCN. We could not run Geom-GCN on other benchmark datasets because of the unavailability of a pre-processing function that is not publicly available.

\textbf{\hhgcn:} \hhgcn~\citep{h2gcn} proposes an architecture, specially for heterophilic settings, that incorporates three design choices: i) ego and neighbor-embedding separation, higher-order neighborhoods, and combining intermediate representations.  We quote author reported numbers where available, and sweep over author prescribed hyper-parameters for reporting results on the rest datasets. We rely on author's code\footnote{\url{https://github.com/GemsLab/H2GCN}} for our experiments.

\textbf{\fagcn:} \fagcn~\citep{fagcn} adaptively aggregates different low-frequency and high-frequency signals from neighbors belonging to same and different classes to learn better node representations. We rely on author's code\footnote{\url{https://github.com/bdy9527/FAGCN}} for our experiments.

\textbf{\appnp:} \appnp~\citep{appnp} is an improved message propagation scheme derived from personalized PageRank. \appnp's addition of probability of teleporting back to root node permits it to use more propagation steps without oversmoothing. We use \gprgnn's~ implementation of \appnp~for our experiments.

\textbf{\lgcn:} \lgcn~\citep{lgcn} is a spectrally grounded GCN that adapts the entire eigen spectrum of the graph to obtain better node feature representations.    

\textbf{\gprgnn:} \gprgnn~\citep{gprgnn} adaptively learns weights to jointly optimize node representations and the level of information to be extracted from graph topology. We rely on author's code\footnote{\url{https://github.com/jianhao2016/GPRGNN}} for our experiments.

\textbf{\tdgnn:} \tdgnn~\citep{tdgnn} is a tree decomposition method which mitigates feature smoothening and disentangles neighbourhoods in different layers. We rely on author's code\footnote{\url{https://github.com/YuWVandy/TDGNN}} for our experiments.

\textbf{ARMA:} ARMA~\citep{armaconv} is a spectral method that uses $K$ stacks of $ARMA_1$ filters in order to create an $ARMA_K$ filter (an ARMA filter of order $K$). Since \citep{armaconv} do not specify a hyperparameter range in their work, following are the ranges we have followed: GCS stacks ($S$): $[1, 2, 3, 4, 5, 6, 7, 8, 9, 10]$, stacks’ depth($T$): $[1, 2, 3, 4, 5, 6, 7, 8, 9, 10]$. However we only select configurations such that the number of learnable parameters are less than or equal to those in PP-GNN. The input to the ARMAConv layer are the node features and the output is the number of classes. This output is then passed through a softmax layer. We use the implementation from the official PyTorch Geometric Library~\footnote{\url{https://pytorch-geometric.readthedocs.io/en/latest/\_modules/torch\_geometric/nn/conv/arma\_conv.html\#ARMAConv}}

\textbf{BernNet:} BernNet \citep{he2021bernnet} is a method that approximates any filter over the normalised Laplacian spectrum of a graph, by a $K^{th}$ Order Bernstein Polynomial Approximation. We use the model specific hyper-parameters prescribed by the authors of the paper. We vary the Propagation Layer Learning Rate as follows: $[0.001, 0.002, 0.01, 0.05]$. We also vary the Propagation Layer Dropout as follows: $[0.2, 0.3, 0.4, 0.5, 0.6, 0.7, 0.8]$. We rely on the authors code \footnote{\url{https://github.com/ivam-he/BernNet}} for our experiments.

\textbf{AdaGNN:} AdaGNN \citep{dong2021graph} is a method that captures the different importance's for varying frequency components for node representation learning. We use the model specific hyper-parameters prescribed by the authors of the paper. The  No. of Layers hyper-parameter is varied as follows: $[2, 4, 8, 16, 32, 128]$. We rely on the authors code\footnote{\url{https://github.com/yushundong/AdaGNN}} for our experiments.



\subsubsection{Comparison against Additional Baselines}
\label{app:additional_baselines}

We performed experiments over the same datasets using three new baselines: ARMA (\cite{armaconv}), BernNet (\cite{he2021bernnet}) and AdaGNN (\cite{dong2021graph}). Details about these baselines are given in Section~\ref{app:baselines}. The results are reported in Tables \ref{tab:result_table_heterophily_additional} and \ref{tab:result_table_homophily_additional}. The results are obtained after thorough optimization over the respective hyperparameters, the details of which are listed in~\ref{app:baselines}:

\begin{table*}[h]
\resizebox{\textwidth}{!}{%
\begin{tabular}{|c|cccccc|}
\hline
\textbf{Test Acc}& Texas & Wisconsin & Squirrel & Chameleon & Cornell & Flickr \\
\hline BernNET & \uline{83.24 (6.47)} & \uline{84.90 (4.53)} & 52.56 (1.69) & 62.02 (2.28) & \uline{80.27 (5.41)} & 52.35 \\
ARMA              & 79.46 (3.65)          & 82.75   (3.56)     & 47.37   (1.63)       & 60.24   (2.19)       & 80.27   (7.76)          & {\uline 53.79}       \\
AdaGNN            & 71.08 (8.55)          & 77.70   (4.91)     & \uline{53.50   (0.96)} & \uline{65.45   (1.17)} & 71.08   (8.36)          & 52.30       \\ 
GPR-GNN                       & 81.35 (5.32)           & 82.55 (6.23)           & 46.31 (2.46) & 62.59 (2.04) & 78.11 (6.55)          & 52.74                                   \\
\hline PP-GNN& \textbf{89.73} (4.90)& \textbf{88.24} (3.33)       & \textbf{56.86} (1.20)         & \textbf{67.74} (2.31)         & \textbf{82.43}   (4.27)& \textbf{54.44}        \\ \hline
\end{tabular}
}
\caption{Results on Heterophilic Datasets.}
\label{tab:result_table_heterophily_additional}
\end{table*}

\begin{table*}[h]
\resizebox{\textwidth}{!}{%
\begin{tabular}{|c|cccccccc|}\hline\textbf{Test Acc}& Cora-Full (PCA)      & OGBN-ArXiv  & Wiki-CS              & Citeseer             & Pubmed                & Cora               & Computer           & Photos               \\ \hline BernNET           & 60.77   (0.92) & 67.32       & \uline{79.75   (0.52)} & 77.01   (1.43)       & 89.03   (0.55)        & \uline{88.13 (1.41)} & \uline{83.69 (1.99)} & \uline{91.61   (0.51)} \\
ARMA              & 60.23   (1.21)       & \uline{69.49} & 78.94 (0.32)         & \uline{78.15   (0.74)} & 88.73   (0.52)        & 87.37 (1.14)       & 78.55 (2.62)       & 90.26   (0.48)       \\
AdaGNN            &59.57   (1.18)       & 69.44         & 77.87   (4.95)       & 74.94 (0.91)         & \uline{89.33   (0..57)} & 86.72 (1.29)       & 81.27 (2.10)       & 89.93   (1.22) \\
GPR-GNN                     & \uline{61.37 (0.96)}          & 68.44                                       & 79.68 (0.50)          & 76.84 (1.69)           & 89.08 (0.39)          & 87.77 (1.31)          & 82.38 (1.60)                        & 91.43 (0.89)
\\ \hline PP-GNN & \textbf{61.42} (0.79)         & \textbf{69.28}       & \textbf{80.04} (0.43)         & \textbf{78.25} (1.76)         & \textbf{89.71} (0.32)          & \textbf{89.52} (0.85)       & \textbf{85.23} (1.36)       & \textbf{92.89} (0.37)         \\ \hline
\end{tabular}
}
\caption{Results on Homophilic Datasets.}
\label{tab:result_table_homophily_additional}
\end{table*}

For AdaGNN and BernNet, we use the authors code and tune it over the hyperparameters as provided in the paper. For ARMA, we use the official PyTorch Geometric implementation (see~\ref{app:baselines}). As a sanity check, we also tested ARMA on the node classification datasets described in the paper and were able to reproduce similar numbers.

\subsubsection{Comparison against general FIR filters}
Instead of using a polynomial filter, we can use a general FIR filter (GFIR) which is described by the following equation:
\[Z = \sum_{k=0}^KS^kXH_k\]
where $S$ is the graph shift operator (which in our case is $\widetilde{A}$), $X$ is the node feature matrix and $H_k$'s are learnable filter matrices. One can see GCN, SGCN, GPR-GNN as special cases of this GFIR filter, which constrain the $H_k$ in different ways.

We first demonstrate that constraint on the GFIR filter is necessary for getting improvement in performance, particularly on heterophilic datasets. Towards this, we build two versions of GFIR: one with regularization (constrained), and the other without regularization (unconstrained). We ensure that that the number of trainable parameters in these models are comparable to those used in PP-GNN. We provide further details of the versions of the GFIR models below and report the results in Table \ref{tab:general_fir_filters} below: 

\begin{itemize}
    \item \textbf{Unconstrained Setting}: In this setting, we do not impose any regularization constraints such as dropout and L2 regularization.
    \item \textbf{Constrained Setting}: In this setting, we impose dropouts as well as L2 regularization on the GFIR model. Both dropouts and L2 regularization were applied on the $H_k$'s (the learnable filter matrices from the above equation).
\end{itemize}


We also compare PP-GNN (the proposed model) as well as GPR-GNN to the General FIR filter model (GFIR).\\

\begin{table*}[h]
\resizebox{\textwidth}{!}{%
\begin{tabular}{|c|ccccccc|}
\hline
\textbf{Train Acc /Test Acc} & \textbf{Computers} & \textbf{Chameleon} & \textbf{Citeseer} & \textbf{Cora} & \textbf{Squirrel} & \textbf{Texas} & \textbf{Wisconsin} \\ \hline
GFIR (Unconstrained)         & 78.39 (1.09)       & 51.71 (3.11)       & 75.83 (1.94)      & 87.93 (0.90)  & 36.50 (1.12)      & 73.24 (6.91)   & 77.84 (3.21)       \\
GFIR (Constrained)           & 79.57 (2.12)       & 61.27 (2.42)       & 76.24 (1.43)      & 87.46 (1.26)  & 41.12 (1.17)      & 74.59 (4.45)   & 79.41 (3.10)       \\
GPR-GNN                      & 82.38 (1.60)       & 62.59 (2.04)       & 76.84 (1.69)      & 87.77 (1.31)  & 46.31 (2.46)      & 81.35 (5.32)   & 82.55 (6.23)       \\ \hline
PP-GNN                       & \textbf{85.23} (1.36)       & \textbf{67.74} (2.31)       & \textbf{78.25} (1.76)      & \textbf{89.52} (0.85)  & \textbf{56.86} (1.20)      & \textbf{89.73} (4.90)   & \textbf{88.24} (3.33) \\ \hline
\end{tabular}
}
\caption{Comparing PP-GNN and GPR-GNN against the GFIR filter models.}
\label{tab:general_fir_filters}
\end{table*}

We can make the following observation from the results reported in Table~\ref{tab:general_fir_filters}:
\begin{itemize}
    \item Firstly, constrained GFIR performs better than the unconstrained version, with performance lifts of up to $\sim$10\%. This suggests that regularization is important for GFIR models.
    \item GPR-GNN outperforms the constrained GFIR version. It is to be noted that GPR-GNN further restricts the space of graphs explored as compared to GFIR. This suggests that regularization beyond simple L2/dropout kind of regularization (polynomial filter) is beneficial.
    \item PP-GNN performs better than GPR-GNN. Our model slightly expands the space of graphs explored (as compared to GPR-GNN, but lesser than GFIR), while retaining good performance. This suggests that there is still room for improvement on how regularization is done.
\end{itemize}

PP-GNN has shown one possible way to constrain the space of graphs while improving performance on several datasets, however, it remains to be seen whether there are alternative methods that can do even better. We hope to study and analyze this aspect in the future.

\subsubsection{Implementation Details}
\label{app:implimentation}

In this subsection, we present several important points that are useful for practical implementation of our proposed method and other experiments related details. Our approach is based on adaptation of eigen graphs constructed using eigen components. Following~\cite{gcn}, we use a symmetric normalized version ($\tilde{\mathbf A}$) of adjacency matrix ${\mathbf A}$ with self-loops: $\tilde{\mathbf A} = {\tilde D}^{-\frac{1}{2}} ({\mathbf A} + {\mathbf I}) {\tilde D}^{-\frac{1}{2}}$ where ${\tilde D}_{ii} = 1 + D_{ii}$, $D_{ii} = \sum_{j} A_{ij}$ and ${\tilde D}_{ij} = 0, i \ne j$. We work with eigen matrix and eigen values of $\tilde {\mathbf A}$. 


To reduce the learnable hyper-parameters, we separately partition the low-end and high-end eigen values into several contiguous bins and use shared filter parameters for each of these bins. The number of bins, which can be interpreted as number of filters, is swept in the range $[2,3,4,5,10,20]$. The orders of the polynomial filters are swept in the range $[1,10]$ in steps of $1$. The number of EVD components are swept in the range $[32, 64, 128, 256, 512, 1024]$. In our experiments, we set $\eta_l = \eta_h$ and we vary the $\eta_l$ parameter in range $(0, 1)$ and $\eta_{gpr} = 1 - \eta_l$. 

For optimization, we use the Adam optimizer~\citep{adam}. We set early stopping to 200 and the maximum number of epochs to 1000. We utilize learning rate with decay, with decay factor set to 0.99 and decay frequency set to 50. All our experiments were performed on a machine with Intel Xeon 2.60GHz processor, 112GB Ram, Nvidia Tesla P-100 GPU with 16GB of memory, Python 3.6, and PyTorch 1.9.0 \citep{pytorch}. We used Optuna \citep{optuna} to optimize the hyperparameter search.

\subsubsection{Adaptable Frequency Responses}
\label{app:adap_freq_response}
In Figure~\ref{fig:PPGNN_learned_poly} of the main paper, we observe that \ourmodel~learns a complicated frequency response for a heterophilic dataset (Squirrel) and a simpler frequency response for a homophilic dataset (Citeseer). We observe that this trend follows for two other datasets Chameleon (heterophilic) and Computer (homophilic). See Figure~\ref{fig:adapt_freq_resp}.

\begin{figure}[ht]
    \centering
    \begin{subfigure}[t]{0.5\textwidth}
        \centering
        \includegraphics[height=1.5in, width=0.95\textwidth]{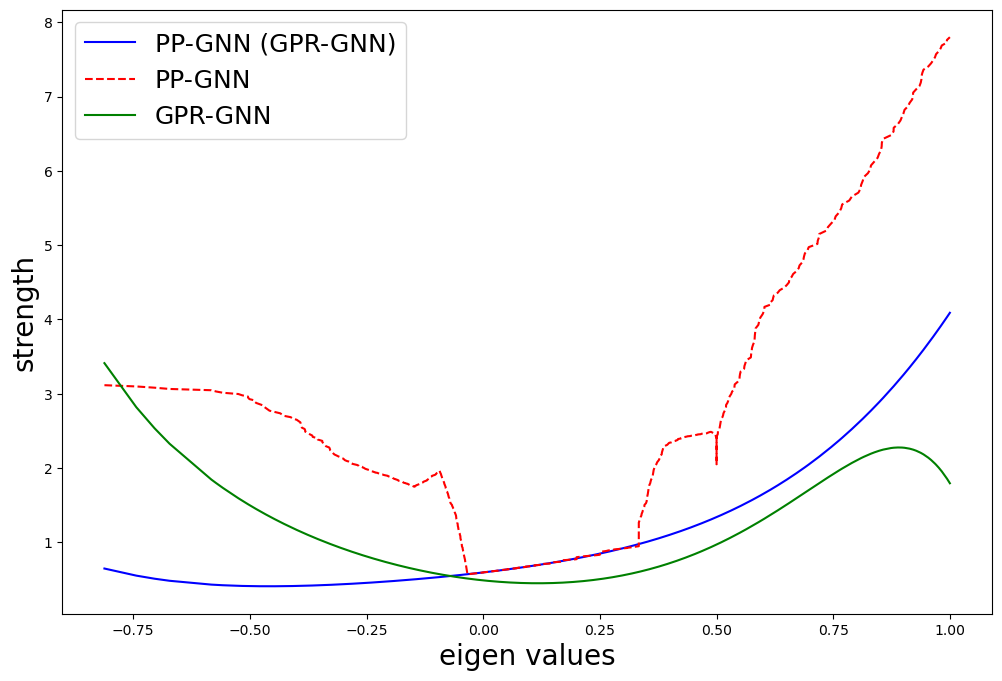}
        \caption{Chameleon}
        \label{fig:adapt_freq_cham}
    \end{subfigure}%
    ~ 
    \begin{subfigure}[t]{0.5\textwidth}
        \centering
        \includegraphics[height=1.5in, width=0.95\textwidth]{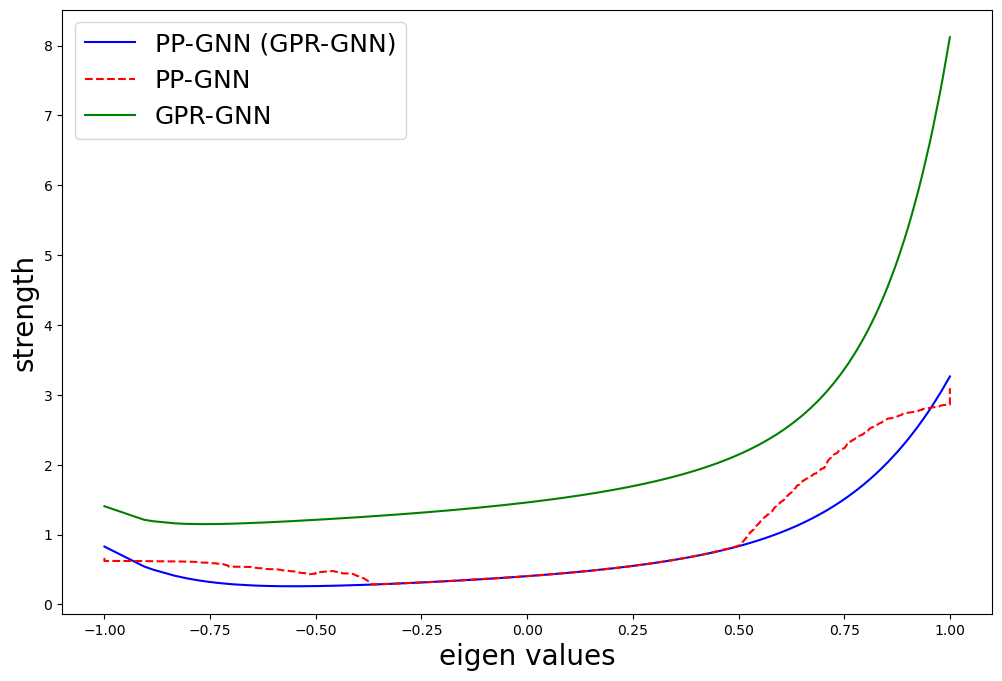}
        \caption{Computer}
        \label{fig:adapt_freq_comp}
    \end{subfigure}
    \caption{Learnt Frequency Responses}
    \label{fig:adapt_freq_resp}
\end{figure}

\subsubsection{Does the Polynomial Initialization matter for giving good performance?}
\label{app:poly_init_ablative}

In our experiments, we have varied the polynomial initialization schemes namely: [PPR, NPPR and Random], as described in GPRGNN paper~\cite{gprgnn}. To understand whether this polynomial initialization is important, we perform an ablative study to compare the various initialization schemes.  We observe that our model (PP-GNN) retains similar performance across all initialization schemes. For brevity, we report results on six datasets (Cora, Citeseer, Squirrel, Chameleon, Texas and Wisconsin). The results can be found in Table~\ref{tab:result_varying_init}. Our experiment results suggest that tuning polynomial initialization for each partition can indeed be avoided without compromising on the performance.\\ 

\begin{table*}[ht]
\resizebox{\textwidth}{!}{%
\begin{tabular}{|c|cccccc|}
\hline
\textbf{Test Acc}                  & Chameleon                                                        & Citeseer                                                       & Cora                                                           & Squirrel                                                        & Texas                                                            & Wisconsin                                                      \\ \hline
\textbf{PP-GNN (NPPR)}             & 66.73 (1.86)                                                     & 78.41 (1.54)                                                   & 89.52 (0.85)                                                   & 57.93 (1.46)                                                    & 88.38 (3.61)                                                     & 87.25 (2.96)                                                   \\
\textbf{PP-GNN (Random)}           & 68.71 (2.47)                                                     & 78.28 (1.70)                                                   & 89.42 (0.98)                                                   & 57.59 (1.56)                                                    & 88.38 (3.48)                                                     & 88.63 (2.58)                                                   \\
\textbf{PP-GNN (PPR)}              & 67.74 (2.31)                                                     & 78.25 (1.76)                                                   & 89.52 (0.85)                                                   & 56.86 (1.20)                                                    & 89.73 (4.90)                                                     & 88.24 (3.33)                                                   \\ \hline
\textbf{Best Performing Baselines} & \begin{tabular}[c]{@{}c@{}}65.45 (1.17)   \\ AdaGNN\end{tabular} & \begin{tabular}[c]{@{}c@{}}78.15 (0.74)   \\ ARMA\end{tabular} & \begin{tabular}[c]{@{}c@{}}88.26 (1.32)* \\ TDGNN\end{tabular} & \begin{tabular}[c]{@{}c@{}}53.50 (0.96)  \\ AdaGNN\end{tabular} & \begin{tabular}[c]{@{}c@{}}84.86 (6.77)* \\   H2GCN\end{tabular} & \begin{tabular}[c]{@{}c@{}}86.67 (4.69)* \\ H2GCN\end{tabular} \\ \hline
\end{tabular}
}
\caption{Results for PP-GNN using different polynomial (gamma) initializations versus the best performing baseline models. The names of the best performing baseline models are given right below the test accuracy, in the `Best Performing Baselines' row.}
\label{tab:result_varying_init}
\end{table*}

\subsubsection{Does the MLP even matter?}
\label{app:mlp_matters}

In PP-GNN there is a two layered MLP (that transforms the input node features) followed by a single graph filtering layer similar to GPR-GNN.

To understand MLP's significance, we ran an additional experiment, where we have used a single linear layer, instead of the two layered MLP. We continue to observe competitive (with respect to our original PP-GNN model) performance, across most datasets. The results can be found in Table~\ref{tab:linear_comparisions}. Also, the two layer MLP is not the significant contributor towards performance. This can also be seen by comparing GPR-GNN's performance with that of LGC's. LGC can be interpreted as a linear version of GPR-GNN, and achieves comparable performance as GPR-GNN.

\begin{table*}[ht]
\resizebox{\textwidth}{!}{%
\begin{tabular}{|l|l|l|l|l|l|l|l|}
\hline
& Computers & Chameleon & Citeseer & Cora & Squirrel & Texas & Wisconsin \\ \hline
GPR-GNN & 82.38 (1.60) & 62.59 (2.04) & 76.84 (1.69) & 87.77 (1.31) & 46.31 (2.46) & 81.35 (5.32) & 82.55 (6.23) \\ \hline
LGC & 83.44 (1.77) & 61.14 (2.07) & 76.96 (1.73) & 88.02 (1.44) & 44.26 (1.49) & 80.20 (4.28) & 81.89 (5.98) \\ \hline
PP-GNN (Original) & 85.23 (1.36) & 67.74 (2.31) & 78.25 (1.76) & 89.52 (0.85) & 56.86 (1.20) & 89.73 (4.90) & 88.24 (3.33) \\ \hline
PP-GNN (Linear) & 84.27 (1.19) & 67.88 (1.62) & 77.86 (1.74) & 88.43 (0.69) & 55.11 (1.72) & 85.58 (4.70) & 86.24 (3.23) \\ \hline
\end{tabular}
}
\caption{Comparision of Linear GPR-GNN and Linear PP-GNN with respect to other pertinent baselines.}
\label{tab:linear_comparisions}
\end{table*}

Note that PP-GNN (Linear) was not thoroughly trained as the PP-GNN (Original) due to time constraints. The hyperparameter ranges are explained right below the next paragraph.

 Table~\ref{tab:linear_comparisions} above seems to suggest that PP-GNN (Linear) is competitive to PP-GNN (Original). We believe the difference in performance is an artifact of inadequate tuning. We can infer that adding MLP may give marginal improvements over the linear version. This phenomenon is also observed in GPR-GNN. To illustrate this, we can compare GPR-GNN with LGC (linear version of GPR-GNN). We can observe in the above table that GPR-GNN and LGC are comparable in performance.
 
 For this additional set of experiments (Linear PP-GNN), due to time constraints, we have reduced the ranges for several hyper-parameters of our model, while also reducing the number of Optuna Trials to 50. These reduced ranges are as follows:

\begin{itemize}
    \item Learning Rate: [0.003, 0.005]
    \item Weight Decay : [0.0001, 0.001]
    \item Dropout: [0.3, 0.5]
    \item Hidden Dims: [32, 64]
    \item Order of the Polynomial: [2, 4]
    \item No. of Partitions (Buckets): [2, 4]
    \item Number of Eigen Value: [256, 1024]
    \item $\eta$: Sampled uniformly between [0, 1] using Optuna
\end{itemize}

Despite this smaller hyper-parameters search space, we are only shy by a few percentage points on the Texas dataset. We believe this difference can be recovered with thorough tuning.

\subsection{Training Time Analysis}
\label{app:timing_analysis}

In the following subsections, we provide comprehensive timing analysis.

\paragraph{Computational Complexity:}
 Listed below is the computational complexity for each piece in our model for a single forward pass. Notation $n$: number of nodes, $|E|$: the number of edges, $A$: symmetric normalized adjacency matrix, $F$: features dimensions, $d$: hidden layer dimension, $C$: number of classes, $e^{*}$ denotes the cost of EVD, $K$: polynomial order/hop order, $l$: number of eigenvalues/vectors in a single partition of spectrum (for implementation, we keep $l$ same for all such intervals), $m$: number of partitions of a spectrum.
\begin{itemize}
    \item MLP: $O(nFd + ndC)$
    \item GPR-term: $O(K|E|C)$ + $O(nKC)$. The first term is the cost for computing $A^Kf(X)$ for sparse $A$. The second term is the cost of summation $\sum_kA^kf(X)$.
    \item Excess terms for PP-GNN: $O(mnlC)$. This is obtained by the optimal matrix multiplication present in Equation \ref{eq:nodeemb} of the main paper ($\mathbf{U}_i$ is $n\times l$, $H_i(\gamma_i)$ is $l\times l$, $\mathbf{Z}_0()$ is $n\times C$). The additional factor $m$ is because we have $m$ different contiguous intervals/different polynomials. Note that typically $n$ is much larger than $l$.
    \item EVD-term: $e^{*}$, the complexity for obtaining the eigenvalues/vectors of the adjacency matrix, which is usually very sparse for the observed graphs. Most publicly available solvers for this task utilize Lanczos' algorithm (which is a specific case of a more general Arnoldi iteration). However, the convergence bound of this iterative procedure depends upon the starting vectors and the underlying spectrum (particularly the ratio of the absolute difference of two largest eigenvalues to the diameter of the spectrum) [\cite{eigbound1}, \cite{eigbound2}, \cite{eigbound3}]. Lanczos' algorithm is shown to be a practically efficient way for obtaining extreme eigenpairs for a similar and even very large systems. We use ARPACK's built-in implementation to precompute the eigenvalues/vectors for all datasets before training, thus amortizing this cost across training with different hyper-parameters configuration.\\
\end{itemize}

\paragraph{Per Component Timing Breakup:}

In Table~\ref{tab:amortized_cost_table}, we provide a breakdown of cost incurred in seconds for different components of our model. Since the eigenpairs' computation is a one time cost, we amortize this cost over the total hyper-parameters configurations and report the effective training time in the last column on of Table~\ref{tab:amortized_cost_table}.

\paragraph{Average Training Time:}

In Table~\ref{tab:training_time_models_table}, we report the training time averaged over 100 hyper-parameter configurations for several models. To understand the relative performance of our model with respect to GCN, we compute the relative time taken and report it in Table~\ref{tab:training_time_scaled_table}. We can observe in Table~\ref{tab:training_time_scaled_table} that PP-GNN is $\sim4x$ slower than GCN, $\sim2X$ slower than GPR-GNN and BernNET, and $\sim2X$ faster than AdaGNN. However, it is important to note that in our average training time, the time taken to compute $K$ top and bottom eigenvalues/vectors is amortized across the number of trials

\paragraph{Can we cut down the total training cost for PP-GNN?}

 To study this, we analyzed the chosen hyper-parameters for our 100 configurations experiments and prescribe below the new ranges for hyperparameters. 
 Reduced hyperparameters space:
\begin{itemize}
    \item Learning Rate: [0.003, 0.005]
    \item Weight Decay : [0.0001, 0.001]
    \item Dropout: [0.3, 0.5]
    \item Hidden Dims: [32, 64]
    \item Order of the Polynomial: [2, 4]
    \item No. of Partitions (Buckets): [2, 4]
    \item Number of Eigen Value: [256, 1024]
    \item $\eta$: Sampled uniformly between [0, 1] using Optuna
\end{itemize}
 We further restrict the total number of Optuna trials to 20. Note that as shown in Section~\ref{app:poly_init_ablative}, our model is insensitive to the polynomial initialization scheme. Hence we use `random' initialization, thereby reducing the number of hyperparameters even further. The total time taken by \ourmodel~(including the one-time cost for obtaining eigenpairs) for training (and performing hyperparameter optimization) over 20 hyper-parameter configurations is reported in Table \ref{tab:training_time_20_hyperparameter}. For comparison, we also show the total training time for the 100 configurations experiments in Table \ref{tab:training_time_100_hyperparameter} which too includes the time taken to obtain $\min(n,1024)$ eigenpairs. From Tables [\ref{tab:training_time_20_hyperparameter},\ref{tab:training_time_100_hyperparameter}] we can observe that the total training time has reduced by $\sim5x$. In Table~\ref{tab:twenty_trials_results}, we can observe that even with the reduced hyper-parameter range, our model performs competitively or better.

\begin{table*}[ht]
\resizebox{\textwidth}{!}{%
\begin{tabular}{|c|c|c|c|c|}
\hline
\textbf{PP-GNN}  & \textbf{Training Time} & \textbf{EVD Cost} & \textbf{Number of EV's obtained}& \textbf{Effective Training Time} \\ \hline
\textbf{Texas}     & 11.89   & 0.00747 & 183 (All EVs)          & 11.89             \\
\textbf{Cornell}   & 11.63                  & 0.03271                                                       & 183 (All EVs) & 11.63             \\
\textbf{Wisconsin} & 12.08                  & 0.01225                                                       & 251 (All EVs) & 12.08             \\
\textbf{Chameleon} & 21.44                  & 3.71883                                                       & 2048 & 21.48             \\
\textbf{Squirrel}                                               & 31.38                  & 15.8152                                                       & 2048 & 31.54             \\
\textbf{Cora}                                                   & 22.46                  & 54.3684                                                       & 2048 & 23.01             \\
\textbf{Citeseer}                                               & 20.51                  & 56.9744                                                       & 2048 & 21.08             \\
\textbf{Cora-Full}                                              & 63.98                  & 155.304                                                       & 2048 & 65.53            \\
\textbf{Pubmed}                                                 & 52.54                  & 256.71                                                        & 2048 & 55.11             \\
\textbf{Computers}                                              & 28.63                  & 76.2738                                                       & 2048 & 29.39             \\
\textbf{Photo}                                                  & 19.3                   & 48.3683                                                       & 2048 & 19.78             \\
\textbf{Flickr}                                                 & 161.16                 & 304.114                                                       & 2048 & 164.20             \\
\textbf{ArXiv}                                                  & 189.94                 & 412.504                                                       & 1024 & 194.06             \\
\textbf{WikiCS}                                                 & 27.92                  & 65.4376                                                       & 2048 & 28.57             \\ \hline
\end{tabular}
} 
\caption{\textbf{PP-GNN's per component timing cost}. Training Time refers to the end to end training time (without eigen decomposition) averaged across 100 trials. EVD cost refers to the time taken to obtain \textbf{x} top and bottom eigenvalues. This \textbf{x} can be found in the `Number of EV's obtained' column. Since EVD is a one time cost, we average this cost over the total number of trials and add it to the training time. We refer to this cost as the Effective Training Time. }
\label{tab:amortized_cost_table}
\end{table*}

\begin{table*}[ht]
\resizebox{\textwidth}{!}{%
\begin{tabular}{|c|ccccccc|}
\hline
\textbf{Dataset} &  \textbf{GPR-GNN} & \textbf{PP-GNN} & \textbf{MLP} & \textbf{GCN} & \textbf{BernNet} & \textbf{ARMA} & \textbf{AdaGNN} \\ \hline
\textbf{Texas}     & 9.27                                                         & 11.89                                                       & 1.08                                                     & 3.46                                                     & 5.59                                                         & 6                                                         & 13.97                                                       \\
\textbf{Cornell}   & 9.41                                                         & 11.63                                                       & 1.06                                                     & 3.69                                                     & 5.37                                                         & 5.51                                                      & 12.56                                                       \\
\textbf{Wisconsin} & 9.67                                                         & 12.08                                                       & 1.07                                                     & 3.42                                                     & 5.69                                                         & 5.36                                                      & 13.57                                                       \\ 
\textbf{Chameleon} & 14.69                                                        & 21.48                                                       & 2.6                                                      & 6.42                                                     & 12.46                                                        & 7.84                                                      & 28.77                                                       \\
\textbf{Squirrel}                                               & 18.94                                                        & 31.54                                                       & 5.04                                                     & 7.52                                                     & 17.82                                                        & 28.87                                                     & 90.36                                                       \\
\textbf{Cora}                                                   & 12.9                                                         & 23.01                                                       & 1.95                                                     & 5.94                                                     & 12.25                                                        & 10.67                                                     & 22.15                                                       \\
\textbf{Citeseer}                                               & 10.62                                                        & 21.08                                                       & 3.72                                                     & 4.56                                                     & 9.52                                                         & 19.5                                                      & 35.34                                                       \\
\textbf{Cora-Full}                                              & 24.98                                                        & 65.53                                                       & 7.77                                                     & 8.01                                                     & 31.26                                                        & 40.21                                                     & 175.58                                                      \\
\textbf{Pubmed}                                                 & 14                                                           & 55.11                                                       & 6.21                                                     & 11.73                                                    & 12.64                                                        & 27.76                                                     & 162.01                                                      \\
\textbf{Computers}                                              & 7.67                                                         & 29.39                                                       & 2.24                                                     & 6.68                                                     & 7.48                                                         & 27.76                                                     & 118.43                                                      \\
\textbf{Photo}                                                  & 8.58                                                         & 19.78                                                       & 1.68                                                     & 5.1                                                      & 7.95                                                         & 14.34                                                     & 45.46                                                       \\
\textbf{Flickr}                                                 & 42.64            & 164.20          & 21           & 30.4         & 62.11            & 119.3                                                     & 178.7371                                                    \\
\textbf{ArXiv}                                                  & 118.35                                                       & 194.06                                                      & 78.9                                                     & 102.88                                                   & 693.92                                                       & 771.59                                                    & 307.84                                                      \\
\textbf{WikiCS}                                                 & 14.37                                                        & 28.57                                                       & 3.34                                                     & 10.8                                                     & 11.43                                                        & 30.79                                                     & 73.63 \\ \hline                                              
\end{tabular}
}
\caption{Training Time (in seconds) across Models}
\label{tab:training_time_models_table}
\end{table*}

\begin{table*}[ht]
\resizebox{\textwidth}{!}{%
\begin{tabular}{|c|ccccccc|}
\hline
\textbf{Dataset} & \textbf{GPR-GNN}     & \textbf{PP-GNN}      & \textbf{MLP}         & \textbf{GCN}         & \textbf{BernNet}     & \textbf{ARMA}        & \textbf{AdaGNN}      \\ \hline
\textbf{Texas}                                               & 2.68                 & 3.44                 & 0.31                 & 1.00                 & 1.62                 & 1.73                 & 4.04                 \\
\textbf{Cornell}                                             & 2.55                 & 3.15                 & 0.29                 & 1.00                 & 1.46                 & 1.49                 & 3.40                 \\
\textbf{Wisconsin}                                           & 2.83                 & 3.53                 & 0.31                 & 1.00                 & 1.66                 & 1.57                 & 3.97                 \\
\textbf{Chameleon}                                           & 2.29                 & 3.35                 & 0.40                 & 1.00                 & 1.94                 & 1.22                 & 4.48                 \\
\textbf{Squirrel}                                           & 2.52                 & 4.19                 & 0.67                 & 1.00                 & 2.37                 & 3.84                 & 12.02                \\
\textbf{Cora}                                                & 2.17                 & 3.87                 & 0.33                 & 1.00                 & 2.06                 & 1.80                 & 3.73                 \\
\textbf{Citeseer}                                            & 2.33                 & 4.62                 & 0.82                 & 1.00                 & 2.09                 & 4.28                 & 7.75                 \\
\textbf{Cora-Full}                                        & 3.12                 & 8.18                 & 0.97                 & 1.00                 & 3.90                 & 5.02                 & 21.92                \\
\textbf{Pubmed}                                             & 1.19                 & 4.70                 & 0.53                 & 1.00                 & 1.08                 & 2.37                 & 13.81                \\
\textbf{Computers}                                           & 1.15                 & 4.40                 & 0.34                 & 1.00                 & 1.12                 & 4.16                 & 17.73                \\
\textbf{Photo}                                              & 1.68                 & 3.88                 & 0.33                 & 1.00                 & 1.56                 & 2.81                 & 8.91                 \\
\textbf{Flickr}                                              & 1.40                 & 5.40                 & 0.69                 & 1.00                 & 2.04                 & 3.92                 & 5.88                 \\
\textbf{ArXiv}                                               & 1.15                 & 1.89                 & 0.77                 & 1.00                 & 6.74                 & 7.50                 & 2.99                 \\
\textbf{WikiCS}                                              & 1.33                 & 2.65                 & 0.31                 & 1.00                 & 1.06                 & 2.85                 & 6.82                 \\ \hline
\textbf{Average }
& \textbf{2.03}        & \textbf{4.09}        & \textbf{0.50}        & \textbf{1.00}        & \textbf{2.19}        & \textbf{3.18}        & \textbf{8.39}  \\ \hline     
\end{tabular}
}
\caption{Training Time of models relative to the training time of GCN}
\label{tab:training_time_scaled_table}
\end{table*}

\begin{table*}[h]
\resizebox{\textwidth}{!}{%
\begin{tabular}{|c|ccccccccccc|}
\hline
{\color[HTML]{333333} \textbf{Dataset}} & {\color[HTML]{333333} \textbf{Chameleon}} & {\color[HTML]{333333} \textbf{Citeseer}} & {\color[HTML]{333333} \textbf{Computers}} & {\color[HTML]{333333} \textbf{Cora}} & {\color[HTML]{333333} \textbf{Cora-Full}} & {\color[HTML]{333333} \textbf{Photo}} & {\color[HTML]{333333} \textbf{Pubmed}} & {\color[HTML]{333333} \textbf{Squirrel}} & {\color[HTML]{333333} \textbf{Texas}} & {\color[HTML]{333333} \textbf{Wisconsin}} & {\color[HTML]{333333} \textbf{OGBN-ArXiv}} \\ \hline
{\color[HTML]{333333} Time}       & {\color[HTML]{333333} 00:03:46}           & {\color[HTML]{333333} 00:10:17}          & {\color[HTML]{333333} 00:34:37}           & {\color[HTML]{333333} 00:05:24}      & {\color[HTML]{333333} 00:59:29}           & {\color[HTML]{333333} 00:10:31}       & {\color[HTML]{333333} 00:57:40}        & {\color[HTML]{333333} 00:10:38}          & {\color[HTML]{333333} 00:02:27}       & {\color[HTML]{333333} 00:02:33}           & {\color[HTML]{333333} 01:03:20} \\ \hline          
\end{tabular}
}
\caption{End to end training time (in HH:MM:SS) for optimizing over 20 hyperparameter configurations}
\label{tab:training_time_20_hyperparameter}
\end{table*}

\begin{table*}[h]
\resizebox{\textwidth}{!}{%
\begin{tabular}{|c|cccccccccccccc|}
\hline
\textbf{Dataset} & \textbf{Chameleon} & \textbf{Citeseer} & \textbf{Computers} & \textbf{Cora} & \textbf{Cora-Full} & \textbf{Cornell} & \textbf{Flickr} & \textbf{Photo} & \textbf{Pubmed} & \textbf{Squirrel} & \textbf{Texas} & \textbf{Wikics} & \textbf{Wisconsin} & \textbf{OGBN-ArXiv} \\ \hline
\textbf{Time}    & 01:23:24           & 01:21:26          & 01:51:41           & 01:30:31      & 04:03:35           & 00:44:00         & 03:39:57        & 01:15:25       & 03:24:29        & 01:59:20          & 00:44:03       & 00:41:04        & 00:46:58           & 05:06:5 \\ \hline          
\end{tabular}
}
\caption{End to end training time (HH:MM:SS) for optimizing over 100 hyperparameter configurations}
\label{tab:training_time_100_hyperparameter}
\end{table*}

\begin{table*}[h]
\resizebox{\textwidth}{!}{%
\begin{tabular}{|l|l|l|l|l|l|l|l|l|l|}
\hline
                                  & \textbf{Texas}                                                 & \textbf{Squirrel}                                              & \textbf{Chameleon}                                             & \textbf{Cora-Full (PCA)}                                    & \textbf{OGBN-ArXiv}                                  & \textbf{Citeseer}                                              & \textbf{Pubmed}                                                & \textbf{Cora}                                                 & \textbf{Photos (Amazon)}                                      \\ \hline
\textbf{PP-GNN}                   & 85.14 (2.30)                                                   & 59.15 (1.91)                                                   & 69.10 (1.37)                                                   & 60.93 (0.83)                                                & 69.1                                                 & 77.87 (1.93)                                                   & 89.43 (0.47)                                                   & 88.87 (0.90)                                                  & 92.18 (0.58)                                                  \\ \hline
\textbf{Best Performing Baseline} & \begin{tabular}[c]{@{}l@{}}84.86 (6.77)* \\ H2GCN\end{tabular} & \begin{tabular}[c]{@{}l@{}}53.50 (0.96) \\ AdaGNN\end{tabular} & \begin{tabular}[c]{@{}l@{}}65.45 (1.17) \\ AdaGNN\end{tabular} & \begin{tabular}[c]{@{}l@{}}61.84 (0.90) \\ LGC\end{tabular} & \begin{tabular}[c]{@{}l@{}}69.37 \\ GCN\end{tabular} & \begin{tabular}[c]{@{}l@{}}77.07 (1.64)* \\ H2GCN\end{tabular} & \begin{tabular}[c]{@{}l@{}}89.59 (0.33)* \\ H2GCN\end{tabular} & \begin{tabular}[c]{@{}l@{}}88.26 (1.32) \\ TDGNN\end{tabular} & \begin{tabular}[c]{@{}l@{}}92.54 (0.28) \\ TDGNN\end{tabular} \\ \hline
\end{tabular}
}
\caption{Comparison of PP-GNN trained on a reduced hyper-parameter space against the best performing baseline models. Note that the best performing baseline models are trained on the full (original hyper-parameter set) and also for 100 Optuna Trials. The names of the best performing baseline models are given below the value of the test accuracy, in the `Best Performing Baseline' row.}
\label{tab:twenty_trials_results}
\end{table*}

\subsection{Discrepancies in Performance between Different GCN Implementations}
\label{app:discrepnacies_gcn}

We tested three different implementations of GCN(s)~\citep{gcn} provided by 1) Thomas Kipf and Max Welling \footnote{\url{https://github.com/tkipf/gcn}} (In TensorFlow), 2) The implementation\footnote{\url{https://github.com/jianhao2016/GPRGNN}} of the authors of the GPR-GNN paper~\citep{gprgnn} (in PyTorch),  3) The code base \footnote{\url{https://github.com/snap-stanford/ogb/tree/master/examples/nodeproppred/arxiv}} publicly available at the OGBN-ArXiv leader-board \footnote{\url{https://ogb.stanford.edu/docs/leader_nodeprop/\#ogbn-arxiv}} (in PyTorch), on the OGBN-ArXiv dataset.

Note that the authors of the leader-board code report a test accuracy of $71.74\%$ on the OGBN-ArXiV dataset. Extensive tuning  was done on the Kipf and Welling implementation in order to match the results of the leaderboard code. However, even after extensive hyper-parameter tuning, the test accuracy increased to only $65.53\%$, which is still much lesser.   

Two other implementations that we tested were: a) The GCN implementation from the authors of the GPR-GNN paper~\citep{gprgnn} (in PyTorch) b) The leader-board implementation (in PyTorch). The results for all the three implementations on the OGBN-ArXiv Dataset can be found in Table~\ref{tab:results_ogbn_arxiv}. \\

\begin{table*}[ht]
\resizebox{\textwidth}{!}{%
\begin{tabular}{|c|c|c|c|}
\hline
\textbf{OGBN-ArXiv}             & Framework  & Batch Norm & Test Accuracy \\ \hline
\textbf{GCN (Kipf and Welling)} & TensorFlow & True       & 65.84         \\
\textbf{GCN (Kipf and Welling)} & TensorFlow & False      & 65.53         \\ \hline
\textbf{GCN (Leaderboard Code)} & PyTorch    & True       & 71.88         \\
\textbf{GCN (Leaderboard Code)} & PyTorch    & False      & 70.23         \\ \hline
\textbf{GCN (GPR-GNN)}          & PyTorch    & True       & 71.05         \\
\textbf{GCN (GPR-GNN)}          & PyTorch    & False      & 69.37         \\ \hline 
\end{tabular}
}
\caption{{GCN results with different implementations.}}
\label{tab:results_ogbn_arxiv}
\end{table*}

Besides different implementations of GCN(s), we find that batch normalization is done in the implementation used in the GCN leader-board code. We see that (in Table~\ref{tab:results_ogbn_arxiv}) the performance numbers vary depending on the implementation. Also, batch normalization helps to get nearly $2\%$ improvement for this dataset. Since we did not perform batch normalization in all our experiments we find a performance gap of $\sim2\%$ for this dataset in results reported in our paper (against the leader-board code).\\

We have also compared the GCN implementation from Kipf and Welling as well as the GCN implementation from the authors of the GPR-GNN across various datasets in Table~\ref{tab:diff_gcn_implementations}.

\begin{table*}[ht]
\resizebox{\textwidth}{!}{%
\begin{tabular}{|c|c|c|c|c|c|c|c|c|c|c|c|c|}
\hline
                                            & \textbf{Texas} & \textbf{Wisconsin} & \textbf{Squirrel} & \textbf{Chameleon} & \textbf{Cornell} & \textbf{Cora-Full (PCA)} & \textbf{Ogbn-arxiv} & \textbf{Citeseer} & \textbf{Pubmed} & \textbf{Cora} & \textbf{Computer} & \textbf{Photos} \\ \hline
\textbf{GCN (GPR-GNN, PyTorch)}             & 59.73 (4.89)   & 58.82 (4.89)       & 47.78 (2.13)      & 62.83 (1.52)       & 60.00 (4.90)     & 59.63 (0.86)             & 69.37               & 76.47 (1.34)      & 88.41 (0.46)    & 87.36 (0.91)  & 82.50 (1.23)      & 90.67 (0.68)    \\ \hline
\textbf{GCN (Kipf and Welling, TensorFlow)} & 61.62 (6.14)   & 53.53 (4.73)       & 46.04 (1.61)      & 61.43 (2.70)       & 62.97 (5.41)     & 45.44 (1.01)             & 63.48               & 76.47 (1.33)      & 87.86 (0.47)    & 86.27 (1.34)  & 78.16 (1.85)      & 86.38 (1.71)    \\ \hline
\end{tabular}
}
\caption{Comparing different GCN implementations. (GPR-GNN implementation vs Kipf and Welling Implementation)}
\label{tab:diff_gcn_implementations}
\end{table*}

We can see that both the TensorFlow and the PyTorch implementations lead to a similar performance for most of the datasets. However, there are some differences in performance for some datasets (Wisconsin, Cora-Full (PCA), OGBN-ArXiv, Computer and Photos). We have ensured that hyperparameters, preprocessing steps etc., are same across both these implementations. We believe that these differences can be attributed to the different internal workings of TensorFlow and PyTorch. 

A further study is required to understand where these gaps are coming from.

Note: In the main paper (in Tables ~[\ref{tab:result_table_heterophily}, \ref{tab:result_table_homophily}]), we report the results obtained on the GCN implementation by the authors of the GPR-GNN paper.

\end{document}